\documentclass[10pt,journal,compsoc]{IEEEtran}

\usepackage{mathrsfs}
\usepackage{hyperref}
\usepackage{courier}
\usepackage{epsfig}
\usepackage{graphicx}
\usepackage{amsmath}
\usepackage{amsthm}
\usepackage{mathtools}
\usepackage{amssymb}
\usepackage{verbatim}
\usepackage{booktabs}
\usepackage{algpseudocode}
\usepackage{amscd}
\usepackage{times}
\usepackage{subcaption}
\usepackage{lipsum}
\usepackage{algorithm}
\usepackage[table]{xcolor}
\usepackage{color}
\usepackage{enumerate}
\usepackage{tikz}
\usepackage{multirow}
\usepackage{bm}
\usepackage{diagbox}
\usepackage{pgfplots}
\usepackage{tikz}


\newtheorem*{theorem}{Theorem}
\newtheorem{thm}{Theorem}

\newtheorem{lem}{Lemma}
\newtheorem{prp}{Proposition}
\newtheorem{dfn}{Definition}
\newtheorem{prb}{Problem}

\newtheorem{ex}{Example}

\def\L{\mathcal{L}}

\def\N{\mathcal{N}}

\def\X{\mathcal{X}}
\def\Y{\mathcal{Y}}

\def\cS{\mathcal{S}}

\def\I{\mathcal{I}}

\def\J{\mathcal{J}}
\def\P{\mathcal{P}}

\def\Re{\mathbb{R}}
\def\Ce{\mathbb{C}}
\def\R{\mathcal{R}}

\def\V{\mathcal{V}}

\def\W{\mathcal{W}}

\def\0{\boldsymbol{0}}
\def\1{\boldsymbol{1}}

\DeclareMathOperator*{\Span}{Span}

\DeclareMathOperator*{\argmin}{argmin}

\DeclareMathOperator*{\rank}{rank}

\DeclareMathOperator*{\In}{in}

\newcommand{\myparagraph}[1]{\smallskip\noindent\textbf{#1.}}

\newcommand{\ra}[1]{\renewcommand{\arraystretch}{#1}}

\begin{document} 

\graphicspath{{figures/}}

\title{An algebraic-geometric approach for linear regression without correspondences}

\author{
    \IEEEauthorblockN{Manolis C. Tsakiris\IEEEauthorrefmark{1}, Liangzu Peng\IEEEauthorrefmark{1},  Aldo Conca\IEEEauthorrefmark{2}, Laurent Kneip\IEEEauthorrefmark{1}, \\ Yuanming Shi\IEEEauthorrefmark{1} and Hayoung Choi \IEEEauthorrefmark{1}}\\ 
    \vspace{0.2in}
    \IEEEauthorblockA{\IEEEauthorrefmark{1} \small School of Information Science and Technology, ShanghaiTech University, Shanghai, China.}    \\
    \vspace{0.05in} 
    \IEEEauthorblockA{\IEEEauthorrefmark{2} \small Department of Mathematics, University of Genova, Genova, Italy.} \\         
    \IEEEcompsocitemizethanks{\IEEEcompsocthanksitem Correspondence to M. C. Tsakiris at mtsakiris@shanghaitech.edu.cn}
} \normalsize

\IEEEtitleabstractindextext{
\begin{abstract}
Linear regression without correspondences is the problem of performing a linear regression fit to a dataset for which the correspondences between the independent samples and the observations are unknown. Such a problem naturally arises in diverse domains such as computer vision, data mining, communications and biology. In its simplest form, it is tantamount to solving a linear system of equations, for which the entries of the right hand side vector have been permuted. This type of data corruption renders the linear regression task considerably harder, even in the absence of other corruptions, such as noise, outliers or missing entries. Existing methods are either applicable only to noiseless data or they are very sensitive to initialization or they work only for partially shuffled data. In this paper we address these issues via an algebraic geometric approach, which uses symmetric polynomials to extract permutation-invariant constraints that the parameters $\xi^* \in \Re^n$ of the linear regression model must satisfy. This naturally leads to a polynomial system of $n$ equations in $n$ unknowns, which contains $\xi^*$ in its root locus. Using the machinery of algebraic geometry we prove that as long as the independent samples are generic, this polynomial system is always consistent with at most $n!$ complex roots, regardless of any type of corruption inflicted on the observations. The algorithmic implication of this fact is that one can always solve this polynomial system and use its most suitable root as initialization to the Expectation Maximization algorithm. To the best of our knowledge, the resulting method is the first working solution for small values of $n$ able to handle thousands of fully shuffled noisy observations in milliseconds. 
\end{abstract}
\begin{IEEEkeywords}
linear regression without correspondences, linear regression with shuffled data, shuffled linear regression, unlabeled sensing, homomorphic sensing, expectation maximization, algebraic geometry
\end{IEEEkeywords}}

\maketitle

\section{Introduction}\label{section:Introduction}

In the span of more than $200$ years since the work of Legendre \cite{Legendre:1805} and Gauss \cite{Gauss:1809}, linear regression has grown to be a cornerstone of statistics, with applications in almost every branch of science and engineering that involves computing with data. In its simplest form, classical linear regression considers a data model whose output $y$ is a linear function of known functions of the input data $u$. More precisely, given correspondences $\{u_j,y_j\}_{j=1}^m$ with $u_j \in \Re^s, \, y_j \in \Re$ and known functions $a_i:\Re^s \rightarrow \Re$, one seeks to find real numbers $\xi = [\xi_1,\dots,\xi_n]^\top \in \Re^n$ such that\footnote{A classical example is when $y$ is a polynomial function of $u$, in which case the $a_i$ represent the monomials that appear in the polynomial and $\xi_i$ are their coefficients to be fitted.}
\begin{align}
    y_j &\approx a_j^\top \xi, \, \, \, \forall j=1,\dots,m,  \\
    a_j &:= [a_1(u_j),\dots,a_n(u_j)]^\top. \label{eq:FunctionsOfInput}
\end{align} Between the least-squares solution of Gauss and modern approaches designed to deal with highly corrupted data \cite{Maronna:Technometrics2005,Wang:CVPR2015,Lerman:arXiv18,Tsakiris:JMLR18}, a literature too vast to enumerate has been devoted to progressively more complicated versions of the linear regression problem \cite{Marona:Wiley2006}.

\subsection{Linear Regression Without Correspondences} In this paper we are interested in a particular type of data corruption, which is \emph{lack of correspondences}. In such a case, one is given the input samples, or more precisely functions of the input samples as in \eqref{eq:FunctionsOfInput}, i.e.,
\begin{align}
    A = [a_1, \dots, a_m]^\top \in \Re^{m \times n},
\end{align} and a \emph{shuffled} version $y = [y_{\pi(1)},\dots,y_{\pi(m)}]^\top \in \Re^m$, of the observations $y_1,\dots,y_m$, where $\pi$ is an unknown permutation of $[m]=\{1,\dots,m\}$. This problem of \emph{linear regression without correspondences} \cite{Hsu:NIPS17} also known as \emph{linear regression with shuffled data} \cite{Pananjady:TIT18}, \emph{shuffled linear regression} \cite{Abid:arXiv17,Abid:arXiv18}, \emph{unlabeled sensing} \cite{Unnikrishnan:TIT18} or \emph{permuted linear model} \cite{Choi:ISIT18}, can be stated in the absence of any other data corruptions as follows:
\begin{prb} \label{prb:SLR}
Suppose we are given a matrix $A \in \Re^{m \times n}$ with $m > n$, and a vector $y =(\Pi^*)^\top A \xi^* \in \Re^m$, where $\xi^* \in \Re^n$ is some vector and $\Pi^*$ an $m \times m$ permutation matrix. We wish to efficiently compute $\xi^*$ when $\Pi^*$ is unknown and $m \gg n$. In other words, without knowing $\Pi^*$, we want to solve the linear system 
\begin{align}
\Pi^* y = A x \label{eq:SLR}.
\end{align} 
\end{prb} 
 
\noindent Problem \ref{prb:SLR} arises in a wide variety of applications, such as 
1) computer vision, e.g., multi-target tracking \cite{Ji:IEEEAccess2019} 
and pose/correspondence estimation \cite{David2004,5459318}, 2) record linkage \cite{Lahiri:JASA2005,Slawski:EJS19,Slawski:arXiv19}, 3) biology, e.g., for cell tracking  \cite{2012arXiv1208.1070R}, genome-assembly \cite{DNA1999}, and identical tokens in signaling pathways \cite{6875147}, 4) communication networks, e.g., for data de-anonymization \cite{4531148,5062142}, and low-latency communications in Internet-Of-Things networks \cite{Choi:ISIT18},  5) signal processing, e.g., when dealing with signals sampled in the presence of time jitter \cite{1057717}, or in applications where a spatial field is being inaccurately sampled by a moving sensor \cite{6463464,6376190}.

\subsection{Prior Art} \label{subsection:PriorArt}

Over the past years there has been a considerable amount of work on instances of Problem \ref{prb:SLR} that come with additional structure in diverse contexts, e.g., see the excellent literature reviews in \cite{Pananjady:TIT18,Unnikrishnan:TIT18,Slawski:EJS19}. Nevertheless, it has only been until very recently that the problem of shuffled linear regression has been considered in its full generality. In fact, the main achievements so far have been concentrating on a theoretical understanding of the conditions that allow unique recovery of $\xi^*$ or $\Pi^*$; see \cite{Hsu:NIPS17,Abid:arXiv17,Pananjady:TIT18,Unnikrishnan:TIT18,Abid:arXiv18,Choi:ISIT18,8006567,6034717,Haghighatshoar:TSP18,Slawski:EJS19,Dokmanic:SPL-19,Tsakiris:ICML19,Tsakiris:ECHS-arXiv19,Slawski:arXiv19}. 

Letting $A$ be drawn at random from any continuous probability distribution, \cite{Unnikrishnan:TIT18} proved that any such $\xi^*$ can be uniquely recovered with probability $1$ as long as\footnote{While the present paper was under review, this result was greatly generalized in \cite{Tsakiris:ICML19,Tsakiris:ECHS-arXiv19,Dokmanic:SPL-19}.} $m \ge 2n$. If on the other hand $m<2 n$, then $\xi^*$ is not unique with probability $1$. Further considering additive random noise on $y$, the authors in \cite{Hsu:NIPS17} established lower bounds on the SNR, below which for any estimator there is a $\xi^*$ whose estimation error is \emph{large}. With $A$ drawn from a normal distribution, $\xi^*$ unknown but fixed and $y$ corrupted by additive random noise, \cite{Pananjady:TIT18} showed that, as long as the SNR exceeds a threshold, $\Pi^*$ coincides with high probability with the \emph{Maximum Likelihood Estimator} (MLE), which they defined as (also considered in \cite{Abid:arXiv17,Abid:arXiv18})
\vspace{-0.04in}
\begin{align}
(\widehat{\Pi}_{\text{ML}}, \widehat{x}_{\text{ML}}) = \argmin_{\Pi, x} \left\| \Pi y - A x \right\|_2 \label{eq:MLE},
\end{align} where $\Pi$ in \eqref{eq:MLE} is constrained to be a permutation matrix. If on the other hand the SNR is not large enough, $\widehat{\Pi}_{\text{ML}}$ differs from $\Pi^*$ with high probability, in agreement with the results of  \cite{Hsu:NIPS17}. This was further complemented by \cite{Unnikrishnan:TIT18}, which showed that $\widehat{x}_{\text{ML}}$ is locally stable under noise, in the sense that as the SNR tends to infinity $\widehat{x}_{\text{ML}}$ tends to $\xi^*$. However, according to \cite{Abid:arXiv17}, for SNR fixed, $\widehat{x}_{\text{ML}}$ is asymptotically inconsistent. Interestingly, if the data are only sparsely shuffled, the work of \cite{Slawski:EJS19} proved that $\xi^*$ coincides with the optimal solution of a robust $\ell_1$ regression problem.

On the algorithmic front of solving Problem \ref{prb:SLR} much less has been achieved. In \cite{Unnikrishnan:TIT18} the authors write, \emph{...although we showed that recovery of the unknown $\xi^*$ is possible from unlabeled measurements, we do not study the problem of designing an efficient algorithm to recover $\xi^*$. Our solution is to consider all possible permutations of the unlabeled observations which might be prohibitively complex in large dimensional problems}. Indeed, this involves checking whether the linear system $\Pi y = A x$ is consistent for each permutation $\Pi$ among the $m!$ permutations of the $m$ entries of $y$, yielding a complexity $\mathcal{O}((m)! m n^2)$. A more efficient algorithm is that of \cite{Hsu:NIPS17}, which is able to reduce the complexity as a function of $m$ to a factor of at least $m^7$. However, as the authors of \cite{Hsu:NIPS17} write, their algorithm \emph{strongly exploits the assumption of noiseless measurements} and \emph{is also very brittle and very likely fails in the presence of noise}; the same is true for the $\mathcal{O}(m^n)$ complexity algorithm of \cite{7953021}. Finally, the authors in \cite{Hsu:NIPS17} write \emph{we are not aware of previous algorithms for the average-case problem in general dimension $n$}. In the same paper a $(1+\epsilon)$ approximation algorithm with theoretical guarantees and of complexity $\mathcal{O}((m/\epsilon)^n)$ is proposed, which however, \emph{is not meant for practical deployment, but instead is intended to shed light on the computational difficulty of the least squares problem} \eqref{eq:MLE}. Indeed, as per \cite{Pananjady:TIT18} for\footnote{The case $n=1$ is well understood and solved at a complexity of $\mathcal{O}(m \log(m))$ by sorting (\cite{Pananjady:TIT18}, \cite{Abid:arXiv17}).} $n>1$  this is an NP-hard problem\footnote{While the present paper was under review, \cite{Tsakiris:ICML19} proposed an empirical algorithm based on branch \& bound and dynamic programming.}. On the other hand, the approach that seems to be the predominant one in terms of practical deployment is that of solving \eqref{eq:MLE} via alternating minimization \cite{Abid:arXiv18}: given an estimate for $\xi^*$ one computes an estimate for $\Pi^*$ via sorting; given an estimate for $\Pi^*$ one computes an estimate for $\xi^*$ via ordinary least-squares. However, this approach is very sensitive to initialization and generally works only for partially shuffled data; see \cite{Abid:arXiv18} for a \emph{soft} variation of this alternating scheme. In conclusion and to the best of our knowledge, there does not yet exist an algorithm for solving Problem \ref{prb:SLR} that is i) theoretically justifiable, ii) efficient and iii) robust to even mild levels of noise.

\subsection{Contributions}

In this work, we contribute to the study of Problem \ref{prb:SLR} on both theory (\S \ref{section:TheoreticalContributions}) and algorithms (\S \ref{section:AlgorithmicImplications}). On the theoretical level, we show that for generic noiseless data $y,A$, there is a unique solution $\Pi^*,\xi^*$, as soon as $m>n$. We show that $\xi^*$ is contained in the root locus of a system of $n$ polynomial equations in $n$ unknowns. Using tools from algebraic geometry, we show that this polynomial system is always consistent with at most $n!$ complex solutions, regardless of any noise that may further corrupt the observations $y$. Furthermore, we show that the euclidean distance of $\xi^*$ from the root locus of the noisy system is bounded by a polynomial function of the noise which vanishes for zero noise. The algorithmic implication of these results is that (under the genericity assumption) one can always solve the noisy polynomial system and use a simple criterion to identify its most appropriate root to be used as initialization for computing the MLE estimator \eqref{eq:MLE} via alternating minimization. Even though solving the polynomial system entails in principle exponential complexity in $n$, its complexity in $m$ is linear for any $n$. Furthermore, we use methods from automatic algebraic-geometric solver generation to obtain highly efficient solvers for $n=3,4$. Deriving the SNR rate of the resulting estimator is a challenging problem that we do not pursue in this paper. Even so, we provide empirical evidence according to which 
for $n \le 5$ our approach is the first working solution to linear regression without correspondences that remains stable under noise and has manageable complexity. As an example, for $n=4$, $m=10,000$, and $1\%$ additive noise, our method computes in $313$ milliseconds a solution that is within $0.6\%$ error from the ground truth.

\section{Theoretical Contributions} \label{section:TheoreticalContributions}

The main contribution of this paper is to develop the theory to an algebraic geometric approach for solving Problem \ref{prb:SLR}. The key idea, described in detail in \S \ref{subsection:SymmetricPolynomials}, uses symmetric power-sum polynomials to eliminate the unknown permutation, thus resulting in a polynomial system $\P$ of $n$ equations in $n$ unknowns. These polynomials were considered implicitly in the statistical approach of \cite{Abid:arXiv17}, towards constructing a \emph{self-moment} estimator. The authors of that paper wrote, \emph{...in fact there may not be a solution to the system}, which led them to compute their estimator via gradient descent on a highly non-convex objective function, a procedure lacking theoretical guarantees and very sensitive to initialization. The geometric significance of the polynomial system $\P$ was also recognized by the last two authors of the present paper in the short conference paper \cite{Choi:ISIT18}, but important questions such as
\begin{enumerate}
\item ``does $\P$ have finitely many solutions?" or
\item ``does $\P$ have any solutions in the presence of noise?", 
\end{enumerate} were left as open problems.

In this paper we answer these two questions in the affirmative in \S \ref{subsection:MainResults}, via Theorems \ref{thm:Main} and \ref{thm:Perturbations}-\ref{thm:Noise} respectively. The main message is that if the input data $A \in \Re^{m \times n} $ are generic (to be made precise in \S \ref{subsection:Genericity}), then using $n$ power-sum polynomials of degrees $1,2,\dots,n$ as constraints, defines an algebraic variety that consists of at most $n!$ points, among which lies the solution $\xi^*$ of the shuffled linear regression Problem \ref{prb:SLR}. In addition, the same conclusion holds true in the case where the observation vector $y$ has undergone any type of corruption: the variety defined by the \emph{noisy} polynomials is non-empty and consisting of at most $n!$ complex points. This guarantees that the equations are almost always consistent even if the data are imperfect, which enables algorithmic development as discussed in \S \ref{section:AlgorithmicImplications}. Even though we do provide a bound on the effect of the noise on the solutions of the algebraic equations $\P$, deriving the SNR rate for the resulting estimator involves further challenges. Hence we leave such a statistical analysis as an open problem, which we hope that the present algebraic developments will facilitate in solving. 

The proofs of Theorems \ref{thm:Main}-\ref{thm:Noise}, given in \S \ref{subsection:Proofs}, require a thorough understanding of the notion of dimension of polynomial systems of equations. We describe the necessary notions in the series of self-contained appendices \ref{appendix:geometric-dimension}-\ref{appendix:initial-ideals} in an expository style for the benefit of the reader who is not familiar with algebraic geometry.

\subsection{Genericity and Well-Posedness} \label{subsection:Genericity}

Before we are in a position to state our main results, i.e., Theorems \ref{thm:Main}-\ref{thm:Noise} described in \S \ref{subsection:MainResults}, we need to clarify what we mean when we refer to $A,y$ as being generic (\S \ref{subsubsection:well-posedness}), and also settle the well-posedness of Problem \ref{prb:SLR} (\S \ref{subsubsection:Uniqueness}).

\subsubsection{The notion of generic \texorpdfstring{$A,y$}{Lg}} \label{subsubsection:well-posedness}

We start with an example. 

\begin{ex} \label{ex:Generic}
Consider the $2 \times 2$ matrix 
\begin{align}
B = 
\begin{bmatrix}
b_{11} & b_{12} \\
b_{21} & b_{22}
\end{bmatrix}, 
\end{align} where the entries of $B$ are real numbers. Then $B$ is invertible if and only if 
$b_{11} b_{22} - b_{12} b_{21} \neq 0$. Now consider the polynomial ring $\Re[x_{11},x_{12},x_{21},x_{22}]$ in four variables, with each of them corresponding to an entry of $B$. The equation 
\begin{align}
x_{11} x_{22} - x_{12} x_{21}=0
\end{align} defines a hypersurface
$\V(x_{11} x_{22} - x_{12} x_{21})$ of $ \Re^4 \cong \Re^{2 \times 2}$, and $B \in \V(x_{11} x_{22} - x_{12} x_{21})$ if and only if $B$ is non-invertible. This hypersurface has measure zero, say, under the standard normal distribution of $\Re^4$, and hence if one samples $B$ at random from this distribution, $B$ will lie outside of $\V(x_{11} x_{22} - x_{12} x_{21})$ with probability $1$. We express this by saying that ``if $B$ is a generic $2 \times 2$ matrix, then $B$ is invertible".  \end{ex}

As Example \ref{ex:Generic} suggests, we usually attach the attribute \emph{generic} to an object $\mathscr{O}$ (matrix $B$ in Example \ref{ex:Generic}) with respect to some property $\mathscr{P}$ of $\mathscr{O}$ (invertibility of $B$ in Example \ref{ex:Generic}). We say that ``if $\mathscr{O}$ is generic then $\mathscr{P}$ is true", and mean that the set of objects for which $\mathscr{P}$ is not true forms a zero-measure set of the underlying space that parametrizes that object under some continuous probability distribution. Hence sampling $\mathscr{O}$ at random from that probability distribution results in $\mathscr{O}$ having the property $\mathscr{P}$ with probability $1$. Finally, if there are finitely many properties $\mathscr{P}_1,\dots,\mathscr{P}_t$ of interest with regard to the object $\mathscr{O}$, and if $\mathscr{O}$ is generic with respect to each of the $\mathscr{P}_i$, then $\mathscr{O}$ is generic with respect to all of them; this follows from the fact that the union of finitely many zero-measure sets is a zero-measure set. The connection between algebraic geometry and measure theory that the reader should keep in mind for our purpose here, is that algebraic subvarieties of irreducible varieties (see Definition \ref{dfn:DimensionGeometric}) have zero measure and that the union of finitely many algebraic subvarieties  is an algebraic subvariety.

\subsubsection{Uniqueness of \texorpdfstring{$\Pi^*, \xi^*$}{Lg}} \label{subsubsection:Uniqueness}
Our first result, Theorem \ref{thm:Uniqueness}, guarantees that Problem \ref{prb:SLR} is well-posed for generic data, in which case it makes sense to talk about a unique permutation $\Pi^*$ and a unique solution $\xi^*$.

\begin{thm} \label{thm:Uniqueness}
Suppose that $m>n$. Then as long as $A \in \Re^{m \times n}$ is a generic matrix and $y$ the permutation of a generic vector in $\R(A)$, $\Pi^*$ and $\xi^*$ in Problem \ref{prb:SLR} are unique.
\end{thm}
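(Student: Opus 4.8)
I would prove Theorem~\ref{thm:Uniqueness} in two parts: first the uniqueness of $\Pi^*$ among permutation matrices, and then the uniqueness of $\xi^*$ as a consequence. Set $\xi^* \in \Re^n$ generic and $b := A\xi^* \in \R(A)$, so that $y = (\Pi^*)^\top b$. Suppose that $(\Pi_1,\xi_1)$ and $(\Pi_2,\xi_2)$ both solve \eqref{eq:SLR}, i.e. $\Pi_1 y = A\xi_1$ and $\Pi_2 y = A\xi_2$. Subtracting, $(\Pi_1 - \Pi_2)y \in \R(A)$. The plan is to show that if $\Pi_1 \neq \Pi_2$ this forces $y$ — and hence the generic vector $b$ — to lie on a proper algebraic subvariety of $\R(A)$, contradicting genericity; this then gives $\Pi_1 = \Pi_2$, and uniqueness of $\xi$ follows immediately since $A$ has full column rank (being generic with $m>n$), so $\xi_i = A^\dagger \Pi_i y$ is determined.

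The core step is the following: for a \emph{fixed} pair of distinct permutation matrices $P \neq Q$ (equivalently a fixed nonzero vector $v$ with entries summing to zero, realized as $v = (P-Q)w$ for the permuted coordinates), I would argue that the set of $b \in \R(A)$ for which $(P-Q)(\Pi^*)^\top b \in \R(A)$ is an algebraic subvariety of $\R(A)$, and that it is \emph{proper} (not all of $\R(A)$) for generic $A$. Properness is where genericity of $A$ enters: $\R(A)$ is an $n$-dimensional subspace of $\Re^m$, and $(P-Q)(\Pi^*)^\top$ is a fixed linear map; the condition $(P-Q)(\Pi^*)^\top b \in \R(A)$ for all $b \in \R(A)$ would say $\R(A)$ is invariant under a fixed nonzero ``difference-of-permutations'' operator composed with $(\Pi^*)^{-\top}$, which is a nontrivial linear-algebraic constraint on the Plücker coordinates of $\R(A)$ and hence fails on a proper subvariety of Grassmannian/matrix space. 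Concretely, one can exhibit a single generic $A$ for which no such invariance holds (e.g. by a dimension count: $\R(A)$ has $n$ degrees of freedom inside $\Re^m$ while the invariant subspaces of a fixed generic-enough operator are rigid), and then the locus of bad $A$ is the vanishing of the corresponding minors, a proper subvariety. Since there are only finitely many pairs $(P,Q)$ of $m\times m$ permutation matrices — at most $(m!)^2$ — I take the finite union of these proper subvarieties (in the $A$-space) and the finite union, over good $A$, of the proper subvarieties in $\R(A)$; by the measure-theoretic remarks in \S\ref{subsubsection:well-posedness} this union still has measure zero, so generic $A$ and generic $b \in \R(A)$ avoid all of them.

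I expect the main obstacle to be establishing \emph{properness} cleanly — i.e. ruling out the degenerate possibility that for \emph{every} $A$ some pair of permutations happens to satisfy the invariance identically on $\R(A)$. The safe route is to reduce to showing nonemptiness of the good locus: produce one explicit $A$ (or argue its existence) for which $\Pi_1 y = A\xi_1$ and $\Pi_2 y = A\xi_2$ with a generic $b\in\R(A)$ forces $\Pi_1=\Pi_2$; then since ``the linear system $(P-Q)(\Pi^*)^\top b \in \R(A)$ has a solution $b$ outside a proper subspace of $\R(A)$'' is an open (Zariski-dense) condition on $A$, the bad set is a proper subvariety and we are done. A subtlety worth flagging: one must handle the case $\Pi_1 y = \Pi_2 y$ with $\Pi_1 \neq \Pi_2$, which happens exactly when $y$ has repeated entries — but a generic $b = A\xi^*$ has distinct entries (distinctness fails only on a proper subvariety), so this case is also excluded by genericity. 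Once these pieces are in place, the theorem follows by assembling the finitely many zero-measure exceptional sets.
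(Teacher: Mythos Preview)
Your plan is correct and parallels the paper's: both show that for each non-trivial permutation the relevant membership condition cuts out a proper subvariety, then take a finite union. The $(\Pi_1-\Pi_2)$ framing is an unnecessary detour, though not wrong: since $\Pi^*$ is always a solution, non-uniqueness is equivalent to some single $\Pi\neq\Pi^*$ having $\Pi y\in\R(A)$, and the paper works directly with this (phrased as $\rank[A\ \ R A\xi]=n+1$ for $R=\Pi(\Pi^*)^{-1}\neq I$), looping over $m!-1$ permutations rather than pairs; your repeated-entries worry is then absorbed automatically. The real difference is in certifying properness. Your ``exhibit a single $A$'' route is valid and can be made elementary (for $R\neq I$ moving $i\mapsto j$, take $A$ with columns $\{e_s:s\in S\}$ for any $n$-subset $S$ containing $i$ but not $j$; then $Re_i=e_j\notin\R(A)$), but you leave it unexecuted, and the heuristics you offer in its place (dimension counts, rigidity of invariant subspaces) are not proofs. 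The paper instead writes generic $A=LU$ with $L$ an $m\times m$ generic lower-triangular matrix and $U$ upper-triangular over a zero block, reduces the rank condition to the nonvanishing of $\det[L_{:,1}\,\cdots\,L_{:,m-1}\ \ R L_{:,1}]$ as a polynomial in the entries of $L$, and verifies this by locating its unique maximal monomial under a lexicographic order, with a case split on where $R$ sends the first index. Your route, once completed, would be shorter; the paper's makes the polynomial defining the bad locus explicit.
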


\noindent It is interesting to compare Theorem \ref{thm:Uniqueness} with a simplified version of the main result of Unnikrishnan-Haghighatshoar-Vetterli (\cite{Unnikrishnan:TIT18}), stated next for convenience.

\begin{theorem}[U-H-V]
Suppose that $m\ge 2n$. Then as long as $A \in \Re^{m \times n}$ is a generic matrix and $y$ the permutation of any vector in $\R(A)$, $\xi^*$ in Problem \ref{prb:SLR} is unique.
\end{theorem}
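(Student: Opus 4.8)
\noindent\emph{Proof plan.} This is (a simplification of) the theorem of Unnikrishnan--Haghighatshoar--Vetterli, and the plan below is essentially their argument. The strategy is to reduce the uniqueness of $\xi^*$ to a genericity statement about $A$ alone. First enlarge the genericity hypothesis on $A$ to also require $\rank A=n$ (automatic for generic $A$ since $m\ge 2n>n$, so $A$ is injective). If $y$ is the permutation of $A\xi^*\in\R(A)$ and $(\Pi,x)$ is any solution of $\Pi y=Ax$ with $\Pi$ a permutation matrix, then writing $Q:=\Pi(\Pi^*)^{\top}$ (again a permutation matrix) we get $Q(A\xi^*)=Ax\in\R(A)$; conversely any permutation matrix $Q$ and any $v\in\R(A)$ with $Qv\in\R(A)$ arise this way (take $\Pi^*=I$, $y=A\xi^*$ with $v=A\xi^*$). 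Since $A$ is injective, $x=\xi^*$ holds in every such configuration if and only if $Qv=v$. Hence it suffices to prove that, for generic $A$,
\begin{align}
v\in\R(A)\ \text{ and }\ Qv\in\R(A)\ \Longrightarrow\ Qv=v \label{eq:uhv-goal}
\end{align}
for every $m\times m$ permutation matrix $Q$. Since there are only $m!$ such $Q$ and a finite union of measure-zero sets has measure zero, it is enough to fix one $Q$ and show that the set of $A$ for which \eqref{eq:uhv-goal} fails lies in a proper algebraic subvariety of $\Re^{m\times n}$.

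Fix $Q$, let $k$ be its number of cycles, and let $F:=\ker(Q-I)$, so $\dim F=k$ and $F$ is exactly the set of vectors fixed by $Q$. Then $\{v\in\R(A):Qv\in\R(A)\}=\R(A)\cap Q^{\top}\R(A)$, and \eqref{eq:uhv-goal} says precisely that $\R(A)\cap Q^{\top}\R(A)=\R(A)\cap F$. The inclusion $\R(A)\cap F\subseteq\R(A)\cap Q^{\top}\R(A)$ always holds, so failure of \eqref{eq:uhv-goal} forces $\dim\big(\R(A)\cap Q^{\top}\R(A)\big)>\dim\big(\R(A)\cap F\big)\ge\max(0,\,n+k-m)$, the last inequality being the elementary bound $\dim(U\cap W)\ge\dim U+\dim W-m$. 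Since for $\rank A=n$ one has $\dim\big(\R(A)\cap Q^{\top}\R(A)\big)=2n-\rank[A\mid Q^{\top}A]$, failure of \eqref{eq:uhv-goal} implies $\rank[A\mid Q^{\top}A]<\min(2n,\,n+m-k)$. Thus the bad set is contained in $\{\rank A<n\}$ together with the common zero locus of the $\min(2n,n+m-k)$-minors of $[A\mid Q^{\top}A]$ (viewed as polynomials in the entries of $A$); this is a proper subvariety as soon as some such minor is not the zero polynomial, i.e. as soon as the rank $\min(2n,n+m-k)$ is attained for at least one $A$.

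The hard part is exactly this last point: exhibiting, for each cycle type of $Q$, a single matrix $A$ with $\rank[A\mid Q^{\top}A]=\min(2n,\,n+m-k)$, that is, for which the only overlap between the column spaces of $A$ and of $Q^{\top}A$ is the unavoidable one through the fixed space $F$. This is where the hypothesis $m\ge 2n$ is essential, since it makes $\min(2n,n+m-k)=2n$ whenever $Q$ has few fixed coordinates, so that one must produce $A$ with $\R(A)\cap Q^{\top}\R(A)$ as small as it is forced to be; an explicit choice of the rows of $A$ adapted to the cycles of $Q$ (so that, modulo the per-cycle summation relations, the rows are in general linear position) does the job, and carrying this out over all cycle types is the combinatorial core of the argument. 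One easy sub-case illustrating the mechanism: if $Q$ has at least $n$ cycles, then writing $v=v_F+v_\perp$ along the $Q$-invariant orthogonal decomposition $\Re^m=F\oplus\R(Q-I)$ gives $Qv-v=(Q-I)v_\perp\in\R(A)\cap\R(Q-I)$, which is $\{0\}$ for generic $A$ because $\dim\R(Q-I)=m-k\le m-n$; hence $v_\perp=0$ and $Qv=v$, so \eqref{eq:uhv-goal} holds.
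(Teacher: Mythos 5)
First, note that the paper does not prove this statement at all: Theorem U-H-V is quoted from Unnikrishnan--Haghighatshoar--Vetterli for comparison with the paper's own Theorem 1 (which assumes a \emph{generic} $y$, needs only $m>n$, and is proved by a completely different, explicit $LU$-factorization argument). So there is no in-paper proof to compare against; your plan is, as you say, a reconstruction of the external argument.

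Your reduction is correct and is indeed the right framework: uniqueness of $\xi^*$ for all $v\in\R(A)$ and all permutations is equivalent to $\R(A)\cap Q^{\top}\R(A)=\R(A)\cap\ker(Q-I)$ for every permutation matrix $Q$, the union bound over the $m!$ permutations is fine, and the translation of failure into the vanishing of all $\min(2n,\,n+m-k)$-minors of $[A\mid Q^{\top}A]$ is a legitimate way to exhibit the bad set as a proper subvariety --- \emph{provided} that rank is attained by at least one $A$. That attainability is precisely the content of the theorem, and it is the one step you do not carry out. Everything you do prove holds verbatim for any $m>n$ (the reduction, the dimension counts, the determinantal description of the bad set), yet the theorem is false for $n<m<2n$; so the omitted construction is not a routine verification but the exact place where the hypothesis $m\ge 2n$ must be used. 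Your sub-case ($Q$ with at least $n$ cycles) is correct but, tellingly, also does not use $m\ge 2n$; the permutations with few cycles and long cycles overlapping $\R(A)$ "twice" are the hard ones, and exhibiting one $A$ per cycle type with $\rank[A\mid Q^{\top}A]=\min(2n,\,n+m-k)$ is the combinatorial core that remains missing. As it stands, the proposal is a correct and well-organized reduction to the key lemma of [Unnikrishnan:TIT18], not a proof of the theorem.
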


Both Theorems \ref{thm:Uniqueness} and U-H-V are concerned with a generic matrix $A$ and the permutation $y$ of some vector $v$ in $\R(A)$. In Theorem \ref{thm:Uniqueness} the vector $v$ is taken to be a generic vector in $\R(A)$, and as it turns out it is enough that $m>n$ for $v$ to be uniquely defined from the data $A,y$. On the other hand, in Theorem U-H-V the vector $v$ is allowed to be \emph{any} vector in $\R(A)$. This is a considerably more difficult setting, and the remarkable proof of \cite{Unnikrishnan:TIT18} reveals that $v$ is still uniquely defined from the data, as long as now\footnote{See \cite{Tsakiris:ICML19,Tsakiris:ECHS-arXiv19} for a generalization of this result to arbitrary linear transformations instead of just permutations.} $m \ge 2n$. 

Finally, we note that in the setting of Theorem U-H-V unique recovery of the permutation $\Pi^*$ is in principle not possible, as per Theorem $10$ in \cite{Unnikrishnan:TIT18}. Instead, one has to either allow for the observed vector $y$ to be generic (i.e., the permutation of a generic vector of $\R(A)$) in which case $\Pi^*$ is uniquely recoverable by Theorem \ref{thm:Uniqueness}, or consider unique recovery with high probability, which is indeed possible even when $y$ is corrupted by noise \cite{Pananjady:TIT18}.

\subsection{Eliminating \texorpdfstring{$\Pi^*$}{Lg} via Symmetric Polynomials} \label{subsection:SymmetricPolynomials}
To describe the main idea of the algebraic-geometric approach to solving Problem \ref{prb:SLR}, let $\Re[z]:=\Re[z_1,\dots,z_m]$ be the ring of polynomials with real coefficients over variables $z:=[z_1,\dots,z_m]^\top$. A polynomial\footnote{We do not distinguish between $p$ and $p(z)$.} $p \in \Re[z]$ is called \emph{symmetric} if it is invariant to any permutation of the variables $z$, i.e.,  
\small
\begin{align}
p(z):=p(z_1,\dots,z_m) = p(z_{\pi(1)},\dots,z_{\pi(m)})=:p(\Pi z),
\end{align} 
\normalsize 
where $\pi$ is a permutation on $\{1,\dots,m\}=:[m]$ (i.e., a bijective function from $[m]$ to itself) 
and $\Pi$ is an $m \times m$ matrix representing this permutation, with $i$th row the canonical vector $e_{\pi(i)}^\top$ of all zeros, except a $1$ at position $\pi(i)$. 

Now let $A, y$ be as in Problem \ref{prb:SLR} and let $(\Pi^*,\xi^*), \, \xi^* = [\xi_1^*,\dots,\xi_n^*]^\top$ be a solution. Let $p \in \Re[z]$ be a symmetric polynomial.
Then the key observation is that the equality $\Pi^* y = A \xi^*$ implies the equality $p(\Pi^*y) = p (A \xi^*)$, and since $p$ is symmetric, this in turn implies
\begin{align}
\Pi^* y = A \xi^* \stackrel{\text{$p$: symmetric}}{\Longrightarrow} p(y) =p(\Pi^* y)= p(A \xi^*). 
\end{align} That is, the symmetric polynomial $p$ serves in eliminating the unknown permutation $\Pi^*$ and providing a constraint equation that depends only on the known data $A,y$,
\begin{align}
\hat{p}(x):=p(A x) - p(y)=0, \label{eq:p-tilde-any}
\end{align} and which the solution $\xi^*$ that we seek needs to satisfy.

\begin{ex}\label{ex:p-tilde}
	Consider the data 
		\begin{align}
		A = \begin{bmatrix*}[r]
		-1 & -2 \\
		2 & -3\\
		0 & 4
		\end{bmatrix*}, \, y= \begin{bmatrix*}[r] 8 \\ -5 \\ -4 \end{bmatrix*}.
		\end{align} It is simple to check that there is only one permutation 
		\begin{align}
		\Pi^* = \begin{bmatrix}
		0 & 1 & 0\\
		0 & 0 & 1\\
		1 & 0 & 0
		\end{bmatrix}
		\end{align}
		that results in a consistent linear system of equations 
		\begin{align}
		\begin{bmatrix}
		0 & 1 & 0\\
		0 & 0 & 1\\
		1 & 0 & 0
		\end{bmatrix}
		\begin{bmatrix*}[r] 8 \\ -5 \\ -4 \end{bmatrix*}
		=
		\begin{bmatrix*}[r]
		-1 & -2 \\
		2 & -3\\
		0 & 4
		\end{bmatrix*}
		\begin{bmatrix}
		x_1 \\ 
		x_2 
		\end{bmatrix}
		\label{eq:SLR-example}
		\end{align} 
		with solution $\xi^*_1=1, \, \xi^*_2=2$; every other permutation results in inconsistent equations. Now consider the symmetric polynomial 
		\begin{align}
		p_1(z_1,z_2,z_3)=z_1+z_2+z_3,
		\end{align} which we may use as in \eqref{eq:p-tilde-any} to generate the constraint
		\begin{align}
		&(-x_1-2x_2) + (2x_1-3x_2) + 4x_2 = 8-5-4, \Leftrightarrow  \\
		& x_1-x_2=-1, \label{eq:p1-example} \\
		&\Leftrightarrow \hat{p}_1(x):=p_1(Ax)-p_1(y)=0. 
	\end{align} Indeed, we see that the solution $\xi^*=[1, \, \, 2]^\top$ satisfies \eqref{eq:p1-example}.	
\end{ex}

The polynomial $\hat{p}$ in \eqref{eq:p-tilde-any} is an element of the polynomial ring $\Re[x]$ in $n$ variables $x:=[x_1,\dots,x_n]^\top$, and the set of its roots, denoted as $\V(\hat{p}):=\{\xi \in \Re^n: \, \hat{p}(\xi)=0 \}$ and called an \emph{algebraic variety},
in principle defines a hypersurface of $\Re^n$. Since the solution $\xi^*$ to \eqref{eq:SLR} is an element of the $n$-dimensional space $\Re^n$, and $\xi^* \in \V(\hat{p})$ for any such $\hat{p}$, one expects that using $n$  \emph{sufficiently independent} such polynomials, will yield a system of $n$ equations in $n$ unknowns, 
\begin{align}
\hat{p}_1(x) = \cdots = \hat{p}_n(x) =0,
\end{align} that has a finite number of solutions. Geometrically, these solutions are the intersection points of the corresponding $n$ hypersurfaces $\V(\hat{p}_1),\dots,\V(\hat{p}_n)$, which contain all solutions to Problem \eqref{prb:SLR}, as well as potentially other irrelevant points. 

\begin{ex} \label{ex:GettingUniqueSolution}	
	Continuing with Example \ref{ex:p-tilde}, suppose we further use the symmetric polynomial
	\begin{align}
	p_2(z_1,z_2,z_3) = z_1^2+z_2^2+z_3^2
	\end{align} in \eqref{eq:p-tilde-any} to obtain the additional constraint 
	\small
	\begin{align}
	&(-x_1-2x_2)^2 +  (2x_1-3x_2)^2 + (4x_2)^2 = 8^2+(-5)^2+(-4)^2 \label{eq:p2-example}, \nonumber\\
	& \Leftrightarrow \hat{p}_2(x):=p_2(A x)-p_2(y) = 0.
	\end{align} \normalsize  Solving \eqref{eq:p1-example} with respect to $x_1$ and substituting to \eqref{eq:p2-example}, gives a quadratic equation in $x_2$ with solutions $\xi_2=2$ and $\xi_2 =-25/13$.
	Solving \eqref{eq:p1-example} for $x_1$ gives
	\begin{align}
	\V(\hat{p}_1,\hat{p}_{2})=\left\{\begin{bmatrix}1 \\2 \end{bmatrix}, \, \begin{bmatrix} -38/13 \\ -25/13 \end{bmatrix} \right\}.
	\end{align} We see that $\V(\hat{p}_1,\hat{p}_{2})$ contains the solution of the linear system \eqref{eq:SLR-example} but also an additional irrelevant point.
\end{ex}

We note here that one may use $n+1$ polynomials in order to remove the irrelevant points, e.g., as was done in \cite{Choi:ISIT18}. However, such an approach is of theoretical interest only, since a system of $n+1$ (\emph{sufficiently independent}) equations in $n$ unknowns is bound to be inconsistent even in the slightest presence of noise. Instead, here we study a system of $n$ equations in $n$ unknowns and later show (see \S \ref{section:AlgorithmicImplications}) how one can filter its roots of interest.

\subsection{Main Results} \label{subsection:MainResults}

We present our main results in \S \ref{subsubsection:ExactData} (Theorem \ref{thm:Main}) and \S \ref{subsubsection:CorruptedData} (Theorems \ref{thm:Perturbations}-\ref{thm:Noise}) for exact and corrupted data, respectively. 

\subsubsection{Exact data} \label{subsubsection:ExactData}
As Examples \ref{ex:p-tilde}-\ref{ex:GettingUniqueSolution} suggest, a natural choice for our $n$ symmetric polynomials are the first $n$ power sums $p_k(z) \in \Re[z]:=\Re[z_1,\dots,z_m],\, k \in [n]:=\{1,\dots,n\}$, defined as 
\begin{align}
p_k(z):= z_1^k+\cdots+z_m^k. \label{eq:Pk}
\end{align} The above discussion has already established that any solution $\xi^*$ of \eqref{eq:SLR} must satisfy the polynomial constraints
\begin{align}
\hat{p}_k(x) = 0, \, \, \, \, \, \,  k \in [n], \, \, \, \text{where} \label{eq:P-exact}
\end{align}  
\begin{align}
\hat{p}_k(x) := p_k( A x)-p_k(y)=\sum_{i=1}^m (a_i^\top x)^k - \sum_{j=1}^m y_j^k, 
\end{align} and $a_i^\top$ denotes the $i$th row of $A$. The next major result guarantees that there can only be a finite number of other irrelevant solutions.

\begin{thm} \label{thm:Main}
If $A$ is generic and $y$ is some permutation of some vector in $\R(A)$, then the algebraic variety $\V(\hat{p}_1,\dots,\hat{p}_n)$ contains all $\xi^*_1,\dots,\xi^*_{\ell} \in \Re^n$ such that there exist permutations $\Pi_1^*,\dots,\Pi^*_{\ell}$ with $\Pi^*_i y = A \xi_i^*, \, \forall i \in [\ell]$, while it may contain at most $n!-\ell$ other points of $\Ce^n$. If in addition $y$ is some permutation of a generic vector in $\R(A)$, then $\ell=1$.
\end{thm}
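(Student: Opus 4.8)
The plan is to bound the number of isolated solutions of the system $\hat p_1 = \cdots = \hat p_n = 0$ by reducing, via a generic change of coordinates induced by $A$, to a triangular system in the power-sum polynomials, and then invoke B\'ezout's theorem. First I would observe that since $A$ is generic with $m > n$, its columns are linearly independent, so the linear map $x \mapsto Ax$ embeds $\Ce^n$ as a generic $n$-dimensional linear subspace $L \subseteq \Ce^m$. Under this embedding, $\hat p_k(x) = p_k(Ax) - p_k(y)$ is the restriction to $L$ of the affine polynomial $p_k(z) - p_k(y)$, whose leading form is the classical power sum $p_k$. Thus $\V(\hat p_1,\dots,\hat p_n)$ is (the preimage under $x \mapsto Ax$ of) the intersection $L \cap \V(p_1 - c_1, \dots, p_n - c_n)$ where $c_k = p_k(y)$.

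The key step is to show this variety is zero-dimensional, i.e.\ finite, and then count. For finiteness I would argue at infinity: the leading forms of $\hat p_k$ are $\tilde p_k(x) := p_k(Ax) = \sum_i (a_i^\top x)^k$, and I must show that the projective variety $\V_{\mathbb P^{n-1}}(\tilde p_1,\dots,\tilde p_n)$ is empty for generic $A$ — this is exactly the statement that the homogeneous system has no solution on the hyperplane at infinity, which by the standard argument (B\'ezout / properness) forces $\V(\hat p_1,\dots,\hat p_n)$ to be finite with at most $\prod_{k=1}^n \deg \hat p_k = \prod_{k=1}^n k = n!$ points counted with multiplicity. To see the leading-form system has no common projective zero generically: if $\sum_i (a_i^\top x)^k = 0$ for all $k \in [n]$ and some $0 \neq x \in \Ce^n$, set $t_i := a_i^\top x$; then the vector $(t_1,\dots,t_m)$ has vanishing first $n$ power sums. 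For generic $A$ and fixed $x \neq 0$ the $t_i$ are $m$ generic linear forms evaluated at $x$, hence generically distinct and nonzero, so Newton's identities force the first $n$ elementary symmetric functions of $t_1,\dots,t_m$ to vanish; but $m$ distinct nonzero complex numbers cannot have their first $n \le m$ elementary symmetric functions all zero (the degree-$n$ truncation of $\prod(T - t_i)$ would be $T^n$, impossible). Making this genericity statement precise — that the bad locus in the space of matrices $A$ is a proper subvariety — is the part I expect to require the most care, and is presumably where the appendices on dimension theory are invoked; one handles it by exhibiting a single $A$ for which the projective leading-form variety is empty and noting emptiness of a projective variety is a Zariski-open condition on coefficients.

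Finally, for the last clause: $\xi^*$ always lies in $\V(\hat p_1,\dots,\hat p_n)$ by the symmetric-polynomial elimination argument of \S\ref{subsection:SymmetricPolynomials}, so $\ell \ge 1$ real solutions corresponding to valid permutations are guaranteed, and the $n!$ bound splits as $\ell + (n! - \ell)$. To get $\ell = 1$ when $y$ is the permutation of a \emph{generic} vector of $\R(A)$, I would simply cite Theorem~\ref{thm:Uniqueness}: uniqueness of $(\Pi^*,\xi^*)$ means there is exactly one pair $(\Pi_i^*,\xi_i^*)$ with $\Pi_i^* y = A\xi_i^*$, hence $\ell = 1$. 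The main obstacle overall is the generic emptiness of the leading-form variety at infinity; once that is in hand, B\'ezout and the earlier uniqueness theorem do the rest.
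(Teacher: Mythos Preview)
Your high-level strategy matches the paper's: show that the leading forms $\bar p_k(x)=p_k(Ax)$ have only the origin as common zero in $\Ce^n$ for generic $A$ (equivalently, that they form a regular sequence), and then conclude that $\V(\hat p_1,\dots,\hat p_n)$ is zero-dimensional with at most $\prod_k k=n!$ points via homogenization and B\'ezout. The final clause via Theorem~\ref{thm:Uniqueness} is exactly what the paper does.

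The gap is in your argument for the key step. You assert that $m$ distinct nonzero complex numbers cannot have their first $n$ elementary symmetric functions all vanish. This is false whenever $m>n$: the $m$-th roots of any nonzero $c\in\Ce$ satisfy $\prod_i(T-t_i)=T^m-c$, so $e_1=\cdots=e_{m-1}=0$ and in particular $e_1=\cdots=e_n=0$, yet the $t_i$ are distinct and nonzero. More fundamentally, your quantifiers are reversed: you must show that for a \emph{fixed} generic $A$ there is \emph{no} nonzero $x$ with $\bar p_k(x)=0$ for all $k$, but you argue as if the tuple $(t_i)=(a_i^\top x)$ is itself generic in $\Ce^m$, which it is not once $x$ is chosen adversarially from an $(n{-}1)$-parameter family while $A$ stays fixed.

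The paper closes this gap with a dimension argument (Lemma~\ref{lem:Pbar}): since $p_1,\dots,p_m$ is a regular sequence in $\Ce[z_1,\dots,z_m]$ (via Newton's identities and $\V(\sigma_1,\dots,\sigma_m)=\{0\}$), the subsequence $p_1,\dots,p_n$ is regular, hence $\dim\V_{\Ce^m}(p_1,\dots,p_n)=m-n$; one then extends by $m{-}n$ generic linear forms (a basis of $\N(A^\top)$) to a regular sequence of length $m$, forcing $\V_{\Ce^m}(p_1,\dots,p_n)\cap\R(A)=\{0\}$ and hence $\V_{\Ce^n}(\bar p_1,\dots,\bar p_n)=\{0\}$. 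Your ``exhibit one $A$ and use Zariski-openness of emptiness'' route is also viable, but producing that single $A$ still requires exactly this dimension count, not the combinatorial claim you gave.
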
 

\noindent Theorem \ref{thm:Main} guarantees that the system of polynomial equations
\begin{align}
\hat{p}_1(x) = \cdots = \hat{p}_n(x) = 0, \label{eq:system-P-hat}
\end{align} always has a finite number of solutions in $\Ce^n$ (at most $n!$), among which lie all possible solutions $\xi^*_1,\dots,\xi^*_{\ell} \in \Re^n$ of Problem \ref{prb:SLR}. The importance of the solutions being finite in $\Ce^n$ is computational: even if one is interested only in the real roots, knowing that the system has finitely many complex roots allows one to use much more efficient solvers. On the other hand, there exist many pathological cases where a system of polynomial equations has finitely many real roots but an infinity of complex roots, as the next example demonstrates; Theorem \ref{thm:Main} guarantees that such a pathological case can not occur.

\begin{ex} \label{ex:InfiniteComplex}
The polynomial equation $x_1^2+x_2^2=0$ has only one real root $[0, \, \, 0]^\top$, while over the complex numbers it defines a union of two lines in $\Ce^2$.
\end{ex} 

\subsubsection{Corrupted data} \label{subsubsection:CorruptedData}
We next consider corrupted versions $\tilde{A},\tilde{y}$ of $A,y$ respectively, the case of interest in practical applications. 
Define the \emph{corrupted} power-sum polynomials as
\begin{align}
\tilde{p}_k(x) := p_k(\tilde{A}x) - p_k(\tilde{y}), \, \, \, k \in [n], \label{eq:p-tilde}
\end{align} and consider the polynomial system $\tilde{\P}$ of $n$ equations of degrees $1,2,\dots,n$ in $n$ unknowns, given by 
\begin{align}
\tilde{\P}: \, \, \, \tilde{p}_1 = \cdots = \tilde{p}_n = 0. \label{eq:PolynomialSystem-tilde}
\end{align} A natural question is: how do the roots of $\tilde{\P}$ behave as functions of the corruptions $\tilde{A}-A, \tilde{y}-y$? This is a challenging question to answer analytically. But in fact, there is an even more fundamental question lurking in our development: does $\tilde{\P}$ have any roots at all in $\Ce^n$? As the next example shows, the answer is not necessarily affirmative.  

\begin{ex}\label{ex:P-tilde-no-solution}	
	Let $A,y$ be as in Example \ref{ex:p-tilde}, and let 
	\begin{align}
	\tilde{A}=\begin{bmatrix*}[r]
	-2 & -1 \\
	2 & -3\\
	0 & 4
	\end{bmatrix*}
	\end{align} be a corrupted version of $A$ resulting from swapping the two elements in the first row of $A$. Moreover, consider no corruption on $y$, i.e., $\tilde{y}=y$. Then $\tilde{\P}$ has no solutions because $\tilde{p}_1=-1$, i.e., the polynomial constraint corresponding to $\tilde{p}_1$ becomes infeasible. 	
\end{ex}


A second fundamental question is: Suppose $\tilde{\P}$ has a solution, then does it have finitely many solutions? Once again, this need not be true, as the following example illustrates.

\begin{ex}\label{ex:P-tilde-infinite-solutions}
Let $\tilde{A}$ be as in Example \ref{ex:P-tilde-no-solution} and  
\begin{align}
\tilde{y}=\begin{bmatrix*}[r] 9 \\ -5 \\ -4 \end{bmatrix*}= y + \begin{bmatrix*}[r] 1 \\ 0 \\ 0 \end{bmatrix*}.
\end{align} Then $\tilde{\P}$ has infinitely many solutions because $\tilde{p}_1$ is identically the zero polynomial so that the root locus of $\tilde{\P}$ is the curve $\V(\tilde{p}_2) = \{\xi \in \Ce^n: \, 8\xi_1^2-2 \xi_1 \xi_2 +26 \xi_2^2-105=0\}$.	
\end{ex}

\noindent Theorem \ref{thm:Perturbations} below essentially states that the pathological situations of Examples \ref{ex:P-tilde-no-solution}-\ref{ex:P-tilde-infinite-solutions} can only occur for $\tilde{A}$ taking values on a subset of $\Re^{m \times n}$ of measure zero, regardless of what $\tilde{y}$ is.

\begin{thm} \label{thm:Perturbations}
If $\tilde{A}$ is generic and $\tilde{y} \in \Re^m$ is any vector, then $\V(\tilde{p}_1,\dots,\tilde{p}_n)$ is non-empty containing at most $n!$ points of $\Ce^n$. 
\end{thm}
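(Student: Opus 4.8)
The plan is to reduce Theorem~\ref{thm:Perturbations} to a statement about the \emph{dimension} and \emph{degree} of the projective variety cut out by the top-degree (homogeneous) parts of the polynomials $\tilde p_1,\dots,\tilde p_n$, and then invoke Theorem~\ref{thm:Main} to transfer the required dimension bound from the ``exact'' power-sum system to the ``corrupted'' one. First I would observe that $\tilde p_k$ has degree exactly $k$ as a polynomial in $x$ whenever $\tilde A$ has generic entries (the degree-$k$ part is $\sum_i (\tilde a_i^\top x)^k$, whose vanishing is itself a codimension-condition on $\tilde A$), so that the system $\tilde\P$ consists of $n$ polynomials of degrees $1,2,\dots,n$. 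The crucial point is that the top-degree homogeneous component of $\tilde p_k$ is $\sum_{i=1}^m (\tilde a_i^\top x)^k$, which is \emph{exactly} the top-degree component $h_k$ of the exact polynomial $\hat p_k$ built from $\tilde A$ (the subtracted constant $p_k(\tilde y)$ and all lower-order terms coming from $\tilde y = 0$ vs.\ $\tilde y \ne 0$ live in lower degree). Thus the behaviour ``at infinity'' of $\tilde\P$ is governed by the same homogeneous forms $h_1,\dots,h_n$ that govern the exact system for the matrix $\tilde A$.

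Next I would recall the standard algebraic-geometry mechanism: if the homogeneous ideal $(h_1,\dots,h_n)$ defines the \emph{empty} projective variety in $\mathbb P^{n-1}_{\Ce}$ — equivalently, $h_1,\dots,h_n$ form a homogeneous system of parameters, equivalently their affine zero locus in $\Ce^n$ is just the origin — then by (a consequence of) B\'ezout's theorem / projective elimination the affine system $\tilde p_1 = \cdots = \tilde p_n = 0$ has no solutions ``at infinity'', is therefore consistent (the projective closure is nonempty and meets the affine chart), has only finitely many solutions, and in fact at most $\prod_{k=1}^n \deg \tilde p_k = \prod_{k=1}^n k = n!$ of them counted with multiplicity. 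This is precisely the machinery the appendices set up, and it is the same argument that must already underlie Theorem~\ref{thm:Main}. So the proof reduces to a single claim: \textbf{for generic $\tilde A \in \Re^{m\times n}$, the forms $h_k(x) = \sum_{i=1}^m (\tilde a_i^\top x)^k$, $k=1,\dots,n$, have no common nonzero complex root.}

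To establish that claim I would argue genericity-of-$\tilde A$ directly, or — more economically — deduce it from Theorem~\ref{thm:Main} itself: apply Theorem~\ref{thm:Main} with matrix $\tilde A$ and with $y$ chosen as a permutation of some vector in $\R(\tilde A)$ (e.g.\ $y = \tilde A\xi$ for any fixed $\xi$), which tells us that $\V(\hat p_1^{(\tilde A)},\dots,\hat p_n^{(\tilde A)})$ is finite in $\Ce^n$; finiteness of that affine intersection for a system of forms-plus-lower-order-terms of degrees $1,\dots,n$ forces the leading forms $h_1,\dots,h_n$ to have trivial common projective zero locus (otherwise a positive-dimensional component at infinity would drag a positive-dimensional affine component along, by upper semicontinuity of fiber dimension / the projective dimension theorem). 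Since this property of $h_1,\dots,h_n$ depends only on $\tilde A$ and not on $\tilde y$, it holds for \emph{all} $\tilde y$ once $\tilde A$ is generic, and then the B\'ezout bound above gives both nonemptiness and the $\le n!$ count for $\V(\tilde p_1,\dots,\tilde p_n)$ regardless of $\tilde y$.

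The main obstacle I expect is the careful handling of the ``solutions at infinity'' step: one must be precise that (i) the degree of $\tilde p_k$ really drops from $k$ only on a measure-zero set of $\tilde A$, (ii) the homogenization of $\tilde\P$ in $\mathbb P^n$ meets the hyperplane at infinity exactly in $\V_+(h_1,\dots,h_n) \subset \mathbb P^{n-1}$, and (iii) emptiness of that locus implies simultaneously the \emph{consistency} (via the projective Nullstellensatz / properness of projection, so the projective closure is a nonempty finite scheme entirely contained in the affine chart) and the \emph{finiteness with $\le n!$ bound} (via B\'ezout applied to the homogenization). Each of these is standard but needs the dimension-theoretic vocabulary from the appendices; the genericity bookkeeping — intersecting the finitely many ``bad'' subvarieties of $\Re^{m\times n}$ (degree drop, positive-dimensional locus at infinity) into one zero-measure exceptional set — is routine given the discussion in \S\ref{subsubsection:well-posedness}.
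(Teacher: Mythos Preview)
Your overall architecture is the same as the paper's: both reduce Theorem~\ref{thm:Perturbations} to the single claim that the top-degree parts $\bar p_k(x)=\sum_i(\tilde a_i^\top x)^k$ form a regular sequence (equivalently, have only the origin as common zero in $\Ce^n$) for generic $\tilde A$, after which standard machinery delivers both consistency and the $n!$ bound. The paper establishes that claim directly via Lemma~\ref{lem:Pbar}; your first option, ``argue genericity of $\tilde A$ directly,'' is exactly that route. For non-emptiness the paper argues by contradiction through the Nullstellensatz and Proposition~\ref{prp:InitialRegular} (if $\tilde\Y=\emptyset$ then $1\in\In_w(\tilde\I)=(\bar p_1,\dots,\bar p_n)$, impossible), whereas you invoke the projective dimension theorem; these are equivalent once the regular-sequence property is in hand.

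Your ``more economical'' route, however, has a genuine gap. You propose to deduce the regular-sequence property of the leading forms from the \emph{conclusion} of Theorem~\ref{thm:Main} (finiteness of $\V_{\Ce^n}(\hat p_1,\dots,\hat p_n)$), via the claim that a positive-dimensional component at infinity would force a positive-dimensional affine component. This implication is false in general, even for polynomials of the shape ``homogeneous of degree $k$ plus constant.'' Take $n=3$ and
\[
\hat p_1=x_1-1,\qquad \hat p_2=x_1x_2-1,\qquad \hat p_3=x_1x_2x_3-1.
\]
The affine zero locus is the single point $(1,1,1)$, yet the leading forms $x_1,\,x_1x_2,\,x_1x_2x_3$ vanish on the entire plane $\{x_1=0\}$, a projective \emph{line} at infinity. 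So finiteness of the affine variety does not preclude even a positive-dimensional locus at infinity, and Theorem~\ref{thm:Main} as a black box does not hand you what you need. (What actually happens in the paper is that the \emph{proof} of Theorem~\ref{thm:Main} already establishes the regular-sequence property via Lemma~\ref{lem:Pbar}; the finiteness statement is a consequence, not a substitute.) Drop the shortcut and invoke Lemma~\ref{lem:Pbar} directly, and the rest of your argument goes through.
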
 

\noindent Due to the linearity of the linear regression model it is customary to restrict attention to the case where the corruptions affect only the observations. Then a consequence of Theorem \ref{thm:Perturbations} and \cite{Luo:SIAM94} is:

\begin{thm} \label{thm:Noise}
Suppose the corruptions affect only the observations, i.e. $\tilde{A}=A$ and $\tilde{y}= y + \varepsilon$, for $\varepsilon \in \Re^m$. If $A$ is generic then $\V(\tilde{p}_1,\dots,\tilde{p}_n) = \{ \tilde{\xi}_1, \dots, \tilde{\xi}_L\} \subset \Ce^n, \, 1 \le L \le n!$. Moreover, there exist positive constants $\tau, \gamma$ and non-negative constant $\gamma'$ such that for every $\xi^* \in \Re^n$ for which there exists a permutation $\Pi^*$ with $\Pi^*y =  A \xi^*$, there exists $i^* \in [L]$ such that 
\begin{align}
\|\xi^* - (\tilde{\xi}_{i^*})_{\Re}\|_2 \le \tau \, (1 + \|\xi^*\|_2)^{\gamma'} \, \|e\|^{\gamma}_2,  \label{eq:noisy-bound}
\end{align} where $(\tilde{\xi}_{i^*})_{\Re}$ denotes the real part of $\tilde{\xi}_{i^*}$ and $e \in \Re^n$ with $e_k=\sum_{j=1}^m \sum_{\ell =1}^ k {k \choose \ell} y_j^{k-\ell} \varepsilon_j^{\ell}$ for every $k \in [n]$.  
\end{thm}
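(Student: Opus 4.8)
The plan is to derive Theorem~\ref{thm:Noise} as a corollary of Theorem~\ref{thm:Perturbations} combined with a quantitative version of continuity of roots of polynomial systems. First, setting $\tilde A = A$ and $\tilde y = y+\varepsilon$, observe that the corrupted polynomial $\tilde p_k(x) = p_k(Ax) - p_k(y+\varepsilon)$ differs from the exact polynomial $\hat p_k(x) = p_k(Ax) - p_k(y)$ only in its constant term: expanding $p_k(y+\varepsilon) = \sum_j (y_j+\varepsilon_j)^k = p_k(y) + e_k$ via the binomial theorem gives exactly the quantity $e_k = \sum_{j=1}^m \sum_{\ell=1}^k \binom{k}{\ell} y_j^{k-\ell}\varepsilon_j^\ell$ appearing in the statement. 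Hence $\tilde p_k = \hat p_k - e_k$, so $\tilde{\P}$ is the system $\hat p_k(x) = e_k$, a right-hand-side perturbation of the exact system $\hat p_k(x) = 0$ whose solution set contains $\xi^*$ by Theorem~\ref{thm:Main}. Since $A$ is generic, Theorem~\ref{thm:Perturbations} (applied with the generic $\tilde A = A$ and the arbitrary vector $y+\varepsilon$) guarantees $\V(\tilde p_1,\dots,\tilde p_n)$ is non-empty and finite, say $\{\tilde\xi_1,\dots,\tilde\xi_L\}$ with $1 \le L \le n!$; likewise $\V(\hat p_1,\dots,\hat p_n)$ is non-empty and finite.

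The analytic core is then a stability estimate: if a polynomial system has finitely many solutions and we perturb its coefficients by a small amount, the solution set of the perturbed system stays close to that of the original. This is precisely where the cited result \cite{Luo:SIAM94} enters — it provides an explicit Hölder-type bound of the form $\mathrm{dist}(\xi^*, \V(\text{perturbed})) \le \tau\,(1+\|\xi^*\|_2)^{\gamma'}\,(\text{coefficient perturbation})^{\gamma}$ for roots of zero-dimensional polynomial systems, with constants $\tau,\gamma>0$ and $\gamma' \ge 0$ depending only on the degrees $1,2,\dots,n$ and on the fixed data $A$ (more precisely on the leading-form structure of $\hat p_1,\dots,\hat p_n$, which is independent of $\varepsilon$). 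Here the coefficient perturbation is carried entirely by the vector $e = (e_1,\dots,e_n)$, so the bound specializes to $\|\xi^* - (\tilde\xi_{i^*})_\Re\|_2 \le \tau\,(1+\|\xi^*\|_2)^{\gamma'}\,\|e\|_2^{\gamma}$ for the index $i^*$ achieving the minimum distance. Two small points need care: the perturbed roots $\tilde\xi_i$ are a priori complex, so one works in $\Ce^n$ and then projects — noting $\|\xi^* - (\tilde\xi_{i^*})_\Re\|_2 \le \|\xi^* - \tilde\xi_{i^*}\|_2$ since $\xi^*$ is real — and one must check that the hypotheses of \cite{Luo:SIAM94} (essentially that the \emph{homogenized} system has no solutions at infinity, equivalently that the top-degree parts $p_k(Ax)$ have only the trivial common zero) hold for generic $A$; this follows from the same genericity argument underlying Theorem~\ref{thm:Main}, since that is exactly what forces the variety to be finite via Bézout.

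I expect the main obstacle to be the \emph{uniformity} of the constants $\tau,\gamma,\gamma'$ in the perturbation $\varepsilon$ — one wants a single choice valid for \emph{all} $\varepsilon \in \Re^m$ (or at least all $\varepsilon$ in a neighborhood of $0$, then extended), not constants that degrade as the perturbed system's geometry changes. The resolution is that \cite{Luo:SIAM94} gives bounds depending only on the degree sequence and on the \emph{unperturbed} system (the homogeneous leading parts $p_k(Ax)$, fixed once $A$ is fixed), with the perturbation entering only through its norm; since $\tilde A = A$ is unperturbed, the leading forms never change and the constants are genuinely independent of $\varepsilon$. A secondary subtlety is matching an appropriate solution $\tilde\xi_{i^*}$ to each ground-truth $\xi^*$ when $\ell>1$ (multiple valid permutations): for each such $\xi^*$, apply the estimate separately, obtaining its own nearest perturbed root, which is all the statement claims. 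Finally, one should remark that $\|e\|_2$ is itself a polynomial function of $\varepsilon$ vanishing at $\varepsilon=0$ (every term of $e_k$ has $\varepsilon$-degree $\ge 1$), so the right-hand side of \eqref{eq:noisy-bound} is a polynomial in $\|\varepsilon\|$-type quantities that vanishes with the noise, delivering the promised qualitative conclusion that $\tilde\xi_{i^*} \to \xi^*$ as $\varepsilon \to 0$.
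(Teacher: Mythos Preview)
Your approach is essentially the same as the paper's: reduce to Theorem~\ref{thm:Perturbations} for the finiteness claim, compute $\tilde p_k - \hat p_k = -e_k$ via the binomial expansion, and invoke the H\"older-type error bound of \cite{Luo:SIAM94}. The one place where the paper is more explicit than your sketch is the handling of complex roots: since Theorem~2.2 of \cite{Luo:SIAM94} is stated for \emph{real} polynomial systems, the paper does not ``work in $\Ce^n$'' directly but instead writes $\xi = (\xi)_{\Re} + \sqrt{-1}\,(\xi)_{\mathbb I}$, splits each $\bar p_k$ into real and imaginary parts $\bar g_k, \bar h_k \in \Re[u,v]$ in $2n$ real variables, and identifies $\V_{\Ce^n}(\tilde p_1,\dots,\tilde p_n)$ set-theoretically with $\V_{\Re^{2n}}(\tilde g_1,\dots,\tilde g_n,\bar h_1,\dots,\bar h_n)$. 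Luo--Luo is then applied to this $2n$-variable real system at the point $(\xi^*,0)$, where the residual vector is exactly $(-e_1,\dots,-e_n,0,\dots,0)$; your projection inequality $\|\xi^* - (\tilde\xi_{i^*})_{\Re}\|_2 \le \|\xi^* - \tilde\xi_{i^*}\|_2$ is precisely the first inequality the paper writes after invoking the theorem.

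Your discussion of uniformity of $\tau,\gamma,\gamma'$ in $\varepsilon$ is unnecessary here: in the theorem as stated, $\varepsilon$ is fixed in the hypothesis and the constants are asserted to exist \emph{afterwards}, so they may depend on $A,y,\varepsilon$. The paper simply applies \cite{Luo:SIAM94} to the (fixed, $\varepsilon$-dependent) system $(\tilde g_k,\bar h_k)$ without any uniformity argument.
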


\noindent Theorems \ref{thm:Perturbations}-\ref{thm:Noise} are important for several reasons. First, they guarantee that the system of polynomial equations \eqref{eq:PolynomialSystem-tilde} is almost always consistent, i.e., there exists at least one solution. In the absence of noise this property is immediate simply because $\xi^*$ is a root to any of the noiseless polynomials $\hat{p}_k$. However, for noisy data the consistency of \eqref{eq:PolynomialSystem-tilde} is far from obvious; for example, the authors of \cite{Abid:arXiv17} write \emph{It is generally impossible to solve these equations analytically; in fact there may not be a solution to the system}. Theorems \ref{thm:Perturbations}-\ref{thm:Noise} guarantee that such an issue is of no concern. Secondly, they guarantee finiteness of solutions in $\Ce^n$: this is important because it allows the deployment of efficient polynomial solvers specifically designed for \emph{zero-dimensional} polynomial systems; moreover, it gives us the option of seeking an estimate for the parameters $\xi^*$ of the regression problem among a finite set of points. Indeed, the bound of Theorem \ref{thm:Noise} implies that the system is \emph{H\"older continuous} \cite{Luo:SIAM94}, so that for a \emph{small} level of observation noise $\varepsilon$, some root of the noisy system \eqref{eq:PolynomialSystem-tilde} is expected to be \emph{close} to $\xi^*$ (the entries of $e$ in \eqref{eq:noisy-bound} are polynomials in $\varepsilon$ which evaluate to zero for $\varepsilon=0$). Admittedly, characterizing the SNR rate of such an estimator requires tight bounds on the constants $\tau,\gamma, \gamma'$. We leave this problem to future research with the notes that 1) as per \cite{Luo:SIAM94} \emph{the exponent $\gamma$ is typically a small (and unknown) positive number in contrast to the exponent $\gamma=1$}, and 2) the encouraging simulations of \S \ref{section:AlgorithmicImplications} suggest that $\tau,\gamma, \gamma'$ are well-behaved. 

\subsection{Proofs} \label{subsection:Proofs}

\subsubsection{Proof of Theorem \ref{thm:Uniqueness}} \label{subsubsection:Proof-Proposition}

Let $A$ be a generic $m \times n$ matrix with $m>n$ and $\xi$ an $n \times 1$ generic vector. Since $A$ is generic and $m>n$, the rank of $A$ is equal to $n$. Let $\Pi^*$ be a permutation as in Problem \ref{prb:SLR}. We want to show that the only permutation $\Pi$ for which $\Pi (\Pi^*)^\top A \xi$ is in the range-space of $A$ is $\Pi=\Pi^*$. This is equivalent to proving that for any permutation $\Pi$ different than the identity we have
\begin{align}
\rank [A \, \, \, \Pi A \xi] = n+1.
\end{align}

\noindent Since $A$ is generic, it can be written as the product $A = L U$ of an $m \times m$ generic lower triangular matrix $L$ and an $m \times n$ matrix $U$, whose top $n \times n$ block is a generic upper triangular matrix and its $(m-n) \times n$ bottom part is the zero matrix. Then 
\begin{align}
L^{-1} [A \, \, \, \Pi A \xi] = [U \, \, \, L^{-1} \Pi L U \xi].
\end{align} Because of the structure of $U$ it is enough to show that one of the last $m-n$ entries of the vector $L^{-1} \Pi L U \xi$ is non-zero. But because $\xi$ is generic, it is enough to show that one of the last $m-n$ rows of the matrix 
$L^{-1} \Pi L U$ is non-zero. Towards that end, we will show that the $(m,1)$ entry of this matrix is non-zero. This entry is zero if and only if 
\begin{align}
L^{-1}_{m,:} \Pi L_{:,1} = 0,
\end{align} where $L^{-1}_{m,:}$ denotes the last row of $L^{-1}$ and $L_{:,1}$ denotes the first column of $L$. This last equation says that $L^{-1}_{m,:}$ must be orthogonal to $\Pi L_{:,1}$. But by definition, $L^{-1}_{m,:}$ is orthogonal to all the columns $L_{:,1},\dots,L_{:,m-1}$ of $L$ except the last one. Put together, we have that $L^{-1}_{m,:}$ must be orthogonal to $L_{:,1},\dots,L_{:,m-1},\Pi L_{:,1}$, which is possible only if 
\begin{align}
\rank [L_{:,1} \, \,  \cdots \, \, L_{:,m-1} \, \, \Pi L_{:,1}] = m-1.
\end{align} Hence, it is enough to show that 
\begin{align}
\det [L_{:,1} \, \,  \cdots \, \, L_{:,m-1} \, \, \Pi L_{:,1}] \neq 0. \label{eq:det-L}
\end{align} Towards that end, we view the entries $\ell_{ij}$ of $L$ as variables of a polynomial ring $\Ce[\ell_{11},\dots,\ell_{mm}]$ in $m(m+1)/2$ variables. Then the determinant in \eqref{eq:det-L} is a polynomial of $\Ce[\ell_{11},\dots,\ell_{mm}]$. 
It is enough to show that this polynomial is non-zero. For in that case, it defines a hypersurface of $\Ce^{m(m+1)/2}$, which will not contain $L$, since $L$ is generic. To show that the determinant is indeed a non-zero polynomial we consider an ordering of the variables
\begin{align}
\ell_{11} > \ell_{21} > \ell_{22} > \ell_{31} > \ell_{32}>\cdots>\ell_{m-1,m}>\ell_{mm},
\end{align} and then consider the induced lexicographic order on all the monomials of $\Ce[\ell_{11},\dots,\ell_{mm}]$. 

First, suppose that $\ell_{11}$ occurs in the $k$th entry of $\Pi L_{:,1}$ where $k>1$. We will show that the largest monomial appearing in the definition of $\det [L_{:,1} \, \,  \cdots \, \, L_{:,m-1} \, \, \Pi L_{:,1}]$ only occurs in one way; this guarantees that the determinant is non-zero. Indeed, the largest monomial occurs in the following unique way: choose the largest element from each row, starting from the rows that contain the largest elements. That is, pick element $\ell_{11}$ from row $1$, element $\ell_{11}$ from row $k$, element $\ell_{ii}$ from row $i$ for $i=2,\dots,k-1$, and elements $\ell_{i+1,i}$ from row $i+1$ for $i=k+1,\dots,m-1$. 

Next, suppose that $k \ge 1$ is the largest index such that the $i$th entry of $\Pi L_{:,1}$ is equal to $\ell_{i1}$ for $i=1,\dots,k$. In that case, we apply an elementary column operation by subtracting the first column of $[L_{:,1} \, \,  \cdots \, \, L_{:,m-1} \, \, \Pi L_{:,1}]$ from its last, to obtain a new matrix $M$ of the same rank. Then the first $k$ entries of the last column of $M$ are zero, while its $(i,m)$ entry for $i>k$ is of the form $\ell_{s_i,1}-\ell_{i,1}$, with $s_i>k, \, \forall i=k+1,\dots,m$, and with $s_{k+1} \neq k+1$. Let $t>1$ be such that $s_{k+t} = k+1$. Then the largest monomial in $\det(M)$ occurs in a unique way as the first term of the expanded product of elements $\ell_{ii}$ from row $i$ for $i=1,\dots,k+t-1$, element $(\ell_{k+1,1}-\ell_{k+t,1})$ from row $k+t$, and elements $\ell_{i+1,i}$ from row $i+1$ for $i=k+t,\dots,m-1$. That is, $\det(M)$ is a non-zero polynomial.

\subsubsection{Power-sums and regular sequences} \label{subsubsection:Proof-Preliminaries}

The goal of this section is to prove Lemma \ref{lem:Pbar}, which is the main tool needed for the proof of Theorems \ref{thm:Main} and \ref{thm:Perturbations}; the reader is encouraged to consult Appendix \ref{appendix:regular-sequences} before proceeding. To begin with, recall the power-sums polynomials 
\begin{align}
p_k = z_1^k+\cdots+z_m^k, \, \, \, k: \, \text{positive integer}, \label{eq:PowerSums}
\end{align} that were used in \S\ref{subsection:MainResults} as the base symmetric polynomials towards eliminating the unknown permutation $\Pi^*$. These are polynomials in $m$ variables $z_1,\dots,z_m$, i.e., $p_k \in \Ce[z]:=\Ce[z_1,\dots,z_m]$. The next two lemmas establish that $p_1,\dots,p_m$ form a regular sequence\footnote{This also follows from a more general theorem in \cite{Conca:Padova2009}, which states that any $m$ consecutive such polynomials $p_{\ell},p_{\ell+1},\dots,p_{\ell+m-1}$, where $\ell$ is any positive integer, form a regular sequence of $\Ce[z]$. To make the paper more accessible and self-contained, we have taken the liberty of giving the rather simple case $\ell=1$ its own proof.}. 

\begin{lem} \label{lem:DimensionElementarySymmetric}
Let $\sigma_1,\dots,\sigma_m$ be the elementary symmetric polynomials in $m$ variables $z$, defined as 
\begin{align}
\sigma_k&:= \sum_{1 \le i_1<i_2<\cdots<i_k\le m} z_{i_1} z_{i_2} \cdots z_{i_k}.
\end{align} Then $\sigma_1,\dots,\sigma_m$ form a regular sequence of $\Ce[z]$. 
\end{lem}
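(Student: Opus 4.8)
\textbf{Proof plan for Lemma \ref{lem:DimensionElementarySymmetric}.}

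The plan is to show that $\sigma_1,\dots,\sigma_m$ is a regular sequence in $\Ce[z]$ by verifying the standard criterion: since these are $m$ homogeneous elements in the polynomial ring $\Ce[z_1,\dots,z_m]$ of Krull dimension $m$, it suffices to show that the quotient ring $\Ce[z]/(\sigma_1,\dots,\sigma_m)$ is zero-dimensional, i.e., that the only common zero of $\sigma_1,\dots,\sigma_m$ in $\Ce^m$ (equivalently, in $\mathbb{P}^{m-1}$ after homogenization considerations, but here they are already homogeneous) is the origin. Indeed, for homogeneous ideals in a polynomial ring, $m$ forms generating an ideal whose vanishing locus is just $\{0\}$ automatically form a regular sequence — this is the equivalence between being a system of parameters and being a regular sequence in a Cohen--Macaulay ring, which $\Ce[z]$ is. I would invoke this equivalence as stated in the appendix on regular sequences.

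The key computation is therefore: if $a = (a_1,\dots,a_m) \in \Ce^m$ satisfies $\sigma_k(a) = 0$ for all $k = 1,\dots,m$, then $a = 0$. First I would recall the fundamental identity relating the elementary symmetric polynomials to the coefficients of a monic polynomial: for any point $a$, the numbers $a_1,\dots,a_m$ are exactly the roots (with multiplicity) of
\begin{align}
t^m - \sigma_1(a)\, t^{m-1} + \sigma_2(a)\, t^{m-2} - \cdots + (-1)^m \sigma_m(a).
\end{align}
If all $\sigma_k(a) = 0$, this polynomial is simply $t^m$, whose only root is $t = 0$ with multiplicity $m$. Hence $a_1 = \cdots = a_m = 0$, so $a = 0$. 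This establishes that $\V(\sigma_1,\dots,\sigma_m) = \{0\}$.

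Combining the two steps: $\sigma_1,\dots,\sigma_m$ are $m$ homogeneous polynomials in the $m$-dimensional Cohen--Macaulay ring $\Ce[z]$ whose common vanishing locus is $\{0\}$, hence they form a homogeneous system of parameters, hence a regular sequence. I do not anticipate a serious obstacle here; the only point requiring care is citing the correct form of the parameter-system/regular-sequence equivalence for graded Cohen--Macaulay rings (or, alternatively, arguing directly by dimension count that $\dim \Ce[z]/(\sigma_1,\dots,\sigma_k) \ge m-k$ always and equals $0$ when $k=m$, forcing each successive quotient to drop dimension by exactly one, which is precisely the regular-sequence condition in this setting). Either route is routine given the appendix material; I would go with whichever formulation the appendix states most cleanly.
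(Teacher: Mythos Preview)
Your proposal is correct and follows the same overall strategy as the paper: show that the common vanishing locus of $\sigma_1,\dots,\sigma_m$ in $\Ce^m$ is the origin, and then invoke the equivalence (Proposition~\ref{prp:RegularSequences} in the appendix) between ``$m$ homogeneous forms cutting down to dimension zero'' and ``regular sequence.''

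The one point of difference is how you establish $\V_{\Ce^m}(\sigma_1,\dots,\sigma_m)=\{0\}$. The paper argues by induction on $m$: since $\sigma_m(\zeta)=\zeta_1\cdots\zeta_m=0$ forces some coordinate to vanish, setting $\zeta_m=0$ reduces the remaining equations to the elementary symmetric polynomials in $m-1$ variables, and induction finishes. Your argument via Vieta's formulas is a clean one-shot alternative: the $\zeta_i$ are exactly the roots of $t^m-\sigma_1(\zeta)t^{m-1}+\cdots+(-1)^m\sigma_m(\zeta)=t^m$, hence all zero. Both are standard; yours is arguably more direct and avoids the inductive bookkeeping, while the paper's induction has the minor advantage of not needing to recall the root--coefficient identity. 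Either is perfectly adequate here.
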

\begin{proof}
We will show that $\V_{\Ce^m}(\sigma_1,\dots,\sigma_m) = \{0\}$, in which case the statement will follow from Proposition \ref{prp:RegularSequences} together with the fact that each $\sigma_k$ is homogeneous (of degree $k$). 
We proceed by induction on $m$. If $m=1$, $\Ce[z] = \Ce[z_1]$ and the only elementary symmetric polynomial is $z_1$. Clearly, $z_1$ vanishes only on $0 \in \Ce$. Suppose now that $m>1$. 
Let $\zeta=[\zeta_1,\dots,\zeta_m]^\top \in \V(\sigma_1,\dots,\sigma_m)$. Then $\sigma_m(\zeta) = \zeta_1 \cdots \zeta_m=0$, and without loss of generality we can assume that $\zeta_m=0$. 
Then $[\zeta_1,\dots,\zeta_{m-1}]^\top$ is in the variety generated by the $m-1$ elementary symmetric polynomials in $m-1$ variables and the induction on $m$ gives $\zeta_1=\cdots=\zeta_{m-1}=0$.
\end{proof}

\begin{lem} \label{lem:PowerSumsRegularSequence}
The first $m$ power-sum polynomials $p_1,\dots,p_m$ defined in \eqref{eq:PowerSums} form a regular sequence of $\Ce[z]$. \end{lem}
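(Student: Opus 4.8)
\textbf{Proof proposal for Lemma \ref{lem:PowerSumsRegularSequence}.}

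The plan is to deduce this from Lemma \ref{lem:DimensionElementarySymmetric} by relating the vanishing locus of the power sums to that of the elementary symmetric polynomials. The key classical fact I would invoke is the Newton--Girard identities, which express each $p_k$ as a polynomial in $\sigma_1,\dots,\sigma_k$ and, conversely, express each $\sigma_k$ as a polynomial (over $\mathbb{Q}$, hence over $\Ce$) in $p_1,\dots,p_k$; the invertibility of this change of generators is what makes the argument work, and the only place characteristic zero is used is to ensure we may divide by the integers $1,2,\dots,m$ appearing in those recursions. Concretely, Newton's identity $p_k - \sigma_1 p_{k-1} + \cdots + (-1)^{k-1}\sigma_{k-1} p_1 + (-1)^k k\,\sigma_k = 0$ lets one solve recursively for $\sigma_k$ in terms of $p_1,\dots,p_k$ since the coefficient of $\sigma_k$ is the nonzero scalar $(-1)^k k$.

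The steps, in order, are: (1) recall the Newton--Girard relations and observe that they give $\sigma_k \in \Ce[p_1,\dots,p_k]$ for each $k \in [m]$ and symmetrically $p_k \in \Ce[\sigma_1,\dots,\sigma_k]$; (2) conclude that the ideals $(p_1,\dots,p_m)$ and $(\sigma_1,\dots,\sigma_m)$ of $\Ce[z]$ are in fact equal, since each generator of one lies in the other; (3) therefore $\V_{\Ce^m}(p_1,\dots,p_m) = \V_{\Ce^m}(\sigma_1,\dots,\sigma_m) = \{0\}$ by Lemma \ref{lem:DimensionElementarySymmetric}; (4) since in addition each $p_k$ is homogeneous of degree $k$, invoke Proposition \ref{prp:RegularSequences} (the same criterion used in the proof of Lemma \ref{lem:DimensionElementarySymmetric}) to conclude that $p_1,\dots,p_m$ form a regular sequence of $\Ce[z]$. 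This mirrors exactly the structure of the previous proof, with the elementary-to-power-sum dictionary bridging the two.

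Alternatively, and perhaps more cleanly, one can skip the ideal-equality step and argue directly on varieties: if $\zeta \in \V(p_1,\dots,p_m)$ then $p_k(\zeta) = 0$ for all $k \in [m]$, and feeding these into the Newton recursion for $\sigma_k$ yields $\sigma_k(\zeta) = 0$ for all $k \in [m]$; hence $\zeta \in \V(\sigma_1,\dots,\sigma_m) = \{0\}$ by Lemma \ref{lem:DimensionElementarySymmetric}, and then Proposition \ref{prp:RegularSequences} applies as before. I expect the main (and really only) obstacle to be bookkeeping: stating the Newton--Girard identities precisely enough to make the recursive solvability for $\sigma_k$ transparent, and flagging that the denominators $k$ are units in $\Ce$ so the recursion is legitimate. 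There is no serious mathematical difficulty beyond what Lemma \ref{lem:DimensionElementarySymmetric} already supplies; the content is entirely in the translation between the two families of symmetric functions.
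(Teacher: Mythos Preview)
Your proposal is correct and follows essentially the same route as the paper: use Newton's identities to establish the ideal equality $(p_1,\dots,p_m)=(\sigma_1,\dots,\sigma_m)$, deduce $\V_{\Ce^m}(p_1,\dots,p_m)=\{0\}$ from Lemma~\ref{lem:DimensionElementarySymmetric}, and conclude via homogeneity and Proposition~\ref{prp:RegularSequences}. The only cosmetic difference is that the paper cites the fundamental theorem of symmetric polynomials for the inclusion $p_k\in\Ce[\sigma_1,\dots,\sigma_m]$ rather than the reverse Newton recursion, but this is immaterial.
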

\begin{proof}
Newton's identities 
\begin{align}
k \sigma_k = \sum_{i=1}^k (-1)^{i-1} \sigma_{k-i} p_i, \, \, \, \forall  k \in [m],
\end{align} show that $\sigma_1,\dots,\sigma_m$ can be obtained inductively in terms of $p_1,\dots,p_m$. On the other hand, the fundamental theorem of symmetric polynomials states that every symmetric polynomial can be written as a polynomial in $\sigma_1,\dots,\sigma_m$. This implies that we have an equality of ideals
$(p_1,\dots,p_m) = (\sigma_1,\dots,\sigma_m)$, hence an equality of algebraic varieties $\V_{\Ce^m}(p_1,\dots,p_m) = \V_{\Ce^m}(\sigma_1,\dots,\sigma_m)$. Thus Lemma \ref{lem:DimensionElementarySymmetric} gives that $\V_{\Ce^m}(p_1,\dots,p_m)$ consists of a single point and Proposition \ref{prp:RegularSequences} together with the fact that each $p_k$ is homogeneous (of degree $k$) establishes that $p_1,\dots,p_m$ is a regular sequence.
\end{proof}

Now, recall that in \S \ref{subsection:MainResults} each \emph{base} polynomial $p_k$ was used to furnish a polynomial equation $\hat{p}_k(x)=p_k(A x) - p_k(y)=0$ that the unique solution $\xi^*$ to Problem \ref{prb:SLR} should satisfy. Notice here how we are passing from polynomials $p_k \in \Ce[z]$ in $m$ variables to polynomials $\hat{p}_k \in \Ce[x]$ in $n$ variables. Notice further that even though the $p_k$ are symmetric and homogeneous, the $\hat{p}_k$ are in principle neither symmetric nor homogeneous. In fact,  
\begin{align}
\hat{p}_k &= \bar{p}_k - p_k(y), \, \, \, \text{where}\\
\bar{p}_k &:= p_k(A x)= \sum_{i=1}^m (a_i^\top x)^k , \, \, \, \, \, \, k=1,\dots,n.\label{eq:Pbar}
\end{align} Here $\bar{p}_k$ is a homogeneous polynomial of degree $k$ and $p_k(y) \in \Re$ is a constant. As it turns out, the homogeneous parts $\bar{p}_k$ of the $\hat{p}_k$ are the bridge for passing properties of the base polynomials $p_k$ to the polynomials of interest $\hat{p}_k$. The next two lemmas are the two required steps towards building that bridge.

\begin{lem}\label{lem:VarietyIsomorphism}
Let $\ell_1^\top,\dots,\ell_{m-n}^\top \in \Re^{1 \times m}$ be a basis for the left nullspace of $A$. Then the algebraic varieties $\V_{\Ce^n}(\bar{p}_1,\dots,\bar{p}_n)$ and $\V_{\Ce^m}(\ell_1^\top z, \dots, \ell_{m-n}^\top z, p_1,\dots,p_n)$ are isomorphic. In particular, the two varieties have the same dimension.
\end{lem}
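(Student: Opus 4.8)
The plan is to exhibit an explicit isomorphism of varieties induced by the linear map $x \mapsto Ax$. Since $A$ has rank $n$, the linear map $\phi\colon \Ce^n \to \Ce^m$, $\phi(x) = Ax$ is injective, and its image is exactly the $n$-dimensional linear subspace $\R(A)_{\Ce} = \{z \in \Ce^m : \ell_i^\top z = 0, \, i = 1,\dots,m-n\}$, because the $\ell_i^\top$ span the left nullspace of $A$. So $\phi$ restricts to an isomorphism of affine varieties between $\Ce^n$ and the linear subvariety $\V_{\Ce^m}(\ell_1^\top z, \dots, \ell_{m-n}^\top z)$, with inverse given by any left inverse of $A$ (e.g. $(A^\top A)^{-1} A^\top$), which is also a polynomial (linear) map. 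The key point is then that this isomorphism carries one variety onto the other.

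The main step is to check that $\phi$ and its inverse map the two varieties into each other. On one side, if $\xi \in \V_{\Ce^n}(\bar{p}_1,\dots,\bar{p}_n)$, then $z := A\xi$ satisfies $\ell_i^\top z = 0$ for all $i$ (it lies in $\R(A)_{\Ce}$), and $p_k(z) = p_k(A\xi) = \bar{p}_k(\xi) = 0$ for $k = 1,\dots,n$ by the very definition \eqref{eq:Pbar} of $\bar{p}_k$; hence $z \in \V_{\Ce^m}(\ell_1^\top z,\dots,\ell_{m-n}^\top z, p_1,\dots,p_n)$. Conversely, if $z$ lies in the latter variety, then $\ell_i^\top z = 0$ for all $i$ forces $z \in \R(A)_{\Ce}$, so $z = A\xi$ for a unique $\xi \in \Ce^n$ (uniqueness by injectivity of $A$), and $\bar{p}_k(\xi) = p_k(A\xi) = p_k(z) = 0$ for $k \in [n]$, so $\xi \in \V_{\Ce^n}(\bar{p}_1,\dots,\bar{p}_n)$. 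This establishes a bijection, and since both directions are given by the restrictions of polynomial maps (namely $x \mapsto Ax$ and $z \mapsto A^+ z$ for a fixed left inverse $A^+$), it is an isomorphism of affine varieties. The statement about equal dimension is then immediate, since dimension is an isomorphism invariant.

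I do not anticipate a serious obstacle here; the only mild care needed is to make sure everything is done over $\Ce$ rather than $\Re$ — the left nullspace basis $\{\ell_i^\top\}$ of the real matrix $A$ still cuts out $\R(A)_{\Ce}$ inside $\Ce^m$, because $\rank_{\Ce} A = \rank_{\Re} A = n$, so the orthogonality relations $\ell_i^\top A = 0$ that hold over $\Re$ continue to characterize the column span over $\Ce$. One should also be slightly careful to phrase ``isomorphic'' in the scheme-theoretic or at least the coordinate-ring sense if one wants the conclusion to transfer facts like being a complete intersection or the number of points counted with multiplicity; but for the purpose at hand — transferring finiteness and the dimension count — the bijection together with the fact that it is a polynomial map with polynomial inverse is all that is used, and this is exactly what makes $\V_{\Ce^n}(\bar p_1,\dots,\bar p_n)$ amenable to the regular-sequence machinery of Lemma \ref{lem:PowerSumsRegularSequence} applied to $p_1,\dots,p_n$ cut by the $n$ generic linear forms $\ell_i^\top z$ (this is how the next lemma will presumably use it).
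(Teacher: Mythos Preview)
Your proposal is correct and follows essentially the same approach as the paper: both use the linear map $x \mapsto Ax$ with inverse $(A^\top A)^{-1}A^\top$, verify that each direction lands in the target variety via $\ell_j^\top A\xi = 0$ and $p_k(A\xi) = \bar p_k(\xi)$, and conclude isomorphism from the fact that both maps are polynomial (linear). Your additional remarks about $\rank_{\Ce} A = \rank_{\Re} A$ and the scheme-theoretic caveat are fine elaborations but not needed for the paper's purposes.
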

\begin{proof}
Let $\X:=\V_{\Ce^m}(\ell_1^\top z, \dots, \ell_{m-n}^\top z, p_1,\dots,p_n)$ and $\Y:= \V_{\Ce^n}(\bar{p}_1,\dots,\bar{p}_n)$. To show that $\X$ and $\Y$ are isomorphic as algebraic varieties, it is enough to find a bijective map $f: \Y \rightarrow \X$, such that both $f$ and $f^{-1}$ are given by polynomials. First, we define $f$ by specifying its image on an arbitrary point $\xi$ of $\Y$:
\begin{align}
\xi \in \Y \subset \Ce^n \stackrel{f}{\longmapsto} A \xi \in \Ce^m. 
\end{align} We need to show that $f(\xi) \in \X$. To do that, we need to check that $f(\xi)$ satisfies the defining equations of $\X$. Towards that end, note that $\ell_j^\top f(\xi) = \ell_j^\top A \xi = 0, \, \forall j \in [m-n]$, by definition of the $\ell_j$. Moreover, $p_k(f(\xi)) = p_k(A \xi) = \bar{p}_k(\xi) = 0, \, \forall k \in [n]$, where the last equality is true because $\xi \in \Y$. Hence $f$ is indeed a map of the form $f: \Y \rightarrow \X$. Moreover, $f$ is given by linear polynomials, i.e., the coordinates of $f(\xi)$ are linear polynomials of the coordinates of $\xi$. 

Next, we define a map $g: \X \rightarrow \Y$ as follows. Let $\zeta \in \X$. Then $\ell_j^\top \zeta = 0, \, \forall j \in [m-n]$. Since the $\ell_j$ form a basis for $\N(A^\top)$, this means that $\zeta \in \N(A^\top)^\perp$. But from basic linear algebra $\N(A^\top)^\perp = \R(A)$. Hence there exists some $\xi \in \Ce^n$ such that $\zeta = A \xi$. Moreover, this $\xi$ is unique because $A$ has full rank by the assumption of Problem \ref{prb:SLR}. This allows us to define the map $g$ as $g(\zeta) =\xi:= (A^\top A)^{-1}A^\top \zeta$. Then 
\begin{align}
\bar{p}_k(g(\zeta)) &= p_k(A g(\zeta)) = p_k(A (A^\top A)^{-1}A^\top \zeta) \\
&= p_k(A (A^\top A)^{-1}A^\top A \xi) \\
&= p_k(A \xi) \\
&= p_k(\zeta) = 0,
\end{align} where the last equality is true because $\zeta \in \X$. Hence $g(\zeta) \in \Y$. Moreover, $g$ is given by polynomials, since each coordinate of $g(\zeta)$ is a linear polynomial in the coordinates of $\xi$. Finally, it is simple to check that $f$ and $g$ are inverses of each other.
\end{proof}
 
\begin{lem} \label{lem:Pbar}
If $A$ is generic, then the polynomials $\bar{p}_1,\dots,\bar{p}_n$ defined in \eqref{eq:Pbar} form a regular sequence of $\Ce[x]$.
\end{lem}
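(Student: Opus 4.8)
The plan is to convert the statement into a dimension computation and then settle that computation with a standard genericity fact. Since $\bar{p}_1,\dots,\bar{p}_n \in \Ce[x]$ are homogeneous (of degrees $1,\dots,n$) and there are exactly $n$ of them in $n$ variables, Proposition \ref{prp:RegularSequences} reduces the claim to showing that the affine cone $\V_{\Ce^n}(\bar{p}_1,\dots,\bar{p}_n)$ is zero-dimensional, i.e. that it equals $\{0\}$. By Lemma \ref{lem:VarietyIsomorphism} this cone is isomorphic, in particular of the same dimension, as $\X := \V_{\Ce^m}(\ell_1^\top z,\dots,\ell_{m-n}^\top z,\, p_1,\dots,p_n)$, where $\ell_1^\top,\dots,\ell_{m-n}^\top$ is a basis for the left nullspace of $A$. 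So it suffices to prove $\dim \X = 0$.

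Next I would separate $\X$ into its ``linear'' and ``power-sum'' parts. The subspace $L := \V_{\Ce^m}(\ell_1^\top z,\dots,\ell_{m-n}^\top z)$ is exactly the complexified column space $\R(A)$, a linear subspace of $\Ce^m$ of dimension $n$ since $A$ is generic, hence of full rank. The variety $W := \V_{\Ce^m}(p_1,\dots,p_n)$ does not depend on $A$, and by Lemma \ref{lem:PowerSumsRegularSequence} the full list $p_1,\dots,p_m$ is a regular sequence, so its initial segment $p_1,\dots,p_n$ is again a regular sequence of $\Ce[z]$; consequently $W$ is an affine cone of pure dimension $m-n$. Thus $\X = W \cap L$ with $\dim W + \dim L = m$, precisely the borderline ``complementary'' situation.

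The heart of the proof is then the classical fact that a generic linear subspace avoids a fixed subvariety of strictly smaller complementary dimension: projectivizing, $\mathbb{P}(W) \subseteq \mathbb{P}^{m-1}$ has dimension $m-n-1$ and $\mathbb{P}(L)$ has dimension $n-1$, and since $(m-n-1)+(n-1) = m-2 < m-1$, a generic $n$-plane $L$ satisfies $\mathbb{P}(L) \cap \mathbb{P}(W) = \emptyset$, equivalently $W \cap L = \{0\}$. To make this usable for a generic matrix rather than a generic subspace, I would observe that $A \mapsto \R(A)$ maps the irreducible variety of full-rank $m\times n$ matrices onto $\mathrm{Gr}(n,m)$, so the bad set $\{A : \R(A) \cap W \neq \{0\}\}$ is the preimage of a proper Zariski-closed subset of the Grassmannian and hence lies in a proper Zariski-closed, in particular measure-zero, subset of $\Re^{m\times n}$; off that set $\X = W \cap L = \{0\}$ and Proposition \ref{prp:RegularSequences} concludes. (If one prefers not to quote the generic-section fact, the same conclusion comes from a direct dimension count on the incidence variety $\{(A,[z]) : z \in W,\ z \in \R(A)\}$, whose dimension works out to $mn-1$, strictly below that of the space of matrices.) The step I expect to be most delicate is this last passage from ``generic subspace'' to ``generic real matrix'': one must check that the conclusion really is a full-measure condition on $A \in \Re^{m\times n}$, which is fine because the real locus is Zariski-dense in the complex matrix variety and a nonempty Zariski-open subset of an irreducible variety meets any such dense set in a full-measure set, but it is the point where the genericity bookkeeping of \S\ref{subsubsection:well-posedness} must be invoked carefully; everything else is routine and citable from Appendix \ref{appendix:regular-sequences}.
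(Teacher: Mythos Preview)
Your proposal is correct and shares the same skeleton as the paper's proof: both reduce, via Proposition \ref{prp:RegularSequences} and Lemma \ref{lem:VarietyIsomorphism}, to showing that $\X=\V_{\Ce^m}(\ell_1^\top z,\dots,\ell_{m-n}^\top z,\,p_1,\dots,p_n)$ is zero-dimensional, and both use that $p_1,\dots,p_n$ is already a regular sequence (initial segment of Lemma \ref{lem:PowerSumsRegularSequence}).

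The difference is only in the final genericity step. The paper passes to the \emph{dual} side: it argues via the Grassmannian isomorphism $\mathbb{G}(n,m)\cong\mathbb{G}(m-n,m)$ that a generic $A$ produces generic null-space vectors $\ell_1,\dots,\ell_{m-n}$, and then \emph{iteratively} applies the self-contained Proposition \ref{prp:GenericHyperplanes} to extend the regular sequence $p_1,\dots,p_n$ by the linear forms $\ell_j^\top z$ one at a time. You instead stay on the primal side and invoke the classical projective ``generic-section'' fact that a generic $(n-1)$-plane in $\mathbb{P}^{m-1}$ is disjoint from a fixed $(m-n-1)$-dimensional subvariety. Your route is a bit more geometric and arguably quicker if one is happy to cite (or re-derive via the incidence-variety count you sketch) that external statement; the paper's route has the advantage of remaining entirely inside the regular-sequence toolbox already assembled in Appendix \ref{appendix:regular-sequences}, so nothing beyond Proposition \ref{prp:GenericHyperplanes} needs to be imported. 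Your closing remark about transferring ``generic subspace'' to ``generic real matrix'' is exactly the issue the paper handles with its Grassmannian-duality paragraph, so you have correctly identified where the care is required.
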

\begin{proof}
The space of all linear subspaces of dimension $n$ of $\Re^m$ is an algebraic variety of $\Re^M, \, M=\binom{m}{n}$, called \emph{Grassmannian} and denoted $\mathbb{G}(n,m)$. The space $\mathbb{G}(m-n,m)$ of all linear subspaces of dimension $m-n$ of $\Re^m$ is also an algebraic variety of $\Re^M$. These two varieties are isomorphic under a mapping that takes a linear subspace $\cS$ of dimension $n$ to its orthogonal complement $\cS^\perp$. Hence, $\cS$ is generic if and only if $\cS^\perp$ is generic. Choosing $A$ generic is the same as choosing $n$ generic vectors in $\Re^m$, which is the same as choosing a generic subspace $\cS=\R(A)$ of $\Re^m$. This is the same as choosing $\cS^\perp=\N(A^\top)$ generic, which is the same as choosing $m-n$ generic vectors of $\Re^m$. In other words, if $A$ is generic, then a generic set of vectors $\ell_1, \dots, \ell_{m-n}$ inside $\N(A^\top)$, is a set of $m-n$ generic vectors of a generic $(m-n)$-dimensional linear subspace of $\Re^m$. Hence, with respect to any property that does not depend on $A$, the vectors $\ell_1, \dots, \ell_{m-n}$ behave as generic vectors of $\Re^m$.

So let $L= [\ell_1 \, \cdots \, \ell_{m-n}]$ be a matrix containing in its columns a set of generic vectors of $\N(A^\top)$.
By Lemma \ref{lem:PowerSumsRegularSequence} $p_1,\dots,p_m$ is a regular sequence of $\Ce[z]$. By the definition of regular sequence, the subsequence $p_1,\dots,p_n$ is also regular. By inductive application of Proposition \ref{prp:GenericHyperplanes},
\begin{align}
p_1,\dots,p_n,\ell_1 ^\top z,\dots,\ell_{m-n} ^\top z
\end{align} is also a regular sequence of $\Ce[z]$. By Proposition \ref{prp:RegularSequences}, we have that 
\begin{align}
\dim \V_{\Ce^m}(\ell_1^\top z, \dots, \ell_{m-n}^\top z, p_1,\dots,p_n) = 0.
\end{align} By Lemma \ref{lem:VarietyIsomorphism} we have that 
$\dim \V_{\Ce^n}(\bar{p}_1,\dots,\bar{p}_n) = 0$. Then by Proposition \ref{prp:RegularSequences} $\bar{p}_1,\dots,\bar{p}_n$ is a regular sequence of $\Ce[x]$.
\end{proof}

\subsubsection{Proof of Theorem \ref{thm:Main}}

Having developed the machinery that led to Lemma \ref{lem:Pbar}, the task of proving Theorem \ref{thm:Main} is a matter of putting this machinery to work; at this stage the reader is encouraged to consult appendices \ref{appendix:geometric-dimension}-\ref{appendix:initial-ideals} before proceeding. 

Let $\I=(\hat{p}_1,\dots,\hat{p}_n)$ be the ideal generated by our polynomials $\hat{p}_1,\dots,\hat{p}_n$. Consider the weight order of $\Ce[x]$ defined by the vector $w=[1,\dots,1]^\top \in \mathbb{Z}^n$ (see Appendix \ref{appendix:initial-ideals}). Since $\bar{p}_1,\dots,\bar{p}_n$ is a regular sequence of homogeneous polynomials (Lemma \ref{lem:Pbar}) and $\hat{p}_k=\bar{p}_k+\text{constant}$, we can obtain $\In_{w}(\I)$ just from the leading homogeneous terms of the generators of $\I$, i.e., $\In_{w}(\I) = (\bar{p}_1,\dots,\bar{p}_n)$ (Proposition \ref{prp:InitialRegular}). Since $\bar{p}_1,\dots,\bar{p}_n$ is a regular sequence of length $n$ in the polynomial ring $\Ce[x]$ of $n$ variables, we have $\dim \V_{\Ce^n}(\In_{w}(\I)) = 0$ (Proposition \ref{prp:RegularSequences}). Since $\V_{\Ce^n}(\I)$ and $\V_{\Ce^n}(\In_{w}(\I))$ have the same dimension  (Proposition \ref{prp:InitialIdeal-Dimension}), we have $\dim \V_{\Ce^n}(\I) = 0$. Since zero-dimensional varieties have a finite number of points (Proposition \ref{prp:ZeroDimensionFinitePoints}), we have that $\V_{\Ce^n}(\hat{p}_1,\dots,\hat{p}_n)$ consists of finitely many points.

We now bound from above the number of points of $\Y:=\V_{\Ce^n}(\hat{p}_1,\dots,\hat{p}_n)$. 
Consider the affine cone $\Y^{(h)}=\V_{\Ce^{n+1}}(\hat{p}_1^{(h)},\dots,\hat{p}_n^{(h)})$ of $\Y$ (see Appendix  \ref{subsubsection:Homogenization}). Since $\dim \Y=0$, we have $\dim \Y^{(h)}=1$ (Proposition \ref{prp:AffineConeDimension}). 
Let $\Y^{(h)} = \W_1 \cup \cdots \cup \W_{\ell}$ be the irreducible decomposition of $\Y^{(h)}$ (Proposition \ref{prp:IrreducibleDecomposition}). 
Since $\Y^{(h)}$ is generated by homogeneous polynomials, it is the union of lines through the origin. 
Hence, each irreducible component $\W_i$ is the union of lines through the origin. We argue that each $\W_i$ is a single line through the origin. For if some $\W_j$ is not a single line, let $\xi \in \W_j$ be a point different than the origin $0$. Letting $\L$ be the line through the origin and $\xi$, we have a chain $\W_j \supsetneq \L \supsetneq \{0\}$ of irreducible subsets of $\Y^{(h)}$ of length $2$. 
But this contradicts the fact that $\dim \Y^{(h)}=1$ (Definition \ref{dfn:DimensionGeometric}). 
This shows that $\Y^{(h)}$ is the union of $\ell$ lines through the origin. Then 
$\ell \le \deg(\hat{p}_1^{(h)}) \deg(\hat{p}_2^{(h)}) \cdots \deg(\hat{p}_n^{(h)}) = n!$ (Proposition \ref{prp:Bezout}). 
Finally, the points of $\Y$ are in $1-1$ correspondence with the intersection points of $\Y^{(h)}$ with the hyperplane $t=1$ of $\Ce^{n+1}$ 
(see Appendix \ref{subsubsection:Homogenization}). But the number of these intersection points can not exceed the number of lines of $\Y^{(h)}$.

\subsubsection{Proof of Theorem \ref{thm:Perturbations}}

The proof is almost the same as the proof of Theorem \ref{thm:Main} with an additional twist: we need to show that the variety $\tilde{\Y}:= \V_{\Ce^n}(\tilde{p}_1,\dots,\tilde{p}_n)$, where the $\tilde{p}_k$ are defined in \eqref{eq:p-tilde}, is non-empty. So suppose that $\tilde{\Y}=\emptyset$. Then $1 \in \tilde{I}:=(\tilde{p}_1,\dots,\tilde{p}_n)$ (Proposition \ref{prp:Nullstellensatz}). Hence $1=\In_{w}(1) \in \In_{w}(\tilde{I})$. Since $\tilde{A}$ is generic, the polynomials $\tilde{\bar{p}}_k := p_k(\tilde{A}x), \, k \in [n]$, form a regular sequence (Lemma \ref{lem:Pbar}). Since $\tilde{p}_k:=  \tilde{\bar{p}}_k-p_k(\tilde{y})$, we must have that $\In_{w}(\tilde{I}) = (\tilde{\bar{p}}_1,\dots,\tilde{\bar{p}}_n)$ (Proposition \ref{prp:InitialRegular}). But now $1 \in (\tilde{\bar{p}}_1,\dots,\tilde{\bar{p}}_n)$ is impossible, because all $\tilde{\bar{p}}_k$ are homogeneous of positive degree, i.e., the equation $1 = q_1 \tilde{\bar{p}}_1+\cdots+q_n\tilde{\bar{p}}_n$ can not be true for any $q_k \in \Ce[x]$. This contradiction means that $\tilde{\Y}$ is non-empty. The rest of the proof is the same as that of Theorem \ref{thm:Main}.

\subsubsection{Proof of Theorem \ref{thm:Noise}}

In view of Theorem \ref{thm:Perturbations} we only need to prove \eqref{eq:noisy-bound}. For $\xi \in \Ce^n$ write $\xi = (\xi)_{\Re} + \sqrt{-1} \, (\xi)_{\mathbb{I}}$, where $(\xi)_{\Re},(\xi)_{\mathbb{I}}$ are the real and imaginary parts of $\xi$ respectively. Let $\Re[u,v]=\Re[u_1,\dots,u_n,v_1,\dots,v_n]$ be a  polynomial ring in $2n$ variables. Then for each $\bar{p}_k$ (defined in \eqref{eq:Pbar}) there exist unique polynomials $\bar{g}_k,\bar{h} _k \in \Re[u,v]$, such that 
\begin{align}
&\bar{p}_k(\xi) = \bar{g}_k \big( (\xi)_{\Re}, (\xi)_{\mathbb{I}}\big) + \sqrt{-1} \,  \bar{h}_k \big( (\xi)_{\Re}, (\xi)_{\mathbb{I}}\big), \\ 
& \forall \xi=(\xi)_{\Re} + \sqrt{-1} \, (\xi)_{\mathbb{I}} \in \Ce^n. \nonumber
\end{align} Define $\hat{g}_k = \bar{g}_k - \sum_{j=1}^m y_j^k$ and $\tilde{g}_k = \bar{g}_k - \sum_{j=1}^m (y_j+\varepsilon_j)^k$. Then 
\begin{align}
&\hat{p}_k(\xi) = \hat{g}_k \big( (\xi)_{\Re}, (\xi)_{\mathbb{I}}\big) + \sqrt{-1} \,  \bar{h}_k \big( (\xi)_{\Re}, (\xi)_{\mathbb{I}}\big), \\  
&\tilde{p}_k(\xi) = \tilde{g}_k \big( (\xi)_{\Re}, (\xi)_{\mathbb{I}}\big) + \sqrt{-1} \,  \bar{h}_k \big( (\xi)_{\Re}, (\xi)_{\mathbb{I}}\big), \\
& \forall \xi=(\xi)_{\Re} + \sqrt{-1} \, (\xi)_{\mathbb{I}} \in \Ce^n. \nonumber
\end{align} We thus have the following set-theoretic equalities, where we identify $\xi=(\xi)_{\Re} + \sqrt{-1} \, (\xi)_{\mathbb{I}} \in \Ce^n$ with $\big((\xi)_{\Re},(\xi)_{\mathbb{I}} \big) \in \Re^{2n}$:
\begin{align}
\V_{\Ce^n}(\hat{p}_1,\dots,\hat{p}_n) &= \V_{\Re^{2n}}(\hat{g}_1,\dots,\hat{g}_n,\bar{h}_1,\dots,\bar{h}_n), \\
\V_{\Ce^n}(\tilde{p}_1,\dots,\tilde{p}_n) &= \V_{\Re^{2n}}(\tilde{g}_1,\dots,\tilde{g}_n,\bar{h}_1,\dots,\bar{h}_n).
\end{align} Let $\xi^* \in \Re^n$ be such that there exists a permutation $\Pi^*$ for which $\Pi^* y = A \xi^*$. Then $\hat{p}_k(\xi^*) = 0$ for every $k \in [n]$. Equivalently, for $(\xi^*,0) \in \Re^{2n}$ we have $\hat{g}_k(\xi^*,0) = \bar{h}_k(\xi^*,0)=0$. Moreover, 
\begin{align}
\tilde{g}_k(\xi^*,0) &= \bar{g}_k(\xi^*,0) - \sum_{j=1}^m (y_j+\varepsilon_j)^k \\
&= \hat{g}_k(\xi^*,0) - \sum_{j=1}^m \sum_{\ell=1}^ k {k \choose \ell} y_j^{k-\ell} \varepsilon_j^\ell \\
&=- \sum_{j=1}^m \sum_{\ell=1}^ k {k \choose \ell} y_j^{k-\ell} \varepsilon_j^\ell. 
\end{align}
Now, Theorem 2.2 in \cite{Luo:SIAM94} guarantees the existence of constants $\tau, \gamma, \gamma'$ as claimed for which there exists $\xi\in \V_{\Ce^n}(\tilde{p}_1,\dots,\tilde{p}_n)$ such that
\begin{align}
&\|\alpha - (\xi)_{\Re} \|_2 \le\|(\alpha,\beta) - \big((\xi)_{\Re},(\xi)_{\mathbb{I}} \big) \|_2 \le \\
&\tau \, (1+ \|(\alpha,\beta)\|_2)^{\gamma'} \|e (\alpha,\beta)\|_2^\gamma, \, \, \, \forall \alpha, \beta \in \Re^n,
\end{align} where 
\small
\begin{align}
e(\alpha,\beta) = \big(\tilde{g}_1(\alpha,\beta),\dots,\tilde{g}_n(\alpha,\beta),\bar{h}_1(\alpha,\beta),\dots,\bar{h}_n(\alpha,\beta) \big). 
\end{align} \normalsize Setting $(\alpha,\beta) = (\xi^*,0)$ concludes the proof.

\section{Algorithm: Algebraically Initialized \\ Expectation Maximization}\label{section:AlgorithmicImplications}

\subsection{Algorithm Design} \label{subsection:AI-EM}

Assuming for simplicity (and without much loss of generality as per Theorem \ref{thm:Uniqueness}) that there is a unique solution $\xi^*$ to Problem \ref{prb:SLR}, Theorem \ref{thm:Main} guarantees that $\xi^*$ is one of the finitely many complex roots of the noiseless polynomial system $\P$ of $n$ equations in $n$ unknowns given in \eqref{eq:P-exact}. Even if the data $\tilde{y},\tilde{A}$ are corrupted, Theorem \ref{thm:Perturbations} further guarantees that the noisy system $\tilde{\P}$ given in \eqref{eq:PolynomialSystem-tilde} remains consistent with $L \le n!$ complex roots. If in addition the corruption level is mild, as per Theorem \ref{thm:Noise} we expect one of the roots of $\tilde{\P}$ to be a good approximation to $\xi^*$. Our goal is to isolate that root and refine it.

The polynomial system $\tilde{\P}$ (\eqref{eq:PolynomialSystem-tilde}) is solvable in closed form for $n=1,2$, while for $n \ge 3$ it  can be solved using any state-of-the-art solver \cite{Lazard:JSC2009}. Nevertheless, the complexity of solving such a system is known to be exponential in $n$, even for well-behaved cases \cite{Faugere:JSC2016}. Practically speaking, this currently limits us to the regime $n \le 6$, since for $n=6$ a standard homotopy-based solver such as Bertini \cite{Bertini} takes about $37$ minutes on an Intel(R) i7-8650U, 1.9GHz, 16GB machine. On the other hand, for $n=3,4$ using an automatic solver generator \cite{Kneip:Polyjam2015} along the lines of \cite{Kukelova:ECCV08} for the specific structure of our system, we are able to obtain very efficient linear algebra based solvers that run in milliseconds. 

Having obtained the roots $\tilde{\xi}_1,\dots,\tilde{\xi}_L \in {\Ce}^n, \, L \le n!$, of $\tilde{\P}$, we retain only their real parts $(\tilde{\xi}_1)_{\Re},\dots,(\tilde{\xi}_L)_{\Re}$, and identify the root that can serve as a first approximation to $\xi^*$. We do this by selecting the root that yields the smallest $\ell_2$ error among all possible permutations $\Pi$:
\begin{align}
\xi_{\text{AI}} := \argmin_{i \in [L]} \, \left\{\min_{\Pi} \left\| \Pi \tilde{y} - \tilde{A} (\tilde{\xi}_i)_{\Re} \right\|_2 \right\}. \label{eq:xi-AI}
\end{align} We note that each inner minimization $\min_{\Pi} \| \Pi \tilde{y} - \tilde{A} (\tilde{\xi}_i)_{\Re} \|_2$  in \eqref{eq:xi-AI} can be solved by sorting (see \cite{Abid:arXiv18}) and so the computation of $\xi_{\text{AI}}$ is of complexity $\mathcal{O}(L m \log(m))$. Finally, we use the \emph{algebraic initialization} $\xi_{\text{AI}}$ as an initialization to the Expectation Maximization algorithm of \cite{Abid:arXiv18}, which as noted in \S \ref{subsection:PriorArt} consists of solving \eqref{eq:MLE} via alternating minimization. The complete algorithmic listing, which we refer to as \emph{Algebraically-Initialized Expectation-Maximization (AI-EM)}\footnote{A preliminary version of this algorithm has been published in the short conference paper \cite{Peng:ICASSP19}.} is given in Algorithm \ref{alg:AI-EM}.

\begin{algorithm}
	\caption{Algebraically-Initialized Expectation-Maximization}\label{alg:AI-EM}	
	\begin{algorithmic}[1]
	\Procedure{AI-EM}{$\tilde{y} \in\Re^{m}, \, \tilde{A} \in \Re^{m\times n}, \, T \in \mathbb{N}$, $\epsilon \in \Re_{+}$}		
	        \State $p_k(z) := \sum_{j=1}^m z_j^k,\,  \tilde{p}_k := p_k(\tilde{A} x)-p_k(\tilde{y}), \,  k\in [n]$;	
		\State Compute roots $\{\tilde{\xi}_i\}_{i=1}^L \subset \Ce^n$ of $\{\tilde{p}_k=0, \, k \in [n] \}$;
				\State Extract the real parts $\{(\tilde{\xi}_i)_{\Re}\}_{i=1}^L$ of $\{\tilde{\xi}_i\}_{i=1}^L \subset \Ce^n$;
	\State $\{\xi_0,\Pi_0\} \gets \argmin_{\xi \in \{(\tilde{\xi}_i)_{\Re}\}_{i=1}^L, \Pi} \,  \| \Pi \tilde{y} - \tilde{A} \xi \|_2 $;
		\State $t \gets 0$, $\Delta\J \gets \infty$, $\J \gets \| \Pi_0 \tilde{y} - \tilde{A} \xi_0 \|_2$; 
		\While{$t< T$ and $ \Delta \J > \varepsilon \J$} 
			\State $t \gets t + 1$; 
			\State $\xi_t \gets \argmin_{\xi \in \Re^n }  \| \Pi_{t-1} \tilde{y} - \tilde{A} \xi \|_2 $;
			\State $\Pi_t \gets \argmin_{ \Pi} \,  \| \Pi \tilde{y} - \tilde{A} \xi_t \|_2 $;
			\State $\Delta \J \gets \J - \| \Pi_t \tilde{y} - \tilde{A} \xi_t \|_2$;
			\State $\J \gets \| \Pi_t \tilde{y} - \tilde{A} \xi_t \|_2$;
		\EndWhile	 	
		\State \textbf{Return} $\xi_t, \, \Pi_t$.	
	\EndProcedure
	\end{algorithmic}
\end{algorithm}

\subsection{Numerical Evaluation} \label{subsection:NumericalEvaluation}

\myparagraph{Algorithms} We perform a numerical evaluation of our proposed AI-EM Algorithm \ref{alg:AI-EM}, which relies on solving the polynomial system by homotopy continuation in Bertini \cite{Bertini} for $n \ge 5$, and on our C++ elimination-template-based custom solvers for $n=3,4$ (see \cite{Kneip:Polyjam2015,Kukelova:ECCV08} for more details).  We also compare with the robust $\ell_1$ regression method of \cite{Slawski:EJS19}, as well as two variations of EM algorithms that were proposed in \cite{Abid:arXiv18} towards computing the Maximum Likelihood Estimator of \eqref{eq:MLE}\footnote{We do not compare with the algorithm of \cite{Haghighatshoar:TSP18}, since this algorithm assumes a strong hypothesis rendering it not applicable to the case where the observation vector $y$ has not been subsampled; this is the case where unlabeled sensing becomes equivalent to linear regression without correspondences. For a comparison of \cite{Haghighatshoar:TSP18} with other unlabeled sensing methods see \cite{Tsakiris:ICML19}.}. The first, referred to as \cite{Abid:arXiv18}$^1$, computes the MLE via alternating minimization exactly as in Algorithm \ref{alg:AI-EM}, except that it uses as initialization the vector that best fits the data $\tilde{y},\tilde{A}$ in the least-squares sense. The second variation, referred to as \cite{Abid:arXiv18}$^2$, uses the same initialization as \cite{Abid:arXiv18}$^1$, but it replaces the brute search over all possible permutations by a dynamic empirical average of permutation matrices drawn from a suitable Markov chain. For all algorithms we use a maximal number of iterations $T=100$ and for AI-EM and \cite{Abid:arXiv18}$^1$ a convergence accuracy parameter of $\epsilon=0.01$. 

\myparagraph{Data} We use the following generative model with additive noise. We randomly sample $A\in \Re^{m \times n}$ and  $\xi^* \in \Re^n$ from the corresponding standard normal distributions and perform a random permutation on the entries of $A \xi^*$ to obtain a vector $y \in \Re^m$. We further corrupt $y$ by adding to it a vector $\varepsilon \in \Re^m$ sampled from the zero-mean normal distribution with covariance matrix $\sigma^2 I_{m}$. The input data then consist of $A$ and $\tilde{y}:= y+\varepsilon$. 

\myparagraph{Metrics} We assess all methods by measuring the relative estimation error of $\xi^*$, e.g., if $\xi_{\text{AI-EM}}$ is the output of AI-EM, we report  
\begin{align}
100 \frac{\|\xi^* - \xi_{\text{AI-EM}}\|_2}{\|\xi^*\|_2} \%. 
\end{align} For AI-EM we further report the estimation error that corresponds to the best root $\xi^*_{\text{AI}}$ of the polynomial system $\tilde{\P}$, defined as 
\begin{align}
\xi^*_{\text{AI}} := \argmin_{\hat{\xi}_i, \, i \in [L]} \|\xi^* - (\tilde{\xi}_i)_{\Re}\|_2,
\end{align} as well as that of our estimated best root $\hat{\xi}_{\text{AI}}$ computed as in \eqref{eq:xi-AI}. 

\myparagraph{Results} Figures \ref{fig:m500n3}-\ref{fig:m500n5} depict the estimation error of the compared methods for fully shuffled data, for $n=3,4,5$, $\text{SNR}=0:10:100\text{ dB}$ and $m=500$ fixed, averaged over $100$ independent trials. Evidently, both \cite{Abid:arXiv18}$^1$ and \cite{Abid:arXiv18}$^2$ fail\footnote{We have observed that for fully shuffled data, large $m$ and small $n$, \cite{Abid:arXiv18}$^2$ tends to drive the regression vector to zero.}. This is not surprising, since, when the data are fully shuffled, the least-squares initialization used by these methods is rather far from the ground truth $\xi^*$. The robust $\ell_1$ regression of \cite{Slawski:EJS19} fails as well, and again this is not a surprise, since with fully shuffled data the corruptions are no longer sparse. On the other hand, $\xi^*_{\text{AI}}$ remains relatively close to $\xi^*$ as $\sigma$ increases and our actual initialization $\xi_{\text{AI}}$ coincides with $\xi^*_{\text{AI}}$ for $\sigma \le 0.04$ and is slightly worse than $\xi^*_{\text{AI}}$ otherwise. Regardless, the alternating minimization further refines $\xi_{\text{AI}}$ leading to even lower errors than $\xi^*_{\text{AI}}$.  

\begin{figure*}
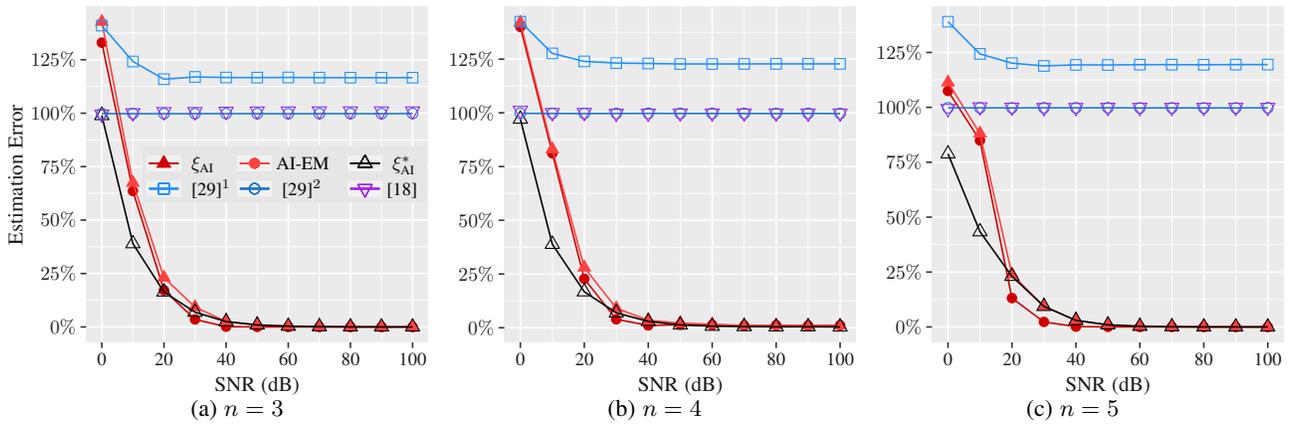

\centering
	\begin{subfigure}[h]{0.3\textwidth}
		\centering
		\hspace*{-0.6cm}
		\input{./figures/n=3.tex}
		\vspace*{-0.75cm}
		\caption{$n=3$} \label{fig:m500n3}
	\end{subfigure}
	\begin{subfigure}[h]{0.3\textwidth}
		\centering
		\hspace*{-0.2cm}
		\input{./figures/n=4.tex}
		\vspace*{-0.75cm}
		\caption{$n=4$} \label{fig:m500n4}
	\end{subfigure}
	\begin{subfigure}[h]{0.3\textwidth}
	\centering
	\input{./figures/n=5.tex}
	\vspace*{-0.75cm}
	\caption{$n=5$} \label{fig:m500n5}
	\end{subfigure}
	\caption{Estimation error for fully shuffled data, for different values of $n$ and $\text{SNR}$, with $m=500$ fixed. Average over $100$ trials. }
\end{figure*}

Fig. \ref{fig:partial10} depicts the estimation error for different 
percentages $0\% : 10 \% : 100\%$ of partially shuffled data, for $n=4, \, m=500$, and $\text{SNR}=40\text{dB}$ fixed. In such a case, only a subset of the entries of the vector $A \xi^*$ is shuffled according to a random permutation. As seen,   \cite{Slawski:EJS19}, \cite{Abid:arXiv18}$^1$ and \cite{Abid:arXiv18}$^2$ perform much better than for fully shuffled data. In fact, \cite{Slawski:EJS19}, \cite{Abid:arXiv18}$^1$ are comparable to AI-EM for up to $40\%-50\%$ shuffled data, upon which percentage they start deteriorating and eventually break down. On the other hand, \cite{Abid:arXiv18}$^2$ breaks down as soon as $20\%$ of the data have been shuffled. A more detailed behavior is also shown for $0\%:1\%:10\%$ shuffled data in the same setting. We see that \cite{Slawski:EJS19}, \cite{Abid:arXiv18}$^1$ and AI-EM are almost perfectly accurate, while \cite{Abid:arXiv18}$^2$ presents already an error of more than $10\%$ for $4\%$ shuffled data, and more than $60\%$ for $10\%$ shuffled data, i.e., \cite{Abid:arXiv18}$^2$ seems to be accurate only when the percentage of shuffled data is very small.

\begin{figure} 
\centering
	\begin{subfigure}[h]{0.3\textwidth}
		\centering
		\input{./figures/n=4_partial_100_SNR.tex}
	\end{subfigure}	
\end{figure}
\begin{figure} 
\centering
	\begin{subfigure}[h]{0.3\textwidth}
		\centering
		\input{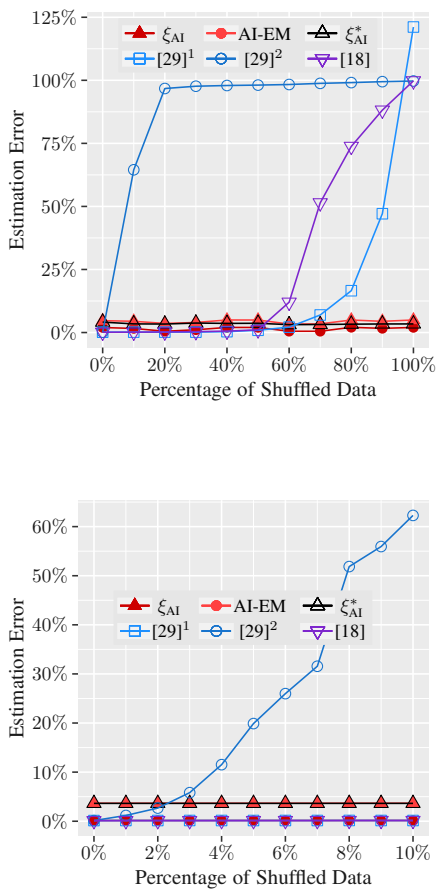}
	\end{subfigure}	
	\caption{Estimation error for partially shuffled data of varying percentages $0\%:10\%:100\%$ (top) and $0\%:1\%:10\%$ (bottom) for $m=500, \, n=4, \, \text{SNR}=40\text{dB}$ fixed ($100$ trials).} \label{fig:partial10}
\end{figure}

Table \ref{table:n2to6} provides numerical data for the estimation error and running times for different values of $n=3,4,5,6$. 
We see that for $n=3,4$ solving the polynomial system $\tilde{\P}$ via our custom solvers requires only a few milliseconds. On other hand, the running time increases exponentially with $45$ seconds required to solve the $n=5$ system and about $37$ minutes for  $n=6$. As expected, the running time of the Alternating Minimization (AM) remains practically unaffected by the values of $n$ that we are considering. Interestingly, the estimation error of the AI-EM algorithm is also stable and of the order of $0.1\%$ regardless of $n$. 

\begin{table}[h!] 
\centering
	\ra{1.3}
	\caption{Running time (in milliseconds) of polynomial solvers and alternating minimization, and estimation errors for varying $n$. $\text{SNR}=40\text{dB}, \, m=500$, fully shuffled data, $10$ trials.} \label{table:n2to6}
	\begin{tabular}{@{}rrccccc@{}}\toprule
		& \multicolumn{2}{c}{Running Time} & \phantom{c} & \multicolumn{3}{c}{Estimation Error} \\
		\cmidrule{2-3} \cmidrule{5-7} 
		$n$ & $\tilde{\mathcal{P}}$ & AM && $\xi_{\text{AI}}^*$ & $\xi_{\text{AI}}$ & AI-EM \\ \midrule
		$3$ & $0.7$  & $6$  && $2.3\%$ & $2.3\%$ & $0.1\%$\\
		$4$ & $9$  & $6$  && $3.9\%$ & $3.9\%$ & $0.1\%$\\
		$5$ & $45,157$ & $7$  && $1.4\%$ & $1.4\%$ & $0.1\%$\\
		$6$ & $2,243,952$ & $7$ && $1.2\%$ &$1.2\%$& $0.1\%$\\
		\bottomrule
	\end{tabular}
\end{table}

Table \ref{table:m} attests to the scalability of AI-EM in terms of $m$. Fixing $n=4$ and $\sigma=0.01$, the table reports running times and estimation errors for different values of $m$, ranging from $m=10^3$ to $m=10^5$. Indeed, solving the polynomial system requires $10$ msec for $m=10^3$ and only $268$ msec for $m=10^5$. The increase in the running time of the alternating minimization from $15$ msec to $1.3$ sec is more noticeable due to the complexity $\mathcal{O}(m \log m)$ of the sorting operation required to compute the optimal permutation at each iteration. For $m=1000$ the estimation error of AI-EM is only $0.4\%$, but as $m$ increases, the criterion \eqref{eq:xi-AI} for selecting the best root of $\tilde{\mathcal{P}}$ becomes less accurate; e.g., for $m=10^5$, $\xi_{\text{AI}}^*$ is $4.1\%$ away from $\xi^*$, as opposed to $6.8\%$ for $\xi_{\text{AI}}$.

\begin{table}[h!]\centering
	\ra{1.3}
	\caption{Run times (msec) and estimation errors for different $m$ with $\text{SNR}=40\text{dB}, \, n=4$ fixed. Fully shuffled data, $100$ trials.} \label{table:m}
	
	\begin{tabular}{@{}rrrcccc@{}}\toprule
		& \multicolumn{2}{c}{Running Time} & \phantom{c} & \multicolumn{3}{c}{Estimation Error} \\
		\cmidrule{2-3} \cmidrule{5-7} 
		$m$ & $\tilde{\mathcal{P}}$ & AM && $\xi_{\text{AI}}^*$ & $\xi_{\text{AI}}$ & AI-EM \\ \midrule
		$1,000$ &$10$& $15$  && $3.2\%$ & $3.3\%$ & $0.4\%$\\
		$5,000$ &$21$& $105$  && $3.5\%$ & $4.8\%$ & $2.5\%$\\
		$10,000$ &$32$& $281$ && $4.1\%$ &$5.9\%$ & $4.3\%$ \\
		$25,000$ &$66$&$357$ && $3.4\%$ & $5.2\%$ & $4.4\%$\\
		$50,000$ &$126$&$613$ && $3.7\%$ & $6.0\%$ & $5.5\%$\\
		$100,000$ &$268$&$1,271$ && $4.1\%$ & $6.8\%$ & $6.5\%$\\
		\bottomrule  
	\end{tabular}
\end{table} 

\section{Conclusions}
We established the theory of an algebraic geometric solution to the problem of linear regression without correspondences. The main object of interest was a polynomial system $\mathcal{P}$ of $n$ equations in $n$ unknowns constraining the parameters $x\in \Re^n$ of the linear regression model. The main theoretical contribution was establishing that $\mathcal{P}$ is almost always consistent even for corrupted data with at most $n!$ complex roots. Our algorithmic proposal consisted of solving $\mathcal{P}$ and using its most appropriate root as initialization to the Expectation-Maximization algorithm. This yielded an efficient solution for small values of $n$, stable under mild levels of noise and outperforming existing state-of-the-art methods. 

\appendices


\setcounter{prp}{0}
\setcounter{ex}{0}
\setcounter{thm}{0}
\setcounter{dfn}{0}
\renewcommand{\theprp}{\Alph{section}\arabic{prp}}
\renewcommand{\theex}{\Alph{section}\arabic{ex}}
\renewcommand{\thelem}{\Alph{section}\arabic{lem}}
\renewcommand{\thethm}{\Alph{section}\arabic{thm}}
\renewcommand{\thedfn}{\Alph{section}\arabic{dfn}}

\section{Geometric characterization of dimension} \label{appendix:geometric-dimension}
Let $f_1,\dots,f_s$ be polynomials in $\Re[x]=\Re[x_1,\dots,x_n]$ and consider their common root locus $\V_{\Re^n}(f_1,\dots,f_s)$, called an \emph{algebraic variety}, defined as
\begin{align}
\V_{\Re^n}(f_1,\dots,f_s) := \{\xi \in \Re^n: \, f_k(\xi)=0, \, \forall k \in [s]\}.
\end{align} 
What is the dimension of $\V_{\Re^n}(f_1,\dots,f_s)$? If $s=1$, then we have a single equation and one intuitively expects $\V_{\Re^n}(f_1)$ to be a hypersurface of $\Re^n$ having dimension $n-1$; this is in analogy with linear algebra, where a single linear equation defines a linear subspace of dimension one less than the ambient dimension. However, as Example \ref{ex:InfiniteComplex} shows, it may be the case that $\V_{\Re^n}(f_1)$ consists of a single point (or even no points at all), in which case $\dim \V_{\Re^n}(f_1)$ should be zero (or $-1$ if the variety is empty), in analogy with linear algebra where a linear subspace has zero dimension only if it contains a single point (the origin $0$). 

To resolve the above issue and have a consistent definition of dimension that generalizes the linear algebraic one, it is necessary that we consider the common root locus of the polynomials in the \emph{algebraic closure} $\Ce$ of $\Re$:
\begin{align}
\V_{\Ce^n}(f_1,\dots,f_s) := \{\xi \in \Ce^n: \, f_k(\xi)=0, \, \forall k \in [s]\}.
\end{align} In that case, there is a well developed theory (\cite{Hartshorne-1977,Matsumura-2006,Eisenbud-2004}) that leads to consistent characterizations of the dimension of the geometric object $\V_{\Ce^n}(f_1,\dots,f_s) \subset \Ce^n$ and that of its algebraic counterpart $\{f_1,\dots,f_s\} \subset \Ce[x]$. The next definition provides the geometric characterization of $\dim \V_{\Ce^n}(f_1,\dots,f_s)$.

\begin{dfn} \label{dfn:DimensionGeometric}
Defining $\Y \subset \Ce^n$ to be closed if it is of the form $\Y=\V(g_1,\dots,g_r)$ for some polynomials $g_1,\dots,g_r \in \Ce[x]$, and irreducible if it is not the union of two proper closed subsets, $\dim \V_{\Ce^n}(f_1,\dots,f_s)$ is defined to be the largest\footnote{The acute reader may notice that there is a-priori no guarantee that such a maximal integer exists. However, this is true because $\Ce^n$ is a \emph{Noetherian topological space}, a technical notion that is beyond the scope of this paper.} non-negative integer $d$ such that there exists a chain of the form 
\begin{align}
\V_{\Ce^n}(f_1,\dots,f_s) \supset \Y_0 \supsetneq \Y_1 \supsetneq \Y_2 \supsetneq \cdots \supsetneq \Y_d,
\end{align} where each $\Y_i$ is a closed irreducible subset of $\V_{\Ce^n}(f_1,\dots,f_s)$.
\end{dfn} 

\noindent Definition \ref{dfn:DimensionGeometric} is a generalization of the notion of dimension in linear algebra: if $\Y$ is a linear subspace of $\Ce^n$, then $\dim \Y$ is precisely equal to the maximal length of a descending chain of linear subspaces that starts with $\Y$; one can get such a chain by removing a single basis vector of $\Y$ at each step\footnote{For more information on the algebraic geometric structure of linear subspaces the reader is referred to Appendix C in \cite{Tsakiris:SIAM17}.}. 

\begin{ex} \label{ex:VectorSpaceChain}
With $e_i$ the vector with zeros everywhere except a $1$ at position $i$, and $\Y_i=\Span(e_1,\dots,e_{n-i})$, $\Ce^n$ admits a chain
\begin{align}
\Ce^n=\Y_0 \supsetneq \Y_1 \supsetneq \Y_2 \supsetneq \cdots \supsetneq \Y_{n-1} \supsetneq \Y_n:=\{0\}.
\end{align}
\end{ex}

\noindent A very important structural fact about algebraic varieties is the following decomposition theorem.
\begin{prp} \label{prp:IrreducibleDecomposition}
Let $\Y = \V_{\Ce^n}(f_1,\dots,f_s)$ for some $f_i \in \Ce[x]$. Then $\Y$ can be written uniquely as 
$\Y = \Y_1 \cup \dots \cup \Y_{\ell}$, for some positive integer $\ell$, where the $\Y_i$ are irreducible closed sets of $\Ce^n$ (see Definition \ref{dfn:DimensionGeometric}), and they are minimal, in the sense that if one removes one of the $\Y_i$, the resulting union is a strictly smaller set than $\Y$. The $\Y_i$ are called the irreducible components of $\Y$. 
\end{prp}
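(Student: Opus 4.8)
The plan is to run the classical argument that in a Noetherian topological space every closed subset decomposes uniquely into irreducible components, specialized to $\Ce^n$ with its Zariski topology (closed sets $=$ algebraic varieties). First I would isolate the one structural input that makes everything work: $\Ce^n$ is \emph{Noetherian}, i.e.\ it satisfies the descending chain condition on closed subsets. This follows from Hilbert's basis theorem — the ring $\Ce[x]=\Ce[x_1,\dots,x_n]$ is Noetherian, so ideals satisfy the ascending chain condition; since $\I \mapsto \V_{\Ce^n}(\I)$ is inclusion-reversing, a strictly descending chain of closed subsets would produce a strictly ascending chain of ideals, which is impossible. I would cite \cite{Hartshorne-1977} or \cite{Eisenbud-2004} for this and treat it as given.

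For existence, I would argue by contradiction using Noetherianity. Let $\Sigma$ be the collection of closed subsets of $\Ce^n$ that cannot be written as a finite union of irreducible closed sets, and suppose $\Sigma \neq \emptyset$. By the descending chain condition $\Sigma$ has a minimal element $Z$. Then $Z$ is not irreducible (an irreducible set is trivially a union of one irreducible), so $Z = Z' \cup Z''$ with $Z',Z''$ proper closed subsets of $Z$. By minimality neither lies in $\Sigma$, so each is a finite union of irreducibles, hence so is $Z$, contradicting $Z \in \Sigma$. Thus $\Sigma = \emptyset$, and in particular $\Y = \Y_1 \cup \cdots \cup \Y_\ell$ for irreducible closed $\Y_i$. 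Discarding any $\Y_i$ contained in some $\Y_j$ (finitely many such steps, again by finiteness) yields an \emph{irredundant} decomposition, i.e.\ one in which removing any component strictly shrinks the union.

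For uniqueness, suppose $\Y = \Y_1 \cup \cdots \cup \Y_\ell = \W_1 \cup \cdots \cup \W_m$ are two irredundant decompositions into irreducibles. Fixing $i$, I would write $\Y_i = \bigcup_{j=1}^m (\Y_i \cap \W_j)$ as a finite union of closed subsets; irreducibility of $\Y_i$ forces $\Y_i = \Y_i \cap \W_{j(i)}$ for some $j(i)$, i.e.\ $\Y_i \subseteq \W_{j(i)}$. Applying the same reasoning with the roles of the two families reversed gives $\W_{j(i)} \subseteq \Y_k$ for some $k$, so $\Y_i \subseteq \Y_k$; irredundancy then forces $k=i$ and hence $\Y_i = \W_{j(i)}$. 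Therefore $i \mapsto j(i)$ is injective; by symmetry $\ell = m$ and the two families coincide up to reordering.

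The only genuinely nontrivial ingredient is the Noetherian property of $\Ce^n$; the remaining two arguments are purely formal consequences of it together with the definition of irreducibility. Accordingly, the main (and essentially the sole) obstacle is simply to state and invoke Hilbert's basis theorem correctly, after which I would keep the exposition at the level of the two short arguments sketched above rather than expanding routine set-theoretic bookkeeping.
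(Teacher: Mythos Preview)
Your argument is correct and is exactly the classical proof (as in Hartshorne, Proposition~I.1.5). Note, however, that the paper does not actually prove this proposition: it is stated in the appendix as a standard structural fact from algebraic geometry, with the surrounding text pointing to references such as \cite{Hartshorne-1977}, \cite{Matsumura-2006}, \cite{Eisenbud-2004} for the underlying theory. So there is nothing to compare against beyond observing that your write-up supplies precisely the textbook justification the paper chose to omit.
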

 
\noindent Definition \ref{dfn:DimensionGeometric} together with Proposition \ref{prp:IrreducibleDecomposition} ensure that the only algebraic varieties $\V_{\Ce^n}(f_1,\dots,f_s)$ that have dimension zero are the ones that consist of a finite number of points; these are precisely the varieties of interest in this paper.

\begin{prp} \label{prp:ZeroDimensionFinitePoints}
Let $\Y = \V_{\Ce^n}(f_1,\dots,f_s)$. Then $\dim \Y = 0$ if and only if $\Y$ consists of a finite number of points of $\Ce^n$.
\end{prp}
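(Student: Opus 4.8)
\textbf{Proof plan for Proposition \ref{prp:ZeroDimensionFinitePoints}.}

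The plan is to prove both directions by exploiting the irreducible decomposition from Proposition \ref{prp:IrreducibleDecomposition} together with the chain-based Definition \ref{dfn:DimensionGeometric}. First I would establish the easy direction: if $\Y$ is a finite set of points, then $\dim \Y = 0$. Since $\Y$ is finite, each singleton $\{\xi\}$ is closed (it is $\V_{\Ce^n}(x_1 - \xi_1, \dots, x_n - \xi_n)$) and irreducible (a one-point set cannot be written as a union of two proper closed subsets, since a proper subset of a point is empty and the empty set is not allowed as one of the two pieces — or one simply notes there is no nontrivial way to split it). Hence a chain $\Y \supset \Y_0 \supsetneq \cdots \supsetneq \Y_d$ of closed irreducible subsets can have length at most $0$: any such $\Y_0$ is a nonempty closed irreducible subset of a finite set, hence itself a finite closed set, and being irreducible it must be a single point, so no strictly smaller nonempty closed subset exists. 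Therefore $\dim \Y = 0$ when $\Y$ is nonempty and finite (and if $\Y$ is empty the dimension is taken to be $-1$ by the convention noted after Definition \ref{dfn:DimensionGeometric}, so the statement is understood for nonempty $\Y$).

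For the converse, suppose $\dim \Y = 0$. By Proposition \ref{prp:IrreducibleDecomposition} write $\Y = \Y_1 \cup \cdots \cup \Y_{\ell}$ with each $\Y_i$ closed and irreducible. It suffices to show each $\Y_i$ is a single point, since then $\Y$ is a union of $\ell$ points. Fix $i$ and suppose for contradiction that $\Y_i$ contains two distinct points $\xi \ne \xi'$. The key step is to produce a strictly descending chain of closed irreducible subsets inside $\Y_i$ of length at least $1$, which would give $\dim \Y \ge 1$, contradicting the hypothesis. The natural candidate for the smaller set is $\{\xi\}$, which is closed, irreducible, nonempty, and strictly contained in $\Y_i$ (because $\xi' \in \Y_i \setminus \{\xi\}$). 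This yields the chain $\Y \supset \Y_i \supsetneq \{\xi\}$, so $\dim \Y \ge 1$, a contradiction. Hence each $\Y_i$ is a single point and $\Y$ is finite.

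The main obstacle, and the one point that deserves care rather than a one-line dismissal, is the claim that a nonempty closed irreducible subset of a finite set must be a single point, and more generally that $\{\xi\}$ is genuinely closed and irreducible in the Zariski topology on $\Ce^n$. Closedness of $\{\xi\}$ is immediate from exhibiting it as the vanishing locus of the linear polynomials $x_j - \xi_j$; irreducibility follows because if $\{\xi\} = A \cup B$ with $A, B$ closed and both proper, then one of them, say $A$, would have to be empty and the other equal to $\{\xi\}$, contradicting properness of $B$ unless $B = \{\xi\}$ — so no decomposition into two proper closed subsets exists. The only subtlety is making sure the definition of ``irreducible'' used (``not the union of two proper closed subsets'') is applied consistently, i.e. that we are not allowing the empty set to count as a legitimate proper piece in a way that would make everything reducible; under the standard convention the empty set is not irreducible and a singleton is, which is exactly what the argument needs. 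Everything else is bookkeeping with the definition of dimension as the sup of chain lengths.
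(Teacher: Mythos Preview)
Your argument is correct. Note, however, that the paper does not actually supply a proof of this proposition: it is stated in Appendix~\ref{appendix:geometric-dimension} as a standard fact from algebraic geometry, alongside Definition~\ref{dfn:DimensionGeometric} and Proposition~\ref{prp:IrreducibleDecomposition}, without further justification. Your proposal fills in exactly the elementary argument one would expect, using only the tools the appendix makes available (the chain definition of dimension and the irreducible decomposition), so it is well aligned with the paper's exposition even though there is no original proof to compare against.
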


\setcounter{prp}{0}
\setcounter{ex}{0}
\setcounter{thm}{0}
\setcounter{dfn}{0}
\renewcommand{\theprp}{\Alph{section}\arabic{prp}}
\renewcommand{\theex}{\Alph{section}\arabic{ex}}
\renewcommand{\thelem}{\Alph{section}\arabic{lem}}
\renewcommand{\thethm}{\Alph{section}\arabic{thm}}
\renewcommand{\thedfn}{\Alph{section}\arabic{dfn}}

\section{Algebraic characterization of dimension}
Even though Definition \ref{dfn:DimensionGeometric} is quite intuitive, it is not as convenient to use in practice, since one is usually given polynomials $f_1,\dots,f_s$ and wants to determine whether $\V_{\Ce^n}(f_1,\dots,f_s)$ has zero dimension, without having to solve the polynomial system. This is the case in this paper, where, e.g., to prove Theorem \ref{thm:Main} we need to show that $\V_{\Ce^n}(\hat{p}_1,\dots,\hat{p}_n)$ has zero dimension for any suitable $A,y$: clearly, computing the common root locus of $\hat{p}_1,\dots,\hat{p}_n,$ as a function of $A, y$, is extremely challenging if not impossible (except for $n=1,2$). This is precisely where the algebraic characterization of $\dim \V_{\Ce^n}(f_1,\dots,f_s)$ comes in handy, since it allows its computation solely from the algebraic structure of $f_1,\dots,f_s$. 

To introduce this algebraic notion of dimension we first need the notion of an \emph{ideal} of $\Ce[x]$. Given polynomials $f_1,\dots,f_s \in \Ce[x]$, the ideal generated by these polynomials, denoted by $(f_1,\dots,f_s)$, is the set of all linear combinations of the $f_i$, but in contrast to linear algebra, the coefficients of the linear combination are allowed to be polynomials themselves:
\vspace{-0.09in}
\begin{align}
(f_1,\dots,f_s) := \Bigg\{\sum_{i=1}^ s g_i f_i, \, \forall g_i \in \Ce[x]\Bigg\}.
\end{align} Next we need the notion of a \emph{prime ideal}. An ideal $\P \subsetneq \Ce[x]$ is called prime if it satisfies the following property: whenever the product of two polynomials is inside $\P$, then at least one of these polynomials must be inside $\P$. With that we have:

\begin{prp} \label{prp:DimensionAlgebraic}
$\dim \V_{\Ce^n}(f_1,\dots,f_s)$ is the largest non-negative integer $d$ such that there exists a chain of the form 
\begin{align}
\P_0 \supsetneq \P_1 \supsetneq \P_2 \supsetneq \cdots \supsetneq \P_d \supset (f_1,\dots,f_s),
\end{align} where each $\P_i$ is a prime ideal of $\Ce[x]$.
\end{prp}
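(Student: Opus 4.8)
The plan is to deduce Proposition \ref{prp:DimensionAlgebraic} from Definition \ref{dfn:DimensionGeometric} by means of the classical dictionary between closed subsets of $\Ce^n$ and radical ideals of $\Ce[x]$. Concretely, I would introduce the two inclusion-reversing operators: for a subset $\Y \subseteq \Ce^n$, the ideal $\mathcal{I}(\Y) := \{f \in \Ce[x] : f(\xi) = 0 \ \text{for all} \ \xi \in \Y\}$, and for an ideal $J \subseteq \Ce[x]$, the closed set $\V(J) := \{\xi \in \Ce^n : f(\xi) = 0 \ \text{for all} \ f \in J\}$. The facts to record first are that $\V$ and $\mathcal{I}$ are order-reversing, that $\V(\mathcal{I}(\Y)) = \Y$ whenever $\Y$ is closed, and — by Hilbert's Nullstellensatz (see, e.g., \cite{Hartshorne-1977,Eisenbud-2004}) — that $\mathcal{I}(\V(J)) = \sqrt{J}$. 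Consequently $\V$ and $\mathcal{I}$ restrict to mutually inverse, inclusion-reversing bijections between the closed subsets of $\Ce^n$ and the radical ideals of $\Ce[x]$.

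The second step is to upgrade this to a bijection between \emph{irreducible} closed subsets and \emph{prime} ideals. I would argue both directions directly. If $\Y$ is closed irreducible and $fg \in \mathcal{I}(\Y)$, then $\Y = (\Y \cap \V(f)) \cup (\Y \cap \V(g))$ is a union of two closed subsets, so irreducibility forces $\Y \subseteq \V(f)$ or $\Y \subseteq \V(g)$, i.e. $f \in \mathcal{I}(\Y)$ or $g \in \mathcal{I}(\Y)$; hence $\mathcal{I}(\Y)$ is prime. Conversely, if $\mathcal{I}(\Y) =: \P$ is prime and $\Y = \Y_1 \cup \Y_2$ with each $\Y_i$ a proper closed subset, then $\mathcal{I}(\Y_1)\mathcal{I}(\Y_2) \subseteq \mathcal{I}(\Y_1) \cap \mathcal{I}(\Y_2) = \mathcal{I}(\Y) = \P$, so $\mathcal{I}(\Y_i) \subseteq \P$ for some $i$, and applying $\V$ gives $\Y = \V(\P) \subseteq \Y_i$, a contradiction; hence $\Y$ is irreducible. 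I would also note the bookkeeping fact that, for the fixed ideal $I := (f_1,\dots,f_s)$, a closed set $\Y$ satisfies $\Y \subseteq \V_{\Ce^n}(f_1,\dots,f_s)$ if and only if $I \subseteq \mathcal{I}(\Y)$: one direction follows from order-reversal plus $\mathcal{I}(\V(I)) \supseteq I$, the other from order-reversal plus $\V(\mathcal{I}(\Y)) = \Y$.

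With these pieces the conclusion is purely formal. Given a chain $\V_{\Ce^n}(f_1,\dots,f_s) \supseteq \Y_0 \supsetneq \Y_1 \supsetneq \cdots \supsetneq \Y_d$ of closed irreducible subsets as in Definition \ref{dfn:DimensionGeometric}, setting $\P_i := \mathcal{I}(\Y_{d-i})$ produces prime ideals of $\Ce[x]$ with $\P_0 \supsetneq \P_1 \supsetneq \cdots \supsetneq \P_d \supseteq (f_1,\dots,f_s)$, the strictness of each inclusion being preserved because $\mathcal{I}$ is injective on closed sets; conversely, applying $\V$ to a prime chain of the latter form yields a geometric chain of the former form, again with strictness preserved. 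Therefore the maximal length of a chain of the first kind equals that of a chain of the second kind, which is precisely the assertion of the proposition (the existence of a maximum is already guaranteed on the geometric side by the footnote to Definition \ref{dfn:DimensionGeometric} and transfers across the bijection).

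I expect the only non-formal input to be the Nullstellensatz, specifically the equality $\mathcal{I}(\V(J)) = \sqrt{J}$; everything else is elementary manipulation of the $\V$–$\mathcal{I}$ correspondence, so for this expository appendix I would cite it rather than prove it. The one subtlety worth stating explicitly is that the proposition concerns prime ideals of $\Ce[x]$ containing $(f_1,\dots,f_s)$, whereas the Nullstellensatz naturally produces the radical; this causes no harm, since a prime ideal contains $(f_1,\dots,f_s)$ if and only if it contains $\sqrt{(f_1,\dots,f_s)}$, so the correspondence above matches chains on both sides verbatim.
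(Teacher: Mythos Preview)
The paper does not actually prove this proposition: it is stated in the appendix as a standard fact from algebraic geometry, accompanied only by the remark that ``the main tool behind the proof of Proposition~\ref{prp:DimensionAlgebraic} is the famous \emph{Hilbert's Nullstellensatz}'' (which is then stated as Proposition~\ref{prp:Nullstellensatz}). Your argument is correct and is precisely the standard route the paper is alluding to: the $\V$--$\mathcal{I}$ Galois correspondence, upgraded via the Nullstellensatz to an inclusion-reversing bijection between irreducible closed subsets of $\Ce^n$ and prime ideals of $\Ce[x]$, which then transports chains of one kind to chains of the other. So your proposal fills in exactly what the paper leaves to the references.
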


\begin{ex} \label{ex:VectorSpacePrimeChain}
Continuing with Example \ref{ex:VectorSpaceChain}, $\Ce^n = \V_{\Ce^n}(0)$ and
\begin{align}
\Y_i:=\Span(e_1,\dots,e_{n-i})=\V_{\Ce^n}(x_{n-i+1},\dots,x_n).
\end{align} Since every ideal of the form $(x_{n-i+1},\dots,x_n)$ is prime\footnote{For a justification, the reader is referred to Proposition 53 in \cite{Tsakiris:SIAM17}.}, we have the ascending chain of prime ideals of length $n$:
\begin{align}
(x_1,\dots,x_n) \supsetneq (x_2,\dots,x_n)  \supsetneq \cdots \supsetneq (x_n) \supsetneq(0).
\end{align}
\end{ex}

\noindent We close this section by noting that the main tool behind the proof of Proposition \ref{prp:DimensionAlgebraic} is the famous \emph{Hilbert's Nullstellensatz}, stated next, which holds over $\Ce$ but not over $\Re$. This is why we need to work over $\Ce$ to get consistent geometric and algebraic characterizations of the dimension of an algebraic variety.

\begin{prp} \label{prp:Nullstellensatz}
Let $f_1,\dots,f_s$ be polynomials of $\Ce[x]$. Then
\begin{itemize} \item $\V_{\Ce^n}(f_1,\dots,f_s) = \emptyset$ if and only if $1 \in (f_1,\dots,f_s)$.
\item Suppose that $\V_{\Ce^n}(f_1,\dots,f_s) \neq \emptyset$ and let $f$ be a polynomial such that $f(\xi)=0, \, \forall \xi \in \V_{\Ce^n}(f_1,\dots,f_s)$. Then $f^{\ell} \in (f_1,\dots,f_s)$ for some positive integer $\ell$.
\end{itemize}
\end{prp}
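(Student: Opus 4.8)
The plan is to prove Hilbert's Nullstellensatz in the classical two-step fashion: first establish the \emph{weak} form (the first bullet), then deduce the \emph{strong} form (the second bullet) from it via the Rabinowitsch trick. Since this is a standard result whose complete proof rests on some commutative algebra, and since this appendix is expository, I would present the argument at a moderate level of detail and cite a standard reference (e.g. \cite{Eisenbud-2004,Hartshorne-1977,Matsumura-2006}) for the one genuinely technical ingredient.

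For the weak form, the implication ``$1 \in (f_1,\dots,f_s) \Rightarrow \V_{\Ce^n}(f_1,\dots,f_s) = \emptyset$'' is immediate: writing $1 = \sum_i g_i f_i$ and evaluating at a hypothetical common zero $\xi$ would give $1 = 0$. For the converse I would argue by contrapositive: if $1 \notin I := (f_1,\dots,f_s)$, then $I$ is a proper ideal and hence contained in some maximal ideal $\mathfrak{m} \subsetneq \Ce[x]$. The key input, which I would quote as a black box, is \emph{Zariski's lemma}: a field that is finitely generated as an algebra over a field $k$ is module-finite, hence algebraic, over $k$. Applying it to the field $\Ce[x]/\mathfrak{m}$ over $\Ce$, and using that $\Ce$ is algebraically closed, forces $\Ce[x]/\mathfrak{m} \cong \Ce$; the images $a_1,\dots,a_n \in \Ce$ of $x_1,\dots,x_n$ under the quotient map then define a point $a \in \Ce^n$ at which every $f_j$ vanishes (since $f_j \in \mathfrak{m}$), so $\V_{\Ce^n}(f_1,\dots,f_s)$ is nonempty.

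For the strong form, I would assume $\V_{\Ce^n}(f_1,\dots,f_s) \neq \emptyset$ and that $f$ vanishes on it; the case $f \equiv 0$ is trivial, so assume $f \not\equiv 0$. Adjoining a fresh variable $t$, one checks that the polynomials $f_1,\dots,f_s,\, 1 - t f$ have no common zero in $\Ce^{n+1}$: at any such zero $(a,b)$ one would have $a \in \V_{\Ce^n}(f_1,\dots,f_s)$, hence $f(a) = 0$, contradicting $1 - b f(a) = 0$. By the weak form, $1 = \sum_i g_i(x,t)\, f_i(x) + h(x,t)\,(1 - t f(x))$ for some $g_i, h \in \Ce[x,t]$. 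Substituting $t = 1/f(x)$ (working in the field $\Ce(x_1,\dots,x_n)$, which is legitimate since $f \not\equiv 0$) annihilates the last summand, and multiplying through by a sufficiently high power $f^{\ell}$ to clear denominators yields $f^{\ell} = \sum_i \tilde g_i(x)\, f_i(x) \in (f_1,\dots,f_s)$, as claimed.

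The only real obstacle is Zariski's lemma, whose proof rests on Noether normalization (or, alternatively, on the Artin--Tate lemma); everything else in the argument is formal manipulation. In keeping with the expository tone of the appendix, I would present this lemma as a cited external fact and keep the remainder of the proof self-contained.
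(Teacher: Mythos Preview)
The paper does not actually prove this proposition: it is stated as the classical Hilbert's Nullstellensatz, presented in the expository appendix as a known fact (``the famous Hilbert's Nullstellensatz, stated next'') and used as a black box elsewhere in the proofs. So there is no ``paper's own proof'' to compare against.

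That said, your proposal is a correct and standard proof of the Nullstellensatz. The weak form via Zariski's lemma (maximal ideals of $\Ce[x]$ correspond to points of $\Ce^n$) and the strong form via the Rabinowitsch trick (adjoin $t$, apply the weak form to $f_1,\dots,f_s,1-tf$, then substitute $t=1/f$ and clear denominators) is exactly the textbook route, and your handling of the edge cases ($f\equiv 0$, the contrapositive for the weak form) is fine. Citing Zariski's lemma as an external input is appropriate given the appendix's expository tone and is consistent with how the paper treats the other commutative-algebra facts in these appendices.
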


\setcounter{prp}{0}
\setcounter{ex}{0}
\setcounter{thm}{0}
\setcounter{dfn}{0}
\renewcommand{\theprp}{\Alph{section}\arabic{prp}}
\renewcommand{\theex}{\Alph{section}\arabic{ex}}
\renewcommand{\thelem}{\Alph{section}\arabic{lem}}
\renewcommand{\thethm}{\Alph{section}\arabic{thm}}
\renewcommand{\thedfn}{\Alph{section}\arabic{dfn}}

\section{Dimension and homogenization} \label{subsubsection:Homogenization}
A monomial of degree $d$ is a polynomial of the form $x^{\alpha}:= x_1^{\alpha_1} \cdots x_n^{\alpha_n}$, where $\alpha =[\alpha_1,\dots,\alpha_n]^\top$ is a vector of non-negative integers such that $\alpha_1+\cdots+\alpha_n=d$. Every polynomial $f$ can be uniquely written as a linear combination of monomials
\begin{align}
f= \sum_{\alpha \in \mathcal{A}} c_{\alpha} x^{\alpha}  \in \Ce[x], \label{eq:f-Monomials}
\end{align} where $\mathcal{A}$ is a finite set of \emph{multi-exponents} and $c_{\alpha} \in \Ce$ are the corresponding coefficients. Then $f$ is called \emph{homogeneous} of degree $d$ if all its monomials have the same degree $d$.  

Let $f_1,\dots,f_s$ be a set of polynomials of $\Ce[x]$. For rather subtle reasons beyond the scope of this paper, further characterizing the dimension of $\V_{\Ce^n}(f_1,\dots,f_s)$ beyond Proposition \ref{prp:DimensionAlgebraic} is simpler when all the $f_i$ are homogeneous (this will be discussed in the next section). When this is not the case, there is a simple procedure called \emph{homogenization}, through which we can convert non-homogenous polynomials to homogeneous.   

Suppose that the polynomial $f$ given in \eqref{eq:f-Monomials} is not homogeneous. Let $d$ be the maximal degree among all the monomials of $f$, i.e., $d = \max\{\sum_{i \in [n]} \alpha_i:  \, \alpha \in \mathcal{A}\}$. Then the homogenization of $f$ is a polynomial in the extended polynomial ring $\Ce[x,t]$ with one additional variable $t$, defined as 
\begin{align}
f^{(h)} := \sum_{\alpha \in \mathcal{A}} c_{\alpha} x^{\alpha} t^{d-\sum_{i \in [n]} \alpha_i}  \in \Ce[x,t].
\end{align}

\begin{ex}
A non-homogeneous polynomial $f$ of degree $6$ and its homogenization $f^{(h)}$:
\begin{align}
f &= x_1^3x_2^2x_3+x_2^2 +x_3 \in \Ce[x_1,x_2,x_3],\\
f^{(h)} &= x_1^3x_2^2x_3+x_2^2t^4 +x_3t^5 \in \Ce[x_1,x_2,x_3,t].
\end{align} 
\end{ex}

Now, let $\I$ be the ideal generated by some polynomials $f_1,\dots,f_s \in \Ce[x]$ and consider the homogenization of this ideal
\begin{align}
    \I^{(h)} = \{f^{(h)}: \, f \in \I \} \subset \Ce[x,t]. 
\end{align} We note here the subtle fact that $\I^{(h)}$ certainly contains 
 $f_1^{(h)},\dots,f_s^{(h)}$, but in principle it is larger than the ideal generated by $f_1^{(h)},\dots,f_s^{(h)}$, as the next example illustrates. 
 
 \begin{ex}
 Let $f_1=x_1^2+x_2,\, f_2=x_1^2+x_3$ be polynomials of $\Ce[x_1,x_2,x_3]$. Then 
 $f_1^{(h)}=x_1^2+x_2t,\, f_2^{(h)}=x_1^2+x_3t$.
 Now, the polynomial $x_2-x_3=f_1-f_2$ is in the ideal $\I=(f_1,f_2)$, and it is already homogeneous so that $x_2-x_3 \in \I^{(h)}$. However $x_2-x_3$ is not inside the ideal $(f_1^{(h)},f_2^{(h)})=(x_1^2+x_2t,x_1^2+x_3t)$, since the latter only contains elements of degree $2$ and higher.  
 \end{ex}
 
 Since the elements of $\I^{(h)}$ are polynomials in $n+1$ variables, they define an algebraic variety\footnote{Let $\J \subset \Ce[x]$ be an ideal. Then \emph{Hilbert's Basis Theorem} guarantees that $\J$ always has a finite set of generators, i.e., there is a positive integer $\ell$ and polynomials $g_1,\dots,g_{\ell} \in \Ce[x]$ such that $\J = (g_1,\dots,g_{\ell})$.}  $\Y^{(h)}=\V_{\Ce^{n+1}}\Big(\I^{(h)}\Big)$ of $\Ce^{n+1}$. What is the relationship between $\Y$ and $\Y^{(h)}$? It is actually not hard to see that if $[\xi_1,\dots,\xi_n]^\top$ is a point of $\Y$, then $\lambda [\xi_1,\dots,\xi_n,1]^\top$ is a point of $\Y^{(h)}$, for any $\lambda \in \Ce$. Hence any non-zero point $\xi$ of $\Y$ gives rise to an entire line inside $\Y^{(h)}$; this line passes through the origin and $\xi$, and its intersection with the hyperplane $t=1$ can be used to recover the original point $\xi$. Hence $\Y^{(h)}$ is called the \emph{affine cone} over $\Y$ with vertex $0 \in \Ce^{n+1}$. Moreover, the variety $\Y \subset \Ce^n$ is embedded inside the affine cone through a mapping that takes points to lines. In addition, $\Y^{(h)}$ contains so-called \emph{points at infinity}, which are obtained by setting $t=0$. As it turns out, there is a tight topological relationship between $\Y$ and $\Y^{(h)}$ and the important fact for our analysis is the following dimension theorem; see \cite{Tsakiris:AffinePAMI17} for a detailed discussion for the non-expert reader in the context of subspace clustering.

\begin{prp} \label{prp:AffineConeDimension}
Let $\Y$ be an algebraic variety of $\Ce^n$ and let $\Y^{(h)} \subset \Ce^{n+1}$ be its affine cone. Then $\dim \Y = \dim \Y^{(h)}-1$.
\end{prp}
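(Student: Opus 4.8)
The plan is to locate inside $\Y^{(h)}$ a dense open subset that is visibly a product $\Y \times (\Ce\setminus\{0\})$, and then to read the dimension off that product. Throughout I take $\Y\neq\emptyset$ (for the empty variety the statement is vacuous). Write $\Y = \V_{\Ce^n}(\I)$ with $\I=(f_1,\dots,f_s)$; recall that $\Y^{(h)}=\V_{\Ce^{n+1}}(\I^{(h)})$ where $\I^{(h)}=\{f^{(h)}:f\in\I\}$ is generated by homogeneous polynomials, and that for $f$ of degree $d$ one has the elementary identity $f^{(h)}(\zeta,\tau)=\tau^{\,d}f(\zeta/\tau)$ whenever $\tau\neq 0$. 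I will use one nontrivial black box from the theory of homogenization: \emph{$t$ is a non-zero-divisor modulo $\I^{(h)}$}; equivalently, no irreducible component of $\Y^{(h)}$ is contained in the hyperplane $\{t=0\}$.

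First I would describe the part of $\Y^{(h)}$ lying off $\{t=0\}$. Using the identity above, for $\tau\neq 0$ we have $f^{(h)}(\zeta,\tau)=0$ for all $f\in\I$ if and only if $f(\zeta/\tau)=0$ for all $f\in\I$; hence
\begin{align}
U:=\Y^{(h)}\cap\{t\neq 0\}=\bigl\{(\tau\xi,\tau):\xi\in\Y,\ \tau\in\Ce\setminus\{0\}\bigr\},
\end{align}
and the mutually inverse polynomial maps $(\xi,\tau)\mapsto(\tau\xi,\tau)$ and $(\zeta,\tau)\mapsto(\zeta/\tau,\tau)$ — the latter regular on $\{t\neq0\}$ — identify $U$ with $\Y\times(\Ce\setminus\{0\})$.

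Next I would check that $U$ is dense in $\Y^{(h)}$. Since $\I^{(h)}$ is generated by homogeneous polynomials, the same reasoning as in the proof of Theorem \ref{thm:Main} shows that $\Y^{(h)}$ is a union of lines through the origin, hence so is every one of its irreducible components; by the black box, none of these components lies in $\{t=0\}$, so each meets $U$, and therefore $\overline{U}=\Y^{(h)}$. It then remains only to invoke two structural facts about dimension in the sense of Definition \ref{dfn:DimensionGeometric}: a dense open subvariety has the same dimension as the ambient variety, and $\dim(\Y\times\Z)=\dim\Y+\dim\Z$ for varieties $\Y,\Z$ (here $\Z=\Ce\setminus\{0\}$, of dimension $1$). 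Chaining these,
\begin{align}
\dim\Y^{(h)}=\dim\overline{U}=\dim U=\dim\bigl(\Y\times(\Ce\setminus\{0\})\bigr)=\dim\Y+1.
\end{align}

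The main obstacle is that the two structural facts just quoted carry the real content (Noetherianity, and the behaviour of Krull dimension under localization and under adjoining an indeterminate), so in an expository appendix one must either cite them or prove them — most cleanly after reducing to $\Y$ irreducible via Proposition \ref{prp:IrreducibleDecomposition} (homogenization commutes with finite intersections of ideals): if $\P$ is the prime ideal of the irreducible $\Y$, localizing $\Ce[x,t]/\P^{(h)}$ at $t$ gives $(\Ce[x]/\P)[t,t^{-1}]$, whose Krull dimension is $\dim\Y+1$. A more elementary-looking but not genuinely easier route argues directly with the prime-chain characterization of Proposition \ref{prp:DimensionAlgebraic}: homogenization carries a maximal strictly descending chain of primes containing $\I$ to a strictly descending chain of homogeneous primes containing $\I^{(h)}$, which one can prolong by inserting $(x_1,\dots,x_n,t)$ at the top (it strictly contains $\P_0^{(h)}$, as $t\notin\P_0^{(h)}$), giving $\dim\Y^{(h)}\ge\dim\Y+1$; the reverse inequality — that no prime chain containing $\I^{(h)}$ can exceed length $\dim\Y+1$ — is where the genuine difficulty lies, because such a chain need not consist of homogeneous primes and must be straightened without losing length.
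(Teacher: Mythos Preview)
The paper does not supply its own proof of this proposition: it is stated as a known dimension theorem, with a reference to the literature for further discussion. So there is no in-paper argument to compare yours against.

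On its own merits, your argument is the standard one and is correct. The identification of $\Y^{(h)}\cap\{t\neq0\}$ with $\Y\times(\Ce\setminus\{0\})$ via the mutually inverse regular maps $(\xi,\tau)\mapsto(\tau\xi,\tau)$ and $(\zeta,\tau)\mapsto(\zeta/\tau,\tau)$ is exactly right. The black box you isolate --- that $t$ is a non-zero-divisor modulo $\I^{(h)}$, equivalently that no irreducible component of $\Y^{(h)}$ lies inside $\{t=0\}$ --- is indeed the one nontrivial ingredient specific to homogenization; it holds because setting $t=1$ in $\I^{(h)}$ recovers $\I$, so a primary component of $\I^{(h)}$ containing $t$ would vanish under this substitution and could not correspond to anything in $\I$. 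The two remaining dimension facts (a dense open subset of a variety has the same dimension as the variety; $\dim(\Y\times\Z)=\dim\Y+\dim\Z$) are standard results about Krull dimension that an expository appendix would cite rather than prove, exactly as you say. Your alternative sketch via prime chains and Proposition~\ref{prp:DimensionAlgebraic} is also valid, and you correctly identify the reverse inequality as the place where the real content hides.
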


\begin{ex}
Let $\Y$ be an affine line of $\Ce^2$ given by the equation $\alpha x_1 + \beta x_2 +\gamma=0$. Then $\Y^{(h)}$ is a plane through the origin in $\Ce^3$ given by the equation $\alpha x_1 + \beta x_2 +\gamma t=0$. 
\end{ex}

\noindent The next fact, known as \emph{Bezout's Theorem}, will be used in bounding the number of points of the zero-dimensional variety of Theorem \ref{thm:Main}. 
\begin{prp} \label{prp:Bezout}
Let $h_1,\dots,h_n$ be homogeneous polynomials of $\Ce[x,t]$ of degrees $\deg(h_i) = d_i, \, i \in [n]$. If $\V_{\Ce^{n+1}}(h_1,\dots,h_n)$ is a finite union of lines through the origin, then the number of these lines is at most $d_1 d_2 \cdots d_n$.
 \end{prp}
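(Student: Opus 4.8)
The statement to prove is Bezout's Theorem in the special form needed for Theorem~\ref{thm:Main}: if $h_1,\dots,h_n$ are homogeneous in $\Ce[x,t]$ (a ring in $n+1$ variables) and $\V_{\Ce^{n+1}}(h_1,\dots,h_n)$ is a finite union of lines through the origin, then the number of such lines is at most $\prod_i \deg(h_i)$. The plan is to pass to projective space $\mathbb{P}^n(\Ce)$, where a line through the origin in $\Ce^{n+1}$ corresponds to a single point. Under this correspondence the hypothesis says the projective variety $Z := \V_{\mathbb{P}^n}(h_1,\dots,h_n)$ is a finite set of points, and we must bound $|Z|$ by $d_1\cdots d_n$ with $d_i = \deg h_i$.

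First I would recall the classical projective Bezout theorem in the form most convenient here: for homogeneous polynomials $h_1,\dots,h_n$ in $n+1$ variables whose common projective zero locus is finite (equivalently, zero-dimensional), the total intersection multiplicity $\sum_{p \in Z} i(p; h_1,\dots,h_n)$ equals exactly $d_1 \cdots d_n$. Since each local intersection multiplicity $i(p;\cdot)$ is a positive integer, we immediately get $|Z| \le d_1 \cdots d_n$, which is the desired bound. The cleanest self-contained route is an induction on $n$: intersect $\V(h_1,\dots,h_{n-1})$ — which one argues (again by induction or by the projective dimension theorem, that each hypersurface section drops dimension by at most one) is a variety of dimension $\ge 1$, hence a curve (or higher) in $\mathbb{P}^n$ — with the hypersurface $\V(h_n)$, and track how the degree multiplies. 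Alternatively one can cite the Hilbert-polynomial/Hilbert-series argument: the Hilbert series of $\Ce[x,t]/(h_1,\dots,h_n)$, when this is a complete intersection (regular sequence), is $\prod_i (1-z^{d_i})/(1-z)^{n+1}$, whose evaluation governs the degree; but since we do not want to assume the $h_i$ form a regular sequence a priori, the inductive geometric argument or a direct appeal to a standard reference (Hartshorne, Fulton) is safer.

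Concretely, the steps in order: (i) set up the projective correspondence {lines through $0$} $\leftrightarrow$ {points of $\mathbb{P}^n$}, so that the hypothesis becomes ``$Z$ is finite''; (ii) establish, by induction on the number of equations and using that intersecting with a hypersurface drops dimension by at most $1$, that $V_r := \V(h_1,\dots,h_r)$ has dimension $\ge n-r$, so that with $r=n$ finiteness is exactly the ``expected dimension'' case; (iii) define the degree of a projective variety and prove the multiplicativity $\deg(V_{r} \cap \V(h_{r+1})) \le \deg(V_r)\cdot d_{r+1}$ when the intersection is proper, using a generic-hyperplane-section reduction to the curve case and then the fundamental fact that a homogeneous polynomial of degree $d$ meets a projective curve of degree $e$ in at most $de$ points (counted with multiplicity); (iv) conclude $|Z| = \deg(Z) \le d_1\cdots d_n$.

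The main obstacle is step (iii): proving multiplicativity of degree under proper intersection with a hypersurface, which is the genuine content of Bezout and requires either intersection-multiplicity bookkeeping or a Hilbert-polynomial computation. Handling intersection multiplicities correctly at non-transverse points (where the scheme-theoretic intersection is non-reduced) is the subtle part; for the purposes of this paper, since only the \emph{inequality} $|Z| \le d_1\cdots d_n$ is needed (not the exact equality), the simplest honest approach is to invoke the classical projective Bezout theorem from a standard reference and note that positivity of local multiplicities yields the inequality. I would therefore structure the write-up to either (a) cite Bezout and deduce the inequality in two lines, or (b) if a self-contained proof is wanted, carry out the generic-hyperplane induction down to the plane-curve case, where the bound $\#(\V(h) \cap C) \le \deg(h)\deg(C)$ follows from the fact that restricting $h$ to a parametrized/normalized curve component yields a nonzero form of the appropriate degree with at most that many roots.
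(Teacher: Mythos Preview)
Your proposal is a reasonable sketch of how one would prove the projective B\'ezout bound, and the reduction to $\mathbb{P}^n$ via lines-through-the-origin $\leftrightarrow$ projective points is exactly right. However, the paper does not actually prove this proposition: it is stated in the appendix as a known fact (``The next fact, known as \emph{Bezout's Theorem}, will be used\dots'') and is simply invoked where needed, with the implicit backing of standard references such as Hartshorne or Fulton. So there is no proof in the paper to compare against; your option~(a)---cite the classical projective B\'ezout theorem and deduce the inequality from positivity of local multiplicities---is precisely what the paper does (minus even that one line of justification). Your more detailed option~(b) goes well beyond what the paper attempts, and while the inductive outline is sound, carrying it out rigorously (especially the multiplicativity of degree under proper hypersurface intersection) would amount to reproving a substantial classical theorem, which is outside the scope of this paper's appendix.
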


\setcounter{prp}{0}
\setcounter{ex}{0}
\setcounter{thm}{0}
\setcounter{dfn}{0}
\renewcommand{\theprp}{\Alph{section}\arabic{prp}}
\renewcommand{\theex}{\Alph{section}\arabic{ex}}
\renewcommand{\thelem}{\Alph{section}\arabic{lem}}
\renewcommand{\thethm}{\Alph{section}\arabic{thm}}
\renewcommand{\thedfn}{\Alph{section}\arabic{dfn}}

\section{Regular sequences} \label{appendix:regular-sequences}
In \S \ref{subsection:SymmetricPolynomials} we argued that if the polynomials $\hat{p}_1,\dots,\hat{p}_n \in \Ce[x]$ are \emph{sufficiently independent}, then $\dim \V_{\Ce^n}(\hat{p}_1,\dots,\hat{p}_n) = 0$, i.e., the dimension of the algebraic variety drops precisely by the number $n$ of its defining equations. More generally, the precise notion of what \emph{sufficiently independent} should mean for polynomials $f_1,\dots, f_s, \, s \le n$, so that $\dim \V_{\Ce^n}(f_1,\dots,f_s)=n-s$, is easier to characterize when all the $f_i$ are homogeneous. The right notion is that of a \emph{regular sequence}.

\begin{dfn} \label{dfn:regular-sequence}
Let $f_1,\dots,f_s$ be polynomials of $\Ce[x]$. Then $f_1,\dots,f_s$ is a regular sequence if $(f_1,\dots,f_s) \subsetneq \Ce[x]$, and for every $i=2,\dots,s$ the following property is true: whenever there is a polynomial $g$ such that 
$f_i g \in (f_1,\dots,f_{i-1})$, then we must have $g \in (f_1,\dots,f_{i-1})$.
\end{dfn} \noindent The crucial fact for our analysis is the following.

\begin{prp} \label{prp:RegularSequences}
 Let $f_1,\dots,f_s, \, s \le n$, be non-constant homogeneous polynomials of $\Ce[x]$. Then $\dim \V_{\Ce^n}(f_1,\dots,f_s) = n-s$, if and only if $f_1,\dots,f_s$ is a regular sequence.
\end{prp}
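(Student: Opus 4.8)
The plan is to translate the statement into commutative algebra and then induct on $s$. By the algebraic characterization of dimension (Proposition \ref{prp:DimensionAlgebraic}), $\dim \V_{\Ce^n}(f_1,\dots,f_s)$ equals the Krull dimension of the graded ring $R_s := \Ce[x]/(f_1,\dots,f_s)$, so the claim becomes: $\dim R_s = n-s$ if and only if $f_1,\dots,f_s$ is a regular sequence. Put $R_0 := \Ce[x]$ and $R_i := \Ce[x]/(f_1,\dots,f_i)$, so $\dim R_0 = n$; since each $f_i$ is homogeneous of positive degree, $(f_1,\dots,f_s)$ is contained in the irrelevant ideal $(x_1,\dots,x_n)$ and is therefore proper, so the properness requirement of Definition \ref{dfn:regular-sequence} is automatic and only the successive nonzerodivisor conditions matter.

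For the forward implication I would argue by induction via the one-step lemma: if $g$ is homogeneous of positive degree $d$ and is a nonzerodivisor on a nonzero graded quotient $R_i$, then $\dim R_i/(g) = \dim R_i - 1$. This follows from the short exact sequence of graded modules $0 \to R_i(-d) \xrightarrow{\,g\,} R_i \to R_i/(g) \to 0$, which gives $H_{R_i/(g)}(t) = (1-t^d)\,H_{R_i}(t)$ for the Hilbert series; since $1 - t^d = (1-t)(1+t+\cdots+t^{d-1})$ introduces exactly one extra zero at $t=1$ while the cofactor does not vanish there, and since the Krull dimension of a finitely generated graded module equals the order of the pole of its Hilbert series at $t=1$ (Hilbert--Serre), the dimension drops by exactly one. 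Feeding in $g = f_{i+1}$, which is a nonzerodivisor on $R_i$ precisely because $f_1,\dots,f_s$ is a regular sequence, for $i = 0,\dots,s-1$ yields $\dim R_s = n - s$.

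For the converse, assume $\dim R_s = n - s$ and suppose toward a contradiction that $f_1,\dots,f_s$ is not a regular sequence; let $i \ge 1$ be minimal with $f_{i+1}$ a zerodivisor on $R_i$ (in particular $f_1 \ne 0$, else $\dim R_s \ge n-(s-1)$). Then $f_1,\dots,f_i$ is a regular sequence, so by the forward direction $\dim R_i = n - i$; moreover $R_i$ is the quotient of the Cohen--Macaulay ring $\Ce[x]$ by a regular sequence, hence itself Cohen--Macaulay \cite{Matsumura-2006,Eisenbud-2004}, hence equidimensional with no embedded primes. Consequently every associated prime of $R_i$ is a minimal prime $\mathfrak p$ with $\dim R_i/\mathfrak p = n - i$; choosing $\mathfrak p \in \Ass(R_i)$ with $f_{i+1} \in \mathfrak p$ (such exists since $f_{i+1}$ is a zerodivisor), $R_{i+1} = R_i/(f_{i+1})$ surjects onto $R_i/\mathfrak p$, so $\dim R_{i+1} \ge n - i$. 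Since passing to a quotient by one element lowers Krull dimension by at most one (Krull's principal ideal theorem), $\dim R_s \ge (n-i) - (s-i-1) = n - s + 1$, contradicting $\dim R_s = n-s$. Hence $f_1,\dots,f_s$ is regular.

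I expect the only real obstacle to be the Cohen--Macaulay input in the converse: that $\Ce[x]$ is Cohen--Macaulay, that this passes to quotients by regular sequences, and that it forces unmixedness (no embedded associated primes) --- essentially Macaulay's unmixedness theorem. Everything else is routine: the Hilbert-series bookkeeping in the forward direction (which could even be made self-contained along the lines of the other appendices), the Hilbert--Serre identification of dimension with the pole order, and the one-element dimension drop. A minor point worth flagging is that Definition \ref{dfn:regular-sequence} should be read as requiring the nonzerodivisor condition for $i = 1,\dots,s$ as well (equivalently $f_1 \ne 0$); without this the equivalence fails for trivially padded sequences, but this case never arises for the power-sum polynomials used in the body of the paper.
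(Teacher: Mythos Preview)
The paper does not actually prove this proposition: it is stated in Appendix~D as a background fact from commutative algebra, with implicit reference to \cite{Hartshorne-1977,Matsumura-2006,Eisenbud-2004}, and is then used as a black box in Lemmas~\ref{lem:DimensionElementarySymmetric}, \ref{lem:PowerSumsRegularSequence}, \ref{lem:Pbar} and in the proof of Theorem~\ref{thm:Main}. So there is no ``paper's own proof'' to compare your attempt against.

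That said, your argument is correct and is essentially the standard textbook proof. The forward direction via the short exact sequence $0\to R_i(-d)\to R_i\to R_i/(g)\to 0$ and the resulting Hilbert-series identity $H_{R_i/(g)}(t)=(1-t^d)H_{R_i}(t)$, together with the Hilbert--Serre identification of Krull dimension with the pole order at $t=1$, is clean and self-contained. For the converse, your use of the fact that $\Ce[x]$ is Cohen--Macaulay, that this passes to quotients by regular sequences, and hence that $R_i$ is unmixed (all associated primes minimal of the same dimension) is exactly the right tool; the contradiction via Krull's Hauptidealsatz then goes through as you wrote. Your flag about Definition~\ref{dfn:regular-sequence} omitting the $f_1\neq 0$ condition is also accurate: as written the definition allows $f_1=0$, which would break the equivalence, but since the proposition assumes the $f_i$ are non-constant this is harmless here.
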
 Given a regular sequence of  polynomials $f_1,\dots,f_s$ in $\Ce[x]$ of length $s<n$, it is of interest to be able to augment this sequence to a regular sequence $f_1,\dots,f_s,g$ of length $s+1$. The simplest type of a homogeneous polynomial $g$ that one may consider is a linear form $g = \ell^\top x$, which represents a hyperplane with normal vector $\ell \in \Ce^n$. As it turns out, almost all such hyperplanes qualify, with the exception of those with normal vector $\ell$ that lies inside an algebraic variety of $\Ce^n$ determined by $f_1,\dots,f_s$. 

\begin{prp} \label{prp:GenericHyperplanes}
Let $f_1,\dots,f_s, \, s < n$, be a regular sequence of homogeneous polynomials of $\Ce[x]$. If $\ell \in \Ce^n$ is a generic vector, then $f_1,\dots,f_s,\ell^\top x$ is a regular sequence.
\end{prp}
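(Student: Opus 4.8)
The plan is to prove the contrapositive-flavored statement directly: the set of normal vectors $\ell \in \Ce^n$ that \emph{fail} to extend $f_1,\dots,f_s$ to a regular sequence is contained in a proper algebraic subvariety of $\Ce^n$, hence is non-generic. First I would invoke Proposition \ref{prp:RegularSequences}: since $f_1,\dots,f_s$ is a regular sequence of non-constant homogeneous polynomials with $s<n$, the variety $\X := \V_{\Ce^n}(f_1,\dots,f_s)$ has dimension exactly $n-s \ge 1$. By Proposition \ref{prp:IrreducibleDecomposition} write $\X = \X_1 \cup \dots \cup \X_r$ into irreducible components; since $f_1,\dots,f_s$ is a regular sequence, each component has dimension exactly $n-s$ (this is the unmixedness property of complete intersections, which I would either cite or derive from Proposition \ref{prp:RegularSequences} applied componentwise). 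The linear form $\ell^\top x$ augments the sequence to a regular one if and only if, by Proposition \ref{prp:RegularSequences} again, $\dim \V_{\Ce^n}(f_1,\dots,f_s,\ell^\top x) = n-s-1$; equivalently, the hyperplane $H_\ell = \V(\ell^\top x)$ does not contain any irreducible component $\X_j$ of $\X$, so that cutting by $H_\ell$ lowers the dimension of each component by exactly one.

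Next I would argue that the bad set is algebraic. Each $\X_j$ is an irreducible cone (it is cut out by homogeneous polynomials, so it is stable under scaling), of dimension $n-s \ge 1$, hence contains at least one line through the origin, and in particular spans a linear subspace $V_j := \Span(\X_j) \subseteq \Ce^n$ of dimension $\ge 1$. The hyperplane $H_\ell$ contains $\X_j$ if and only if $\ell$ is orthogonal to $V_j$, i.e. $\ell \in V_j^\perp$, which is a proper linear subspace of $\Ce^n$ (proper because $\dim V_j \ge 1$). Therefore the set of bad $\ell$ is contained in $\bigcup_{j=1}^r V_j^\perp$, a finite union of proper linear subspaces of $\Ce^n$, which is a proper algebraic subvariety and thus has measure zero. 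Hence a generic $\ell$ avoids all the $V_j^\perp$, and for such $\ell$ the hyperplane $H_\ell$ contains none of the components $\X_j$.

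The last step is to upgrade ``$H_\ell$ contains no component of $\X$'' to ``$f_1,\dots,f_s,\ell^\top x$ is a regular sequence.'' Here I would use that $\V(f_1,\dots,f_s,\ell^\top x) = \X \cap H_\ell = \bigcup_j (\X_j \cap H_\ell)$, and since $H_\ell$ contains no $\X_j$, each intersection $\X_j \cap H_\ell$ is a proper closed subset of the irreducible $\X_j$, hence has dimension $\le n-s-1$; combined with the general fact that intersecting an irreducible variety with a hypersurface drops the dimension by at most one, each $\X_j\cap H_\ell$ has dimension exactly $n-s-1$ (or is empty, which can only happen for all $j$ simultaneously if $n-s-1 < 0$, excluded here). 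Taking the max over $j$ gives $\dim \V(f_1,\dots,f_s,\ell^\top x) = n-s-1$, and Proposition \ref{prp:RegularSequences} then yields that $f_1,\dots,f_s,\ell^\top x$ is a regular sequence. The main obstacle I anticipate is the unmixedness claim — that every irreducible component of a complete intersection $\V(f_1,\dots,f_s)$ has the same dimension $n-s$ — since Proposition \ref{prp:RegularSequences} as stated only controls the dimension of the whole variety; I would handle this either by citing the standard fact that ideals generated by regular sequences are unmixed (\cite{Matsumura-2006,Eisenbud-2004}), or by noting that it suffices for the argument that \emph{some} component has dimension $n-s\ge 1$ and arranging the bad-set bound using only those top-dimensional components together with the general ``dimension drops by at most one'' principle applied to lower-dimensional components as well.
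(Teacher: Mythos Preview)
The paper does not prove Proposition~\ref{prp:GenericHyperplanes}; it is stated without proof in the expository Appendix~\ref{appendix:regular-sequences} as a standard fact from commutative algebra, alongside Proposition~\ref{prp:RegularSequences}. So there is no ``paper's own proof'' to compare against.

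Your argument is correct and is essentially the standard one. The unmixedness concern you flag at the end can be sidestepped entirely, so you need not cite \cite{Matsumura-2006,Eisenbud-2004} for it. For the upper bound $\dim(\X\cap H_\ell)\le n-s-1$, it suffices that $H_\ell$ contain no \emph{top-dimensional} component of $\X$: components of dimension $<n-s$ contribute at most $n-s-1$ regardless of whether $H_\ell$ contains them, so the bad set is still the finite union $\bigcup_j V_j^\perp$ taken only over top-dimensional $\X_j$, and this is a proper subvariety since at least one component has dimension $n-s\ge 1$. For the lower bound $\dim(\X\cap H_\ell)\ge n-s-1$, simply observe that $\V_{\Ce^n}(f_1,\dots,f_s,\ell^\top x)$ is nonempty (it contains the origin, as all generators are homogeneous of positive degree) and is cut out by $s+1$ equations in $\Ce^n$, so Krull's height theorem gives every component dimension $\ge n-(s+1)$. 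With both bounds in hand, Proposition~\ref{prp:RegularSequences} finishes the argument exactly as you wrote.
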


\setcounter{prp}{0}
\setcounter{ex}{0}
\setcounter{thm}{0}
\setcounter{dfn}{0}
\renewcommand{\theprp}{\Alph{section}\arabic{prp}}
\renewcommand{\theex}{\Alph{section}\arabic{ex}}
\renewcommand{\thelem}{\Alph{section}\arabic{lem}}
\renewcommand{\thethm}{\Alph{section}\arabic{thm}}
\renewcommand{\thedfn}{\Alph{section}\arabic{dfn}}

\section{Initial ideals} \label{appendix:initial-ideals}

The notion of the \emph{initial ideal} $\In_{<}(\I)$ of an ideal $\I \subset \Ce[x]$ with respect to a \emph{monomial order} $<$ is a central one in computational algebraic geometry \cite{Cox:2007}. A more advanced object that is needed for our analysis in this paper is the initial ideal $\In_{w}(\I)$ of $\I$ with respect to a \emph{weight-order} \cite{BrunsConca2003, HerzogHibi2011}, which we introduce next.

Let $w=[w_1,\dots,w_n]^\top$ be a vector of positive integers. To each variable $x_i$ of $\Ce[x]$ we assign the \emph{weight} $w_i$, and to each monomial $x^{\alpha}=x_1^{\alpha_1} \cdots x_n^{\alpha_n}$ the \emph{weighted degree} $d_{w}(x^{\alpha}):=w_1 \alpha_1 + \cdots + w_n \alpha_n$. A polynomial $f$ is called $w$-homogeneous, if all its monomials have the same weighted degree. 
\begin{ex} \label{ex:w-homogeneous}
Let $w = [1,2,3]^\top$ and let $f=x_1x_2 + x_3$. Then $f$ is not homogeneous in the usual sense (see Appendix \ref{subsubsection:Homogenization}), but it is $w$-homogeneous of degree $3$.
\end{ex}
Now let $f \in \Ce[x]$ be any polynomial. Then $f$ can be uniquely written as $f = f^{(d_1)}+f^{(d_2)}+\cdots+f^{(d_{s})}$, with $d_1>d_2>\cdots>d_{s}>0$, where each $f^{(d_i)}$ is a $w$-homogeneous polynomial of degree $d_i$. We define the \emph{initial form} of $f$ with respect to $w$ as $\In_{w}(f):=f^{(d_1)}$. Given an ideal $\I=(f_1,\dots,f_s)$, we define $\In_{w}(I)$ to be the ideal generated by all initial forms $\In_{w}(f)$ for all $f \in \I$. That is, $h \in \In_{w}(I)$ if and only if there exist polynomials $g_i \in \Ce[x], \, i \in [s]$, such that $h =  {\In_{w}}(g_1 f_1+\cdots+g_s f_s)$.
\begin{ex}
Let $f=x_1x_2 + x_3+x_1^2+x_2+x_1$ and $w$ as in Example \ref{ex:w-homogeneous}. Then $f=f^{(3)}+f^{(2)}+f^{(1)}$ with
\begin{align}
f^{(3)} = x_1x_2 + x_3, \, f^{(2)}=x_1^2+x_2, \, f^{(1)}= x_1.
\end{align} Moreover, $\In_{w}(f) = x_1x_2 + x_3$.
\end{ex} The initial ideal $\In_{w}(I)$ is certainly a significantly simpler object than the ideal $\I$ itself, since it retains only the initial information about $\I$, so to speak. What is remarkable though, is that many structural properties of $\I$ are inherited from those of $\In_{w}(I)$. For this paper, the most important relationship is that the varieties defined by these two ideals have the same dimension:

\begin{prp} \label{prp:InitialIdeal-Dimension}
Let $\I \subset \Ce[x]$ be an ideal, $w \in \mathbb{Z}^n_{>0}$ a weight, and $\In_{w}(\I) \subset \Ce[x]$ the initial ideal of $\I$ with respect to $w$. Then 
\begin{align}
\dim \V_{\Ce^n}(\I) = \dim \V_{\Ce^n}({\In\nolimits_{w}}(\I)).
\end{align}
\end{prp}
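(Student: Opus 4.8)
\textbf{Proof proposal for Proposition \ref{prp:InitialIdeal-Dimension}.}

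The plan is to reduce the statement to a \emph{flat degeneration} (a one-parameter family, à la Gröbner degeneration) linking $\I$ to $\In_{w}(\I)$, and then invoke the semicontinuity of fiber dimension. First, I would homogenize the weight vector: introduce an auxiliary variable $t$ and, for each $f \in \I$ with $w$-decomposition $f = f^{(d_1)} + \cdots + f^{(d_s)}$ ($d_1 > \cdots > d_s$), set $\tilde f(x,t) := f^{(d_1)} + t\, f^{(d_2)} + \cdots + t^{\,d_1 - d_s} f^{(d_s)}$, i.e.\ pad by powers of $t$ so that every monomial of $\tilde f$ acquires the same combined weight if we assign $t$ an appropriate weight. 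Let $\tilde\I \subset \Ce[x,t]$ be the ideal generated by all such $\tilde f$ for $f$ ranging over a generating set (in fact over a $w$-homogeneous Gröbner-type generating set, to be safe that the construction is well-behaved). The key algebraic fact I would establish or cite is that $\Ce[x,t]/\tilde\I$ is a \emph{free}, hence flat, $\Ce[t]$-module: the specialization $t = 1$ recovers $\Ce[x]/\I$ (the rescaling $x_i \mapsto \lambda^{w_i} x_i$ identifies the fiber over any $t = \lambda \neq 0$ with the fiber over $t=1$), and the specialization $t = 0$ recovers $\Ce[x]/\In_{w}(\I)$.

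Next I would pass to geometry. Let $\X := \V_{\Ce^{n+1}}(\tilde\I) \subset \Ce^n \times \Ce$ with the projection $\pi : \X \to \Ce$ to the $t$-axis. By the previous paragraph, the fiber $\pi^{-1}(\lambda)$ is isomorphic to $\V_{\Ce^n}(\I)$ for every $\lambda \neq 0$, and $\pi^{-1}(0) = \V_{\Ce^n}(\In_{w}(\I))$. Flatness of $\Ce[x,t]/\tilde\I$ over $\Ce[t]$ forces the fiber dimension to be locally constant — this is the standard theorem that a flat morphism of finite type between Noetherian schemes has equidimensional fibers (more precisely, $\dim \pi^{-1}(\lambda)$ is independent of $\lambda$ when $\X$ is flat over the one-dimensional regular base $\Ce$, since no component of $\X$ can be contained in a fiber). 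Hence $\dim \pi^{-1}(0) = \dim \pi^{-1}(1)$, which is exactly $\dim \V_{\Ce^n}(\In_{w}(\I)) = \dim \V_{\Ce^n}(\I)$, completing the argument. I would phrase this in elementary terms for the non-expert reader: the generic fiber and the special fiber of a flat family over a curve have the same dimension, and the construction above exhibits $\V(\In_w(\I))$ as the special fiber of a flat family whose other fibers are all copies of $\V(\I)$.

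The main obstacle — and the step I would spend the most care on — is verifying that the degeneration is genuinely \emph{flat}, equivalently that $\Ce[x,t]/\tilde\I$ is $t$-torsion-free (no element killed by a power of $t$). Without a $w$-homogeneous Gröbner basis of $\I$ in hand, the homogenized ideal $\tilde\I$ generated naively from an arbitrary generating set can be strictly smaller than the ``correct'' one and the family can fail to be flat (its special fiber could then have larger dimension than expected); the standard fix is to generate $\tilde\I$ from a set whose $w$-initial forms generate $\In_w(\I)$, which always exists by Noetherianity. A secondary technical point is the base-change identification of the fiber over $t=0$ with $\V(\In_w(\I))$, i.e.\ that $\tilde\I + (t) = \In_w(\I)\,\Ce[x,t] + (t)$ as ideals; this follows once flatness is known, since then $\tilde\I$ is exactly the set of $\tilde f$ for $f \in \I$ and reducing mod $t$ kills precisely the lower-weight tails. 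Everything else — the rescaling isomorphism between nonzero fibers, and the constancy of fiber dimension in a flat family over a smooth curve — is standard and I would simply cite \cite{Hartshorne-1977, Eisenbud-2004} or the references on weight orders \cite{BrunsConca2003, HerzogHibi2011}.
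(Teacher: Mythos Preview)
The paper does not actually prove this proposition: it is stated in Appendix~\ref{appendix:initial-ideals} as a background fact, with references to \cite{BrunsConca2003,HerzogHibi2011} for the theory of weight orders and initial ideals. So there is no ``paper's own proof'' to compare against.

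Your proposal is essentially the standard proof of this result --- the Gr\"obner/weight degeneration to a flat one-parameter family over $\Ce[t]$, followed by constancy of fiber dimension --- and is precisely the argument one finds in the cited references (see in particular \cite[Ch.~15]{Eisenbud-2004} or \cite{BrunsConca2003}). The structure is sound and you have correctly flagged the one genuine subtlety: flatness (equivalently, $t$-torsion-freeness of $\Ce[x,t]/\tilde\I$) requires that $\tilde\I$ be built from generators whose $w$-initial forms already generate $\In_w(\I)$; otherwise the special fiber can be too large. One minor slip: in your formula for $\tilde f$ the exponent on the second term should be $t^{\,d_1-d_2}$, not $t$; more uniformly, $\tilde f = \sum_i t^{\,d_1-d_i} f^{(d_i)}$, or equivalently $\tilde f(x,t) = t^{d_1} f(t^{-w_1}x_1,\dots,t^{-w_n}x_n)$. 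With that correction the construction is exactly the standard one.
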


\noindent  Hence to compute the dimension of the algebraic variety defined by an ideal $\I=(f_1,\dots,f_s)$, we may instead use the simpler object $\In_{w}(\I)$. But how can we efficiently compute a set of generators for $\In_{w}(\I)$ given $f_1,\dots,f_s$? Note here that $(\In_{w}(f_1),\dots,\In_{w}(f_s)) \subset \In_{w}(\I)$ but equality does not hold in general, as the next example shows. 
\begin{ex}
Let $w = [1,2,3]^\top$ and $\I=(f_1,f_2) \subset \Ce[x]$ with
\begin{align}
f_1 = x_1^9 + x_3 + x_2, \, \, \, f_2 = x_1^9+x_2 x_3.
\end{align} Then $\In_{w}(f_1) = \In_{w}(f_2) = x_1^9$ and so $(\In_{w}(f_1),\In_{w}(f_2)) =(x_1^9)= \Big\{x_1^9 g: \, \forall g \in \Ce[x]\Big\}$. On the other hand $f_2-f_1 = x_2 x_3 - x_3 -x_2 \in \I$ and so $\In_{w}(f_2-f_1) = x_2x_3 \in \In_{w}(\I)$. But clearly, $x_2 x_3 \not\in (x_1^9)$. Hence $(\In_{w}(f_1),\In_{w}(f_2)) \subsetneq \In_{w}(\I)$. 
\end{ex}

\noindent The next fact, well known to experts in \emph{Gr\"obner basis} theory and otherwise much less known, describes a situation where we can directly get the generators of $\In_{w}(\I)$ from the generators of $\I$.

\begin{prp} \label{prp:InitialRegular}
Let $w \in \mathbb{Z}^n_{>0}$ be a weight, and $f_1,\dots,f_s$ be polynomials of $\Ce[x]$, such that 
$\In_{w}(f_1),\dots,\In_{w}(f_s)$ is a regular sequence. Then $\In_{w}(\I) = (\In_{w}(f_1),\dots,\In_{w}(f_s))$.
\end{prp}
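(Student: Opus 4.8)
The plan is to establish the two inclusions separately. Write $g_i := \In_{w}(f_i)$ and $\phi_i := f_i - g_i$, so that every monomial of $\phi_i$ has weighted degree strictly less than $d_w(g_i)$. The inclusion $(g_1,\dots,g_s) \subseteq \In_{w}(\I)$ is immediate since each $g_i = \In_{w}(f_i)$ and $f_i \in \I$. For the reverse inclusion it suffices to fix an arbitrary nonzero $f \in \I$ and prove $\In_{w}(f) \in (g_1,\dots,g_s)$.

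Put $d := d_w(\In_{w}(f))$. Since $f \in (f_1,\dots,f_s)$ there is at least one expression $f = \sum_{i=1}^{s} h_i f_i$; to each such expression attach its \emph{virtual degree} $D := \max_i\big(d_w(\In_{w}(h_i)) + d_w(g_i)\big)$. Comparing weighted degrees term by term gives $D \ge d$ always, and the $w$-homogeneous component of weighted degree $D$ of $\sum_i h_i f_i$ is precisely $\sum_{i\in I'} \tilde h_i\, g_i$, where $\tilde h_i$ denotes the $w$-homogeneous component of $h_i$ of degree $D - d_w(g_i)$ and $I' := \{\, i : d_w(\In_{w}(h_i)) + d_w(g_i) = D \,\}$. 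I will prove by (strong) induction on $D$ that every expression for $f$ of virtual degree $D$ forces $\In_{w}(f)\in(g_1,\dots,g_s)$. If $D = d$, the degree-$D$ component above equals $\In_{w}(f)$ (as $d$ is the top weighted degree of $f$), so $\In_{w}(f) = \sum_{i\in I'}\tilde h_i g_i \in (g_1,\dots,g_s)$ and we are done.

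Suppose $D > d$. Then the degree-$D$ component of $f$ vanishes, i.e. $\sum_i \tilde h_i g_i = 0$ is a $w$-homogeneous syzygy of $g_1,\dots,g_s$. This is the point where the hypothesis is used: since $g_1,\dots,g_s$ is a regular sequence, the Koszul complex on $g_1,\dots,g_s$ is exact in homological degree one, so the module of syzygies $\{(a_1,\dots,a_s): \sum_i a_i g_i = 0\}$ is generated by the Koszul relations --- the tuples having $g_k$ in position $j$, $-g_j$ in position $k$, and $0$ elsewhere, for $1\le j<k\le s$. As the $g_i$ and these relations are $w$-homogeneous, our $w$-homogeneous syzygy is a $w$-homogeneous combination of them, hence $\tilde h_i = \sum_{k>i} b_{ik} g_k - \sum_{j<i} b_{ji} g_j$ with each $b_{jk}$ $w$-homogeneous of weighted degree $D - d_w(g_j) - d_w(g_k)$. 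Now replace $g_k$ by $f_k - \phi_k$ inside $\sum_i \tilde h_i f_i$: the part involving only the $f_k$ vanishes, since the two double sums it comprises coincide after an index swap, and what remains is $\sum_i \tilde h_i f_i = \sum_i c_i f_i$ where each $c_i$ is a sum of products of a $b_{jk}$ with a $\phi_{\bullet}$ and therefore has all monomials of weighted degree $< D - d_w(g_i)$. Substituting this into $f = \sum_i \tilde h_i f_i + \sum_i (h_i - \tilde h_i) f_i$, and noting that maximality of $D$ forces every monomial of $h_i - \tilde h_i$ to have weighted degree $< D - d_w(g_i)$ as well, we obtain a new expression $f = \sum_i \big(c_i + (h_i - \tilde h_i)\big) f_i$ of virtual degree strictly less than $D$; the induction hypothesis finishes the argument.

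The step I expect to be the main obstacle is this last one: one must correctly invoke the graded form of the statement that the first syzygies of a regular sequence are Koszul (equivalently, exactness of the Koszul complex in homological degree one), and then carry out the degree bookkeeping through the substitution $g_k = f_k - \phi_k$ --- verifying both the cancellation of the $f$-only terms and that the virtual degree genuinely decreases. The remaining ingredients (the two-inclusion framework, the induction setup, and the weighted-degree comparisons) are routine.
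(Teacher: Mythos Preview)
The paper does not prove this proposition; it is stated in Appendix~E as a fact ``well known to experts in Gr\"obner basis theory'' with references to \cite{BrunsConca2003,HerzogHibi2011}, so there is no paper proof to compare against.

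Your argument is correct and is essentially the standard one. The two-inclusion setup and the induction on the virtual degree $D$ of a representation $f=\sum_i h_i f_i$ are the natural moves, and the base case $D=d$ is immediate. For the inductive step you correctly identify the top $w$-homogeneous piece $\sum_i \tilde h_i g_i$ as a $w$-homogeneous syzygy of the $g_i=\In_w(f_i)$; the key input is then that a regular sequence has vanishing first Koszul homology, so every syzygy is a combination of the Koszul relations, and $w$-homogeneity lets you take the coefficients $b_{jk}$ to be $w$-homogeneous of weighted degree $D-d_w(g_j)-d_w(g_k)$. Your substitution $g_k=f_k-\phi_k$ and the cancellation of the $f$-only double sum are carried out correctly, and the resulting bound $d_w(c_i)<D-d_w(g_i)$ together with the same bound for $h_i-\tilde h_i$ give a representation of strictly smaller virtual degree. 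Two minor cosmetic points worth tightening in a final write-up: (i) make explicit that $\tilde h_i:=0$ for $i\notin I'$ so the syzygy and the subsequent algebra run uniformly over all $i$; (ii) when invoking the Koszul description of syzygies, state once that $H_1$ of the Koszul complex on a regular sequence vanishes (this is in the references the paper already cites, e.g.\ \cite{Matsumura-2006,Eisenbud-2004}), and that the $w$-graded structure lets you extract $w$-homogeneous $b_{jk}$. Neither point is a gap.
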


\bibliographystyle{IEEEtran}



\end{document}